\def\ie{\emph{i.e.}}
\def\1{\bm{1}}
\DeclareMathAlphabet{\mathsfit}{\encodingdefault}{\sfdefault}{m}{sl}
\SetMathAlphabet{\mathsfit}{bold}{\encodingdefault}{\sfdefault}{bx}{n}
\def\sR{{\mathbb{R}}}
\newcommand{\R}{\mathbb{R}}
\DeclareMathOperator*{\argmax}{arg\,max}
\newcommand{\norm}[3][2]{\Vert #3 \Vert^{#1}_{#2}}
\newcommand{\od}[2]{#1 \odot #2}
\newcommand{\ip}[2]{\langle #1 , #2 \rangle}
\newcommand{\EE}[1][]{ \mathbb{E}_{#1}}
\newcommand{\defaultcolor}{\color{black}}
\newcommand{\pgreen}[1]{\color{green} #1 \defaultcolor}
\def\plemmalabelflag{0}
\newcommand{\plabel}[2][\plemmalabelflag]{
    \ifnum #1=1  
        \pgreen{#2}
    \else
    \fi
}
\newtheorem{theorem}{Theorem}
\newtheorem{lemma}{Lemma}
\newtheorem{assumption}{Assumption}
\newtheorem{remark}{Remark}
\newtheorem{corollary}{Corollary}
\begin{document}

\title{AdaSAM: Boosting Sharpness-Aware Minimization with Adaptive Learning Rate and Momentum for Training Deep Neural Networks}
%
%
%
%

\author{Hao~Sun,
        Li~Shen,
        Qihuang Zhong,
        Liang Ding, 
        Shixiang~Chen\\
        Jingwei~Sun,
        Jing Li,
        Guangzhong Sun, 
        and Dacheng Tao~\IEEEmembership{Fellow, IEEE}

\IEEEcompsocitemizethanks{
\IEEEcompsocthanksitem Hao Sun, Jingwei Sun, Jing Li and Guangzhong Sun are with School of Computer Science and Technology, University of Science and Technology of China, Hefei, China, 230000.
(E-mail: ustcsh@mail.ustc.edu.cn, sunjw@ustc.edu.cn, lj@ustc.edu.cn, gzsun@ustc.edu.cn.)
\IEEEcompsocthanksitem  Qihuang Zhong is with the School of
Computer Science, Wuhan University, Hubei, 430000.
(E-mail: zhongqihuang@whu.edu.cn)
\IEEEcompsocthanksitem Li Shen, Liang Ding, Shixiang Chen, and Dacheng Tao are with JD Explore Academy, Beijing, 100000.
(E-mail: mathshenli@gmail.com, liangding.liam@gmail.com, chenshxiang@gmail.com,  dacheng.tao@gmail.com.)
}
}

%
%

\markboth{Journal of \LaTeX\ Class Files,~Vol.~14, No.~8, August~2015}%
{Shell \MakeLowercase{\textit{et al.}}: Bare Demo of IEEEtran.cls for Computer Society Journals}
%




\IEEEtitleabstractindextext{%
\begin{abstract}
Sharpness aware minimization (SAM) optimizer has been extensively explored as it can generalize better for training deep neural networks via introducing extra perturbation steps to flatten the landscape of deep learning models.  Integrating SAM with adaptive learning rate and momentum acceleration, dubbed AdaSAM, has already been explored empirically to train large-scale deep neural networks without theoretical guarantee due to the triple difficulties in analyzing the coupled perturbation step, adaptive learning rate and momentum step. In this paper, we try to analyze the convergence rate of AdaSAM in the stochastic non-convex setting. We theoretically show that AdaSAM admits a $\mathcal{O}(1/\sqrt{bT})$ convergence rate, which achieves linear speedup property with respect to mini-batch size $b$. Specifically, to decouple the stochastic gradient steps with the adaptive learning rate and perturbed gradient, we introduce the delayed second-order momentum term to decompose them to make them independent while taking an expectation during the analysis. Then we bound them by showing the adaptive learning rate has a limited range, which makes our analysis feasible. To the best of our knowledge, we are the first to provide the non-trivial convergence rate of SAM with an adaptive learning rate and momentum acceleration. At last, we conduct several experiments on several NLP tasks, which show that AdaSAM could achieve superior performance compared with SGD, AMSGrad, and SAM optimizers. 
\end{abstract}

\begin{IEEEkeywords}
Sharpness-aware minimization, Adaptive learning rate, Non-convex optimization, linear speedup.
\end{IEEEkeywords}}

\maketitle

\IEEEdisplaynontitleabstractindextext

%
\IEEEpeerreviewmaketitle

\section{Introduction}

\IEEEPARstart{S}{harpness-aware} minimization (SAM) \cite{foret2020sharpness} is a powerful optimizer for training large-scale deep learning models by explicitly minimizing the gap between the training performance and generalization performance. It has achieved remarkable results in training various deep neural networks, including ResNet \cite{he2016deep,foret2020sharpness,mimake}, vision transformer \cite{dosovitskiy2021an,chen2022vision}, language models \cite{devlin2018bert,he2020deberta,zhong2022improving}, on extensive benchmarks.   

However, SAM-type methods suffer from several issues during training the deep neural networks, especially for huge computation costs and heavily hyper-parameter tuning procedure. In each iteration, SAM needs double gradients computation compared with classic optimizers, like SGD, Adam \cite{DBLP:journals/corr/KingmaB14}, AMSGrad \cite{j.2018on}, due to the extra perturbation step. Hence, SAM requires to forward and back propagate twice for one parameter update, resulting in one more computation cost than the classic optimizers.
Moreover, as there are two steps during the training process, it needs double hyper-parameters, which makes the learning rate tuning unbearable and costly.

Adaptive learning rate optimization methods \cite{DBLP:journals/tcyb/Iiduka22} scale the gradients based on the history gradient information to accelerate the convergence by tuning the learning rate automatically. These methods, such as Adagrad \cite{duchi2011adaptive}, Adam \cite{DBLP:journals/corr/KingmaB14}, and AMSGrad \cite{j.2018on}, have been proposed for solving the computer vision, natural language process, and generative neural networks tasks \cite{DBLP:journals/tcyb/Iiduka22,ruder2016overview,liao2022local,zhang2020adaptive}.
Recently, several works have tried to ease the learning rate tuning in SAM by inheriting the triplet advantages of SAM, adaptive learning rate, and momentum acceleration. For example, \cite{zhuang2022surrogate} and \cite{kwon2021asam} train ViT models and NLP models with adaptive learning rates and momentum acceleration, respectively. Although remarkable performance has been achieved, their convergences are still unknown since the adaptive learning rate and momentum acceleration are used in SAM. Directly analyzing its convergence is complicated and difficult due to the three coupled steps of optimization, i.e.,  the adaptive learning rate estimation is coupled with the momentum step and perturbation step of SAM.

In this paper, we analyze the convergence rate of SAM with an adaptive learning rate and momentum acceleration, dubbed AdaSAM, in the non-convex stochastic setting. To circumvent the difficulty in the analysis, we develop a novel technique to decouple the three-step training of SAM from the adaptive learning rate and momentum step. The analysis procedure is mainly divided into three parts. 
The first part is to analyze the procedure of the SAM. Then we analyze the second step that adopts the adaptive learning rate method.
We introduce a second-order momentum term from the previous iteration, which is related to the adaptive learning rate and independent of SAM while taking an expectation.
Then we can bound the term composed by the SAM and the previous second-order momentum due to the limited adaptive learning rate.
In the last part, we analysis the momentum acceleration that is combined with the SAM and the adaptive learning rate.  The momentum acceleration lead to an extra term in convergence analysis. Here, we introduce an auxiliary sequence to absorb it and show that their summation over the all iterations is controllable.
We prove that AdaSAM enjoys the property of linear speedup property with respect to the batch size, i.e. $\mathcal{O}(1/\sqrt{bT})$ where $b$ is the mini-batch size. 
Empirically, we apply AdaSAM to train RoBERTa model on the GLUE benchmark to evaluate our theoretical findings.  We show that AdaSAM achieves the best performance in experiments, where it wins 6 tasks of 8 tasks, and the linear speedup can be clearly observed.

In the end, we summarize  our contributions as follows:
\begin{itemize}
\item We present the first convergence guarantee of the adaptive SAM method with momentum acceleration under the stochastic non-convex setting. Our results suggest that a large mini-batch can help convergence due to the established linear speedup with respect to batch size.
 
\item We conduct a series of experiments on various tasks. The results show that AdaSAM outperforms most of the state-of-art optimizers and the linear speedup is verified.
\end{itemize}

\section{Preliminary and Related Work}

In this section, we first describe the basic problem setup and then introduce several related works on the SAM, adaptive learning rate and momentum steps. 

\subsection{Problem Setup}
In this work, we focus on stochastic nonconvex optimization 
\begin{equation}\label{problem:opt}
\min_{x \in \sR^{d}} f(x) := \EE[\xi \sim D]{f_{\xi}(x)} ,
\end{equation}
where $d$ is dimension of variable $x$, $D$ is the unknown distribution of the data samples, $f_{\xi}(x)$ is a smooth and possibly non-convex function, and $f_{\xi_i}(x)$ denotes the objective function at the sampled data point $\xi_{i}$ according to data distribution $D$. In machine learning, it covers empirical risk minimization as a special case and $f$ is the loss function when the dataset $D$ cover $N$ data points, i.e., $D = \{\xi_{i}, i = 1,2,\ldots, N\}$. Problem \eqref{problem:opt} reduces to the following finite-sum problem:
\begin{equation}\label{problem:erm}
\min_{x \in \sR^{d}} f(x) := \frac{1}{N}\sum_{i}{f_{\xi_{i}}(x)}.
\end{equation}

\paragraph{Notations.}\ Without additional declaration,  we represent $f_{i}(x)$ as $f_{\xi_{i}}(x)$ for simplification, 
which is the $i$-th loss function while $x \in \sR^{d}$ is the model parameter and $d$ is the parameter dimension.
We denote the $l_2$ norm  as $\norm[]{2}{\cdot}$.
A Hadamard product is denoted as $a \odot b$ where $a$,$b$ are two vectors. For a vector $a \in \sR^{d}$, $\sqrt{a}$ is denoted as a vector that the $j$-th value, $(\sqrt{a})_{(j)}$, is equal to the square  root of $a_j$.

\subsection{Related Work}

\paragraph{Sharpness-aware minimization}
Many works try to improve the generalization ability during training the deep learning model. Some methods such as dropout\cite{DBLP:journals/jmlr/SrivastavaHKSS14}, weight decay \cite{loshchilov2017decoupled}, and regularization methods \cite{DBLP:journals/tcyb/LiZGYX22,DBLP:journals/tcyb/LuZLZLZ22} provide an explicit way to improve generalization. Previous work shows that sharp minima may lead to poor generalization whereas flat minima perform better\cite{Jiang2020Fantastic,DBLP:conf/iclr/KeskarMNST17,he2019asymmetric}. Therefore, it is popular to consider sharpness to be closely related to the generalization.  
Sharpness-aware minimization (SAM) \cite{foret2020sharpness}  targets to find flat minimizers explicitly by minimizing the training loss uniformly in the entire neighborhood. Specifically, SAM aims to solve the following minimax saddle point problem:
\begin{equation}\label{eq:sam:quadratic}
    \min_{x} \max_{\norm[]{}{\delta} \leq \rho} f(x+\delta) + \lambda \norm[2]{2}{x},
\end{equation}
where $\rho\geq 0$ and $\lambda\geq 0$ are two hyperparameters. That is, the perturbed loss function of $f(x)$ in a neighborhood is minimized instead of the original loss function $f(x)$.  By using Taylor expansion of $ f(x+\delta)$ with respect to $\delta$, the inner max problem is approximately solved via  
\begin{align*}
\delta^{*}(x) &= \argmax_{\norm[]{}{\delta} \leq \rho} f(x+\delta ) \\
&\approx \argmax_{\norm[]{}{\delta}\leq \rho} f(x) + \delta^\top \nabla f(x) \\
&= \argmax_{\norm[]{}{\delta} \leq \rho} \delta^\top \nabla f(x) 
= \rho \frac{\nabla f(x)}{\norm[]{}{\nabla f(x)}}.
\end{align*}
By dropping the quadratic term,  \eqref{eq:sam:quadratic} is simplified  as the  following minimization problem 
\begin{equation}\label{eq:sam_surrogate}
    \min_{x} f\left(x+ \rho \frac{\nabla f(x)}{\norm[]{}{\nabla f(x)}}\right). 
\end{equation}
The stochastic gradient of $f\left(x+ \rho \frac{\nabla f(x)}{\norm[]{}{\nabla f(x)}}\right)$ on a batch data $b$ includes the Hessian-vector product, SAM further approximates the gradient by 
\[ \nabla_x f_b\left(x+ \rho \frac{\nabla f_b(x)}{\norm[]{}{\nabla f_b(x)}}\right) \approx \nabla_x f_b(x)|_{x+ \rho \frac{\nabla f_b(x)}{\norm[]{}{\nabla f_b(x)}}}. \]
Then, along the negative direction $-\nabla_x f_b(x)|_{x+ \rho \frac{\nabla f_b(x)}{\norm[]{}{\nabla f_b(x)}}}$, SGD is applied to solve the surrogate minimization problem \eqref{eq:sam_surrogate}. It is easy to see that SAM requires twice gradient back-propagation, i.e., $\nabla f_b(x)$ and $\nabla_x f_b(x)|_{x+ \rho \frac{\nabla f_b(x)}{\norm[]{}{\nabla f_b(x)}}}$. Due to the existence of hyperparameter $\rho,$ one needs to carefully tune both $\rho$ and learning rate in SAM.  In practice, $\rho$ is predefined to control the radius of the neighborhood.

Recently, Several variants of SAM are proposed to improve its performance. For example, \cite{zhuang2022surrogate,kwon2021asam,zhong2022improving} have empirically incorporated adaptive learning rate with SAM and shown impressive generalization accuracy, while their convergence analysis has never been studied. 
ESAM \cite{du2021efficient} proposes an efficient method by sparsifying the gradients to alleviate the double computation cost of backpropagation. 
ASAM \cite{kwon2021asam} modifies SAM by adaptively scaling the neighborhood so that the sharpness is invariant to parameters re-scaling. 
GSAM \cite{zhuang2022surrogate} simultaneously minimizes the perturbed function and a new defined surrogate gap function to further improve the flatness of minimizers. Liu et al. \cite{liu2022towards} also study SAM in large-batch training scenario and periodically update the perturbed gradient. Recently, \cite{mimake,zhong2022improving} improve the efficiency of SAM by adopting the sparse gradient perturbation technique. \cite{qu2022generalized,sunfedspeed} extend SAM to the federated learning setting setting with a significant performance gain.
On the other hand, there are some  works  analyzing the convergence of the SAM such as \cite{andriushchenko2022towards} without considering the normalization step, i.e., the normalization in $ \frac{\nabla f_b(x)}{\norm[]{}{\nabla f_b(x)}}$.

\paragraph{Adaptive optimizer} 
The adaptive optimizer can automatically adjust the learning rate based on the history gradients methods. The first adaptive method, Adagrad \cite{duchi2011adaptive}, can achieve a better result than other first-order methods under the convex setting. While training the deep neural network, Adagrad will decrease the learning rate rapidly with a degraded performance. Adadelta \cite{zeiler2012adadelta} is proposed to change this situation and introduces a learning rate based on the exponential average history gradients.
Adam \cite{DBLP:journals/corr/KingmaB14} additionally adds momentum step to stabilize the training process, and it shows great performance in many tasks. However, Reddi et al \cite{j.2018on} give a counterexample that it cannot converge even when the objective function is convex and propose an alternative method called AMSGrad with convergence guarantee. Then, many works \cite{zhou2018convergence,chen2018convergence,zaheer2018adaptive,ward2019adagrad,DBLP:journals/corr/abs-2003-02395,zou2019sufficient,chen2021towards,chen2021quantized,chen2022towards,chen2022efficient,zou2018weighted,iiduka2021appropriate,sun2019survey,sakai2021riemannian} have been proposed to study the convergence of adaptive methods and their variants in the nonconvex setting. However, their analysis techniques can not directly extend to establish the convergence of SAM with adaptive learning rate due to the  coupled perturbation step and adaptive learning rate.

\paragraph{Momentum acceleration} 
Momentum methods such as Polyak's heavy ball method \cite{polyak1964some}, Nestrov's accelerated gradient descent method \cite{nesterov2003introductory} and accelerated projected method \cite{DBLP:journals/focm/ODonoghueC15} are used to optimize the parameters of the deep neural network.
In practice, they have been used to accelerated for federated learning tasks \cite{liu2020accelerating},  non-negative latent factor model \cite{luo2018fast} and recommender systems \cite{shang2021alpha}. There are many theoretical works \cite{yang2016unified,DBLP:conf/nips/MannelliU21,gao2022global} that focus on analyzing the momentum acceleration for optimizing non-convex problem.  \cite{sutskever2013importance} shows that it is important for tuning momentum while training deep neural network. \cite{DBLP:conf/icml/CanGZ19} first points out linear convergence results for stochastic momentum method. \cite{DBLP:journals/jmlr/HuangGPH22} proposes a class of accelerated zeroth-order and first-order momentum method to solve mini-optimization and minimax-optimization problem.  \cite{bollapragada2022nonlinear} extend the momentum method by introducing an RNA scheme and a constrained formulation RNA which has nonlinear updates.
\cite{o2015adaptive} propose a heuristic adaptive restart method and \cite{DBLP:journals/siamis/WangNSBBO22} propose a scheduled restart momentum accelerated SGD method named SRSGD which helps reduce the training time.  \cite{liu2022convergence} adds one momentum term on to the distributed gradient algorithm.

\section{Methodology}

In this section, we introduce SAM with adaptive learning rate and momentum acceleration, dubbed AdaSAM, to stabilize the training process of SAM and ease the learning rate tuning. Then, we present the convergence results of AdaSAM. At last, we give the proof sketch for the main theorem.  

\begin{algorithm}[h]
\caption{AdaSAM: SAM with adaptive learning rate and momentum acceleration}
\label{alg:SAM-adaptive-final}
\Indentp{0.05em}
\KwIn{Initial parameters $x_{0}$, $m_{-1}=0$, $\hat{v}_{-1}=\epsilon^{2} $(a small
positive scalar to avoid the denominator diminishing), base learning rate $ \gamma $, neighborhood size $\rho$ and momentum parameters $ \beta_{1} $, $\beta_{2}$.}
\KwOut{Optimized parameter $x_{T+1}$}
\Indentp{0.05em}
\For{iteration t $\in$ $\{0, 1, 2, ..., T-1 \}$}{
    Sample mini-batch $ B=\{ \xi_{t_{1}},\xi_{t_{2}},...,\xi_{t_{|B|}} \}$\;
    Compute gradient $s_t=\nabla_{x} f_{B}(x)|_{x_{t}}=\frac{1}{b}\sum_{i\in B} \nabla f_{t_{i}}(x_{t})$\;
    Compute $\delta(x_{t})=\rho_{t} \frac{s_t}{\Vert s_t \Vert}$\;
    Compute SAM gradient $g_{t}=\nabla_{x} f_{B}(x)|_{x_{t} + \delta(x_{t})}$\;
    $m_{t} = \beta_1 m_{t-1} +(1- \beta_1 ) g_{t}$\;
    $v_{t} = \beta_2 v_{t-1} +(1- \beta_2) [g_{t}]^{2}$\;
    $\hat{v}_{t} = \max(\hat{v}_{t-1}, v_{t})$\;
    $\eta_{t}={1}{/}{\sqrt{\hat{v}_{t}}}$\;
    $x_{t+1} = x_{t} - \gamma m_{t} \odot \eta_{t}$\;
}
\end{algorithm}

\subsection{AdaSAM Algorithm}
AdaSAM for solving Problem \eqref{problem:opt} is described in Algorithm \ref{alg:SAM-adaptive-final}. In each iteration,  a mini-batch gradient estimation $g_{t}$ at point $x+\epsilon(x)$ with batchsize $b$ is computed, i.e., 
\[
g_{t} 
= \nabla_{x} f_{b}(x)|_{x_{t} + \epsilon(x_{t})} 
=\frac{1}{b}\sum_{i\in B} \nabla f_{\xi_{i}}(x_t+\delta(x_t)).
\]
Here, $\delta(x_{t})$ is the extra perturbed gradient step in SAM  that is given as  follows
\[
\delta(x_{t}) =\rho \frac{s_t}{\Vert s_t \Vert}, {\rm\ where}\ 
s_t=\nabla_{x} f_{b}(x)|_{x_{t}}=\frac{1}{b}\sum_{i\in B} \nabla f_{\xi_{i}}(x_{t}).
\]
Then, the momentum term of $g_t$ and the second-order moment term $[g_t]^2$ is accumulatively computed as $m_t$ and $v_t$, respectively.
AdaSAM then updates iterate along $-m_t$ with the adaptive learning rate $\gamma\eta_t$.

\begin{remark}
Below, we give several comments on AdaSAM: 
\begin{itemize}
    \item When $\beta_2 = 1$, the adaptive learning rate reduces to the diminishing one as SGD. Then, AdaSAM recovers the classic SAM optimizer. 
    \item If we drop out the 8-th line $\hat v_t = \max(\hat v_{t-1}, v_t),$ then our algorithm  becomes the variant of Adam. The counterexample that Adam does not converge in the \cite{j.2018on} also holds for the SAM variant, while AdaSAM can converge. 
    \end{itemize}
\end{remark}

\subsection{Convergence Analysis}
Before presenting the convergence results of the AdaSAM algorithm, we first introduce some necessary assumptions.

\begin{assumption}[\textbf{$L$-smooth}]\label{assumption-smooth}
$f_{i}$ and $f$ is differentiable with gradient Lipschitz property: 
$\norm[]{}{\nabla f_{i}(x) - \nabla f_{i}(y)} \leq L \norm[]{}{x-y}$,$\norm[]{}{\nabla f(x) - \nabla f(y)} \leq L \norm[]{}{x-y},
 \forall x,y \in \sR^{d}, i=1,2,...,N,$
which also implies the descent inequality, i.e., $f_{i}(y)\leq f_{i}(x) +\ip{\nabla f_{i}(x)}{y-x} + \frac{L}{2}\norm[2]{}{y-x}$.
\end{assumption}

\begin{assumption}[\textbf{Bounded variance}]\label{assumption-variance} 
The estimator of the gradient is unbiased and the variance of the stochastic gradient is bounded. i.e., 
$$\EE\nabla f_i(x) = \nabla f(x), \quad \EE \norm[2]{}{\nabla f_{i}(x) - \nabla f(x)} \leq \sigma^{2}.$$
When the mini-batch size $b$ is used, we have $\EE \norm[2]{}{\nabla f_{b}(x) - \nabla f(x)} \leq \frac{\sigma^{2}}{b}.$
\end{assumption}

\begin{assumption}[\textbf{Bounded stochastic gradients}]\label{assumption-gradient}
The stochastic gradient is uniformly bounded, \ie, $$\norm[]{\infty}{\nabla f_{i}(x)} \leq G, for\ any\ i=1,\ldots, N.$$
\end{assumption}
\begin{remark}
The above assumptions are commonly used in the proof of convergence for adaptive stochastic gradient methods such as \cite{cutkosky2019momentum,huang2021super,zhou2018convergence,chen2018convergence}.
\end{remark}

Below, we briefly explain the main idea of analyzing the convergence of the AdaSAM algorithm. 
First, we discuss the difficulty of applying the adaptive learning rate on SAM.
We notice that the main step which contains adaptive learning rate in convergence analysis is to estimate the expectation 
$\EE{[ x_{t+1} - x_{t}]} = -\EE \od{m_{t}}{\eta_{t}}=-\EE \od{(1-\beta_1)g_{t}}{\eta_{t}} -\EE \od{\beta_1 m_{t-1}}{\eta_{t}},$ 
which is conditioned on the filtration $\sigma(x_t)$. In this part, we consider the situation that $\beta_1 =0$ which does not include the momentum. Then, we apply delay technology to disentangle the dependence between $g_t$ and $\eta_t$, that is 
\begin{align*}
    \EE \od{g_{t}}{\eta_{t}} 
    &= \EE {[\od{g_{t}}{\eta_{t-1}}]} + \EE[]{ [ \od{g_{t}}{ (\eta_{t} - \eta_{t-1}) } ]}\\
    &= \od{\nabla f(x_t)}{\eta_{t-1}} + \EE[]{ [ \od{g_{t}}{ (\eta_{t} - \eta_{t-1}) } ]}.
\end{align*} 
The second term $\EE[]{ [ \od{g_{t}}{ (\eta_{t} - \eta_{t-1}) } ]}$ is dominated by the first term $\od{\nabla f(x_t)}{\eta_{t-1}} $. Then, it is not difficult to get the convergence result of the stochastic gradient descend with the adaptive learning rate such as AMSGrad.  However,  when we apply the same strategy to  AdaSAM, we find that $\EE \od{g_{t}}{\eta_{t-1}}$ cannot be handled similarly because $\EE g_t =\EE \nabla_x f_b\left(x+ \rho \frac{\nabla f_b(x)}{\norm[]{}{\nabla f_b(x)}}\right) \neq \nabla f(x_t)$. Inspired by \cite[Lemma 16]{andriushchenko2022towards}, our key observation is that 
\begin{align*}
    \EE  \nabla_x f_b\left(x+ \rho \frac{\nabla f_b(x)}{\norm[]{}{\nabla f_b(x)}}\right) 
    &\approx \EE \nabla_x f_b\left(x+ \rho \frac{\nabla f(x)}{\norm[]{}{\nabla f(x)}}\right) \\
    &= \nabla_x f\left(x+ \rho \frac{\nabla f(x)}{\norm[]{}{\nabla f(x)}}\right)
\end{align*} 
and we prove the other terms such as $ \EE \od{\left(\nabla_x f_b\left(x+ \rho \frac{\nabla f_b(x)}{\norm[]{}{\nabla f_b(x)}}\right) - \nabla_x f_b\left(x+ \rho \frac{\nabla f(x)}{\norm[]{}{\nabla f(x)}}\right)\right)}{\eta_{t-1}}$ have small values that do not dominate the convergence rate.

On the other hand, when we apply the momentum steps, we find that the term $\EE m_{t-1}\odot \eta_{t}$ cannot be ignored. By introducing an auxiliary sequence $z_t=x_t +\frac{\beta_1}{1-\beta_1}(x_{t}-x_{t-1})$, we have $\EE{[ z_{t+1} - z_{t}]}= -\EE{[\frac{\beta_1}{1-\beta_{1}}\gamma \od{m_{t-1}}{(\eta_{t-1}-\eta_{t})}-\gamma \od{g_t}{\eta_{t}}]}$.
The first term contains the momentum term which has a small value due to the difference of the adaptive learning rate $\eta_t$. Thus, it is diminishing without hurting the convergence rate.

\begin{theorem}\label{theorem}
Under the assumptions \ref{assumption-smooth},\ref{assumption-variance},\ref{assumption-gradient}, and $\gamma  $ is a fixed number satisfying $\gamma \leq \frac{\epsilon}{16L}$, for the sequence $\{x_t\}$ generated by Algorithm \ref{alg:SAM-adaptive-final}, we have the following convergence rate
\begin{align}\label{theorem-ineq}
\frac{1}{T}\sum_{t=0}^{T-1}\EE \norm[2]{2}{\nabla f (x_{t})} \!\leq\! \frac{2G(f(x_{0})\!-\!f^{*})}{\gamma  T} \!+\! \frac{8G \gamma L }{ \epsilon} \frac{\sigma^{2}}{b \epsilon} \!+\! \Phi
\end{align}
where 
\begin{align}
&\Phi = \frac{45G  L^2 \rho_{t}^2}{ \epsilon} + \frac{2G^{3}}{(1-\beta_1)T} 
d(\frac{1}{\epsilon}-\frac{1}{G}) + \frac{6 \gamma^2 L^2 \beta_{1}^2}{  (1-\beta_{1})^{2}}  \frac{d G^{3}}{\epsilon^{3}} \nonumber \\
&+ \frac{2(4+(\frac{\beta_1}{1-\beta_1})^2 )\gamma L G^3}{T}  d(\epsilon ^{-2}-G^{-2})+\frac{8G \gamma  L }{ \epsilon} \frac{L \rho_{t}^{2}}{\epsilon }, 
\end{align}
in which $T$ is the number of iteration, $f^{*}$ is the minimal value of the function $f$, $\gamma$ is the base learning rate, $b$ is the mini-batch size, d is the dimension of paramter $x$. $\beta_1$, $G$, $L$, $\epsilon$, $\sigma^{2}$, $d$ are fixed constants.
\end{theorem}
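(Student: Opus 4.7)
The plan is to apply the standard $L$-smoothness descent inequality not to $f(x_t)$ directly (which would fail because of the momentum term $m_{t-1}\odot \eta_t$ that I cannot kill), but to an auxiliary sequence $z_t := x_t + \tfrac{\beta_1}{1-\beta_1}(x_t - x_{t-1})$. A direct computation, combined with the update rule $x_{t+1}-x_t = -\gamma\, m_t\odot \eta_t$ and the momentum recursion $m_t = \beta_1 m_{t-1}+(1-\beta_1)g_t$, yields
\begin{equation*}
z_{t+1}-z_t \;=\; -\tfrac{\beta_1}{1-\beta_1}\gamma\, m_{t-1}\odot(\eta_{t-1}-\eta_t)\; -\; \gamma\, g_t\odot \eta_t.
\end{equation*}
I would plug this into the descent inequality for $f(z_{t+1}) \le f(z_t) + \langle \nabla f(z_t), z_{t+1}-z_t\rangle + \tfrac{L}{2}\|z_{t+1}-z_t\|^2$, with the intention of ultimately producing a telescoping bound on $\sum_t \|\nabla f(x_t)\|^2$.

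The main obstacle is the cross term $\mathbb{E}\langle \nabla f(z_t),\, g_t\odot \eta_t\rangle$, because $g_t$ (a SAM gradient) and $\eta_t$ (an adaptive preconditioner depending on $g_t$) are both random and statistically coupled, and moreover $\mathbb{E}[g_t]\neq \nabla f(x_t)$ due to the normalized SAM perturbation evaluated on a mini-batch. I would resolve this in two stages. First (delay trick), write $g_t\odot \eta_t = g_t\odot \eta_{t-1} + g_t\odot(\eta_t-\eta_{t-1})$. The first piece now factors in expectation because $\eta_{t-1}\in \sigma(x_t)$ is independent of the fresh mini-batch $B_t$; the second is a small remainder because $\eta_t-\eta_{t-1}$ is a bounded, slowly varying quantity and its total variation can be telescoped using Assumption \ref{assumption-gradient} together with the $\hat v_t=\max(\hat v_{t-1},v_t)$ safeguard, which gives $\sum_t\|\eta_{t-1}-\eta_t\|_1 \le d(\epsilon^{-1}-G^{-1})$. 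Second (SAM bias trick, in the spirit of Lemma 16 of \cite{andriushchenko2022towards}), I would replace $\mathbb{E}[g_t]=\mathbb{E}\nabla f_b(x_t+\rho s_t/\|s_t\|)$ by $\nabla f(x_t+\rho\,\nabla f(x_t)/\|\nabla f(x_t)\|)$, incurring an $O(L^2\rho^2)$ perturbation bias (using $L$-smoothness to compare the two perturbed arguments and the variance bound $\sigma^2/b$ to compare $s_t$ with $\nabla f(x_t)$), and then use $L$-smoothness once more to relate this to $\nabla f(x_t)$. The error terms this introduces are exactly the $\rho_t^2$ and $\sigma^2/b$ contributions that appear in the stated bound.

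Next I would handle the momentum cross term $\mathbb{E}\langle \nabla f(z_t),\, m_{t-1}\odot(\eta_{t-1}-\eta_t)\rangle$ and the quadratic remainder $\tfrac{L}{2}\|z_{t+1}-z_t\|^2$ bluntly: use Cauchy--Schwarz together with the uniform gradient bound $\|m_{t-1}\|_\infty\le G$ and $\|\eta_t\|_\infty \le 1/\epsilon$, then telescope $\sum_t \|\eta_{t-1}-\eta_t\|_1$ as above. This produces terms of the form $\tfrac{G^2}{T}\cdot d(\epsilon^{-1}-G^{-1})$ and $\tfrac{\gamma^2 L^2\beta_1^2}{(1-\beta_1)^2}\cdot \tfrac{dG^3}{\epsilon^3}$, matching the $\Phi$ block in the statement. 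Similarly, $\|z_t - x_t\| = \tfrac{\beta_1}{1-\beta_1}\|x_t-x_{t-1}\|$ is controlled by $\gamma\|m_{t-1}\odot\eta_{t-1}\|$, which lets me pass from $\nabla f(z_t)$ back to $\nabla f(x_t)$ paying only another $O(\gamma L)$ factor.

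Finally, I would sum the per-step inequality from $t=0$ to $T-1$, telescope $f(z_t)-f(z_{t+1})$, use $f(z_0)=f(x_0)$ and $f(z_T)\ge f^*$, and absorb the $\sum_t \langle\nabla f(x_t),\nabla f(x_t)\odot \eta_{t-1}\rangle$ term by observing that $\eta_{t-1}\ge 1/G$ componentwise (again from $\|g_\tau\|_\infty\le G$), so this lower bound delivers the $\tfrac{1}{G}\sum_t \|\nabla f(x_t)\|^2$ that drives the left-hand side. Dividing by $T$ and choosing $\gamma\le \epsilon/(16L)$ (to guarantee the signed coefficient on $\|\nabla f(x_t)\|^2$ is at least half of the raw one, so higher-order self-bounding terms can be moved to the right) yields the claimed rate. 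The only step that genuinely requires care is the SAM bias control in stage one; everything else is bookkeeping on telescoping adaptive-learning-rate differences.
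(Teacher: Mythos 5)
Your proposal follows essentially the same route as the paper's proof: the auxiliary sequence $z_t=x_t+\tfrac{\beta_1}{1-\beta_1}(x_t-x_{t-1})$, the delay decomposition $g_t\odot\eta_t=g_t\odot\eta_{t-1}+g_t\odot(\eta_t-\eta_{t-1})$, the replacement of the mini-batch SAM perturbation direction by $\nabla f(x_t)/\|\nabla f(x_t)\|$ at an $O(L^2\rho^2)$ cost, the telescoping bounds $\sum_t\|\eta_{t-1}-\eta_t\|_1\le d(\epsilon^{-1}-G^{-1})$ and its squared analogue, and the final lower bound $\eta_{t-1}\ge 1/G$ with the step-size condition $\gamma\le\epsilon/(16L)$. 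The only cosmetic difference is where the $\sigma^2/b$ variance term enters (the paper's perturbation-direction comparison simply uses that two unit vectors differ by at most $2$ in norm, with the variance bound appearing instead in the quadratic remainder $\tfrac{L}{2}\|z_{t+1}-z_t\|^2$), which does not change the argument.
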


Theorem \ref{theorem} characterizes the convergence rate of the sequence $\{x_{t}\}$ generated by AdaSAM with respect to the stochastic gradient residual. The first two terms of the right hand side of Inequality \eqref{theorem-ineq} are the terms that dominate the convergence rate. Compared with the first two terms, $\Phi$ is a small value while we set neighborhood size $ \rho $ and learning rate $ \gamma $ as small values which are related to large iteration number $T$. Then, we obtain the following corollary directly.

\begin{corollary}[\textbf{Mini-batch linear speedup}] Under the same conditions of Theorem \ref{theorem}. Furthermore, when we choose the base learning rate $\gamma = O(\sqrt{\frac{b}{T}})$ and neighborhood size   $\rho = O(\sqrt{\frac{1}{bT}})$ , the following result holds:
\begin{align*}
\frac{1}{T}\sum_{t=0}^{T-1}\EE \norm[2]{2}{\nabla f (x_{t})} 
& =O\left(\frac{1}{\sqrt{bT}}\right) +O\left(\frac{1}{bT}\right) +O\left(\frac{1}{T}\right) \\
& +O\left(\frac{1}{b^{\frac{1}{2}} T^{\frac{3}{2}}}\right)+O\left(\frac{b^{\frac{1}{2}}}{T^{\frac{3}{2}}}\right)+O\left(\frac{b}{T}\right) .
\end{align*}
When $T$ is sufficiently large, we achieve the linear speedup convergence rate with respect to mini-batch size $b$, i.e., 
\begin{align}
\frac{1}{T}\sum_{t=0}^{T-1}\EE \norm[2]{2}{\nabla f (x_{t})}=O\left(\frac{1}{\sqrt{bT}}\right).
\end{align}
\end{corollary}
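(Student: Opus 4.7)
The plan is to derive the Corollary as a direct consequence of Theorem~\ref{theorem} by substituting the prescribed schedules $\gamma = \Theta(\sqrt{b/T})$ and $\rho = \Theta(\sqrt{1/(bT)})$ into the explicit bound \eqref{theorem-ineq} and then performing a term-by-term asymptotic analysis. First I would verify that these choices are admissible under Theorem~\ref{theorem}: the stated constraint $\gamma \le \epsilon/(16L)$ is satisfied automatically for $T$ sufficiently large, since $\sqrt{b/T}\to 0$ whenever $b$ grows no faster than $T$. This guarantees the theorem applies to the chosen step size.

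Next I would plug the schedules into each of the seven summands appearing on the right-hand side of \eqref{theorem-ineq} and record its order. Concretely, the leading optimization term $2G(f(x_0)-f^*)/(\gamma T)$ becomes $O(1/\sqrt{bT})$; the variance term $8G\gamma L \sigma^2/(b\epsilon^2)$ also becomes $O(\sqrt{b/T}/b)=O(1/\sqrt{bT})$, which is exactly the mechanism delivering the mini-batch speedup. The first term of $\Phi$, proportional to $\rho^2$, contributes $O(1/(bT))$; the $1/T$ term in $\Phi$ contributes $O(1/T)$; the $\gamma^2$ term contributes $O(b/T)$; the $\gamma/T$ term contributes $O(b^{1/2}/T^{3/2})$; and the $\gamma\rho^2$ term contributes $O(1/(b^{1/2}T^{3/2}))$. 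Summing yields precisely the six-term expression displayed in the corollary.

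Finally I would compare orders to extract the dominant rate. For fixed constants $\beta_1, G, L, \epsilon, \sigma^2, d$ and for $T$ large enough (in particular, any $b \le T$), the two $O(1/\sqrt{bT})$ contributions dominate all remaining terms, since $1/(bT)$, $1/T$, $b/T$, $b^{1/2}/T^{3/2}$ and $1/(b^{1/2}T^{3/2})$ are each $o(1/\sqrt{bT})$ in this regime. Collecting constants then gives the announced $O(1/\sqrt{bT})$ bound, which is a linear speedup in $b$.

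There is no real obstacle here since the corollary is a substitution-and-majorization exercise rather than a new estimate; the only mild subtlety is bookkeeping the admissibility condition on $\gamma$ and identifying the range of $b$ (essentially $b \le T$) for which $O(1/\sqrt{bT})$ genuinely dominates $O(b/T)$, so that the speedup claim is not vacuous. All heavy lifting—controlling the coupling between SAM perturbation, adaptive preconditioner, and momentum—has been absorbed into Theorem~\ref{theorem}, so the corollary needs nothing beyond the above calculation.
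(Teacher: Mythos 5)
Your proposal matches the paper's treatment exactly: the paper offers no separate proof and obtains the corollary ``directly'' by substituting $\gamma=O(\sqrt{b/T})$ and $\rho=O(\sqrt{1/(bT)})$ into the bound of Theorem~\ref{theorem}, and your term-by-term assignment of the seven summands to the six displayed orders is correct. One slip in your closing remark: the binding comparison is the $\gamma^{2}$ term, and $b/T = O(1/\sqrt{bT})$ requires $b^{3}\lesssim T$, i.e.\ $b=O(T^{1/3})$, not merely $b\le T$ (at $b=T$ the $O(b/T)$ term is $O(1)$ while $1/\sqrt{bT}=1/T$). This does not affect the corollary as stated, since the paper asserts the $O(1/\sqrt{bT})$ rate only for $T$ sufficiently large with the other quantities fixed, but your characterization of the admissible batch-size regime should be tightened accordingly.
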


\begin{remark}
Two comments are given about the above results:
\begin{itemize}
\item 
To reach a $O(\delta)$ stationary point, when the batch size is 1, it needs $T=O(\frac{1}{\delta^{2}})$ iterations. 
When the batch size is $b$, we need to run $T=O(\frac{1}{b \delta^2})$ steps.
The method with batch size  $b$ is $b$ times faster than batch size of 1, which means that it has the mini-batch linear speedup property.  
\item According to \cite{li2014communication,li2014efficient,chen2021towards}, AdaSAM can be extended to  distributed version and achieves linear speedup property with respect to the number of works in the Parameter-Sever setting. 
 \end{itemize}
\end{remark}

 
\begin{table*}[ht]
\centering
\caption{Evaluating SGD, SAM, AMSGrad and AdaSAM on the GLUE benchmark with $\beta_1 =0.9$}
\begin{tabular}{lcccccccccc}
\toprule
\multicolumn{1}{c}{}                        & \textbf{CoLA}               & \textbf{SST-2}              & \textbf{MRPC}                    & \textbf{STS-B}                          & \textbf{RTE}                & \textbf{MNLI}               & \textbf{QNLI}                      & \textbf{QQP} &                        \\
\multicolumn{1}{c}{\multirow{-2}{*}{\textbf{Model}}} & {mcc.} & {Acc.} & {Acc./F1}  & {Pcor./Scor.} & {Acc.} & {m./mm.} & {Acc.} & F1/ Acc.           & \multirow{-2}{*}{\textbf{Avg.}} \\ 
\midrule
SGD & 9.25 & 50.92 & 68.38/ 81.22 & 3.22/ 1.9 & 55.6 & 84.94/ 84.87 & 63.61 & 85.6/ 80.14 & 55.8                   \\
SAM($\rho=$0.01) & 4.64 & 95.87 & 70.58/ 81.98 & 84.74/ 85.57 & 52.71 & 90.5/ 90.19 & 94.44 & 84.7/ 87.88 & 76.98                 \\
SAM($\rho=$0.005) & 66.76 & 95.76 & 68.38/ 81.22 & 2/ 2 & 52.71 & 90.42/ 89.74 & 94.6 & 86.72/ 89.94 & 68.35                  \\
SAM(best) & 66.76 & 95.87 & 70.58/ 81.98 & 84.74/ 85.57 & 52.71 & 90.5/ 90.19 & 94.6 & 86.72/ 89.94 & 82.51                  \\
AMSGrad                                     & 68.0                        & 96.33                        & 90.2/ 92.72                   & 91.72/ 91.48                         & 87.73                        & 90.67/ 90.41                      & 94.82                       & 88.7/ 91.41           & 89.52                  \\
\midrule
AdaSAM($\rho=$0.01)                                  & 65.29                        & 96.33                        & 91.18/ 93.64                   & 90.13/ 90.36                         & 84.84                        & 90.97/ 90.42                      & 94.65                       & 88.55/ 91.23           & 88.97                          \\
AdaSAM($\rho=$0.005)                                 & 68.74                        & 96.67                        & 90.93/ 93.36                   & 91.64/ 91.38                         & 87.73                        & 90.88/ 90.4                      & 94.56                       & 88.69/ 91.33           & 89.69                 \\
AdaSAM($\rho=$0.001)                                 & 67.3                        & 96.1                        & 90.2/ 92.96                   & 91.9/ 91.62                         & 85.92                        & 90.45/ 90.4                      & 94.56                       & 88.64/ 91.27           & 89.28                  \\
AdaSAM(best)                                  & 68.74                        & 96.67                        & 91.18/ 93.64                   & 91.9/ 91.62                         & 87.73                        & 90.97/ 90.42                      & 94.65                       & 88.69/ 91.33           & 89.8        
\\
\bottomrule
\end{tabular}
\label{tab:addlabel-1}
\end{table*}

\begin{figure*}[h]
   \centering
    \begin{subfigure}{0.23\linewidth}
    \includegraphics[width=\linewidth]{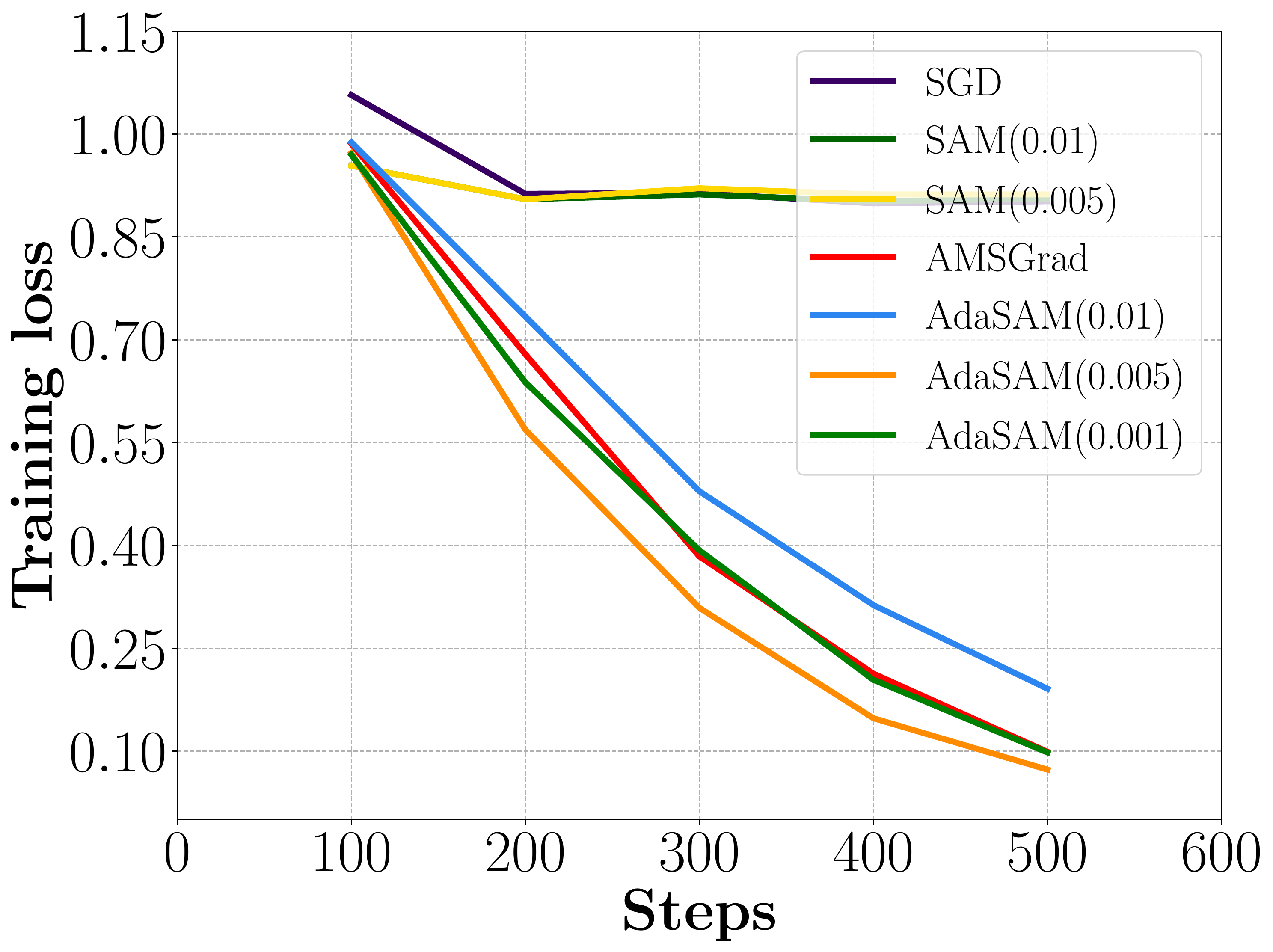}
    \caption{MRPC}
   \end{subfigure}\!\!
    \begin{subfigure}{0.23\linewidth}
    \includegraphics[width=\linewidth]{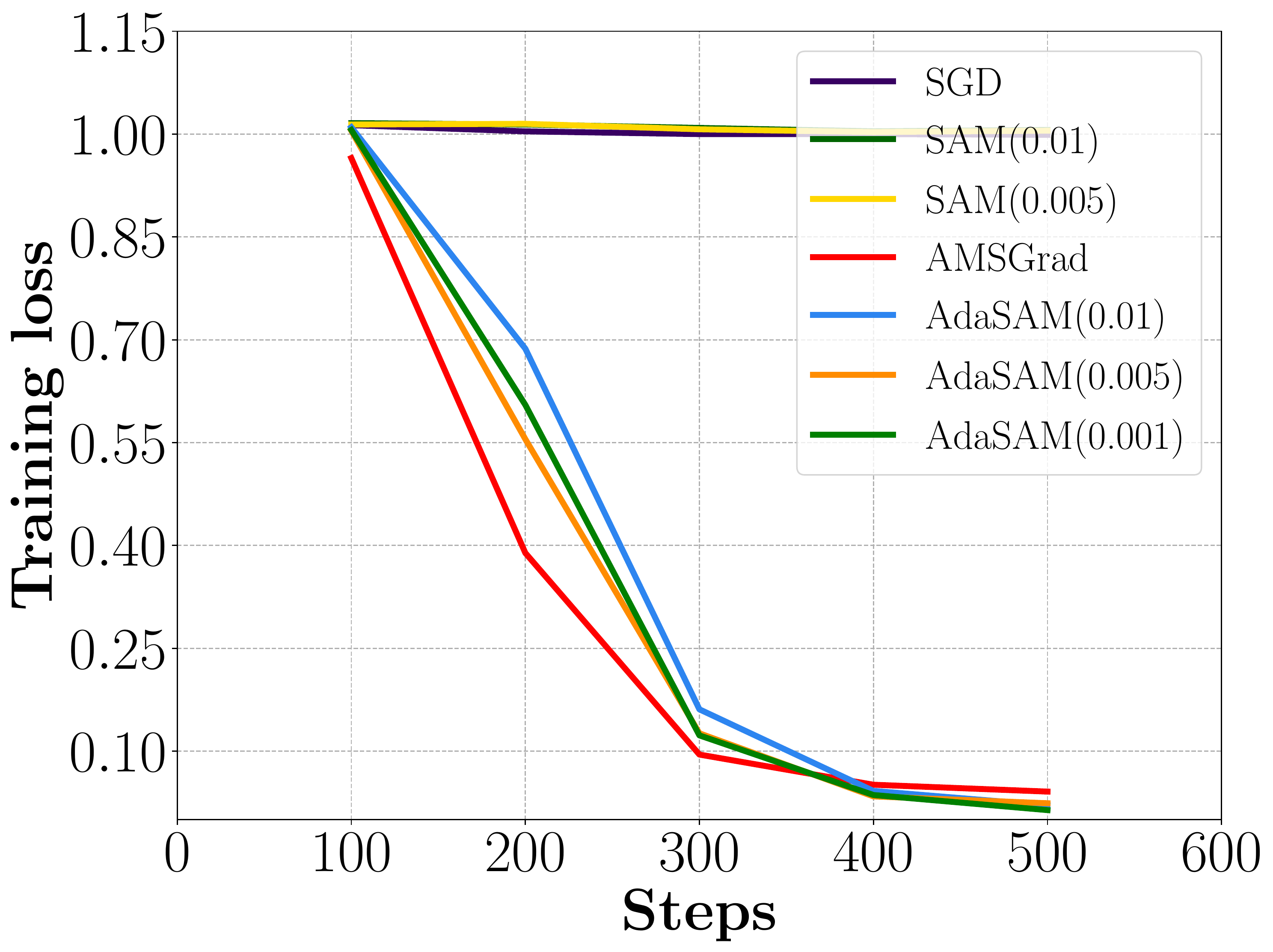}
    \caption{RTE}
   \end{subfigure}\!\!
    \begin{subfigure}{0.23\linewidth}
    \includegraphics[width=\linewidth]{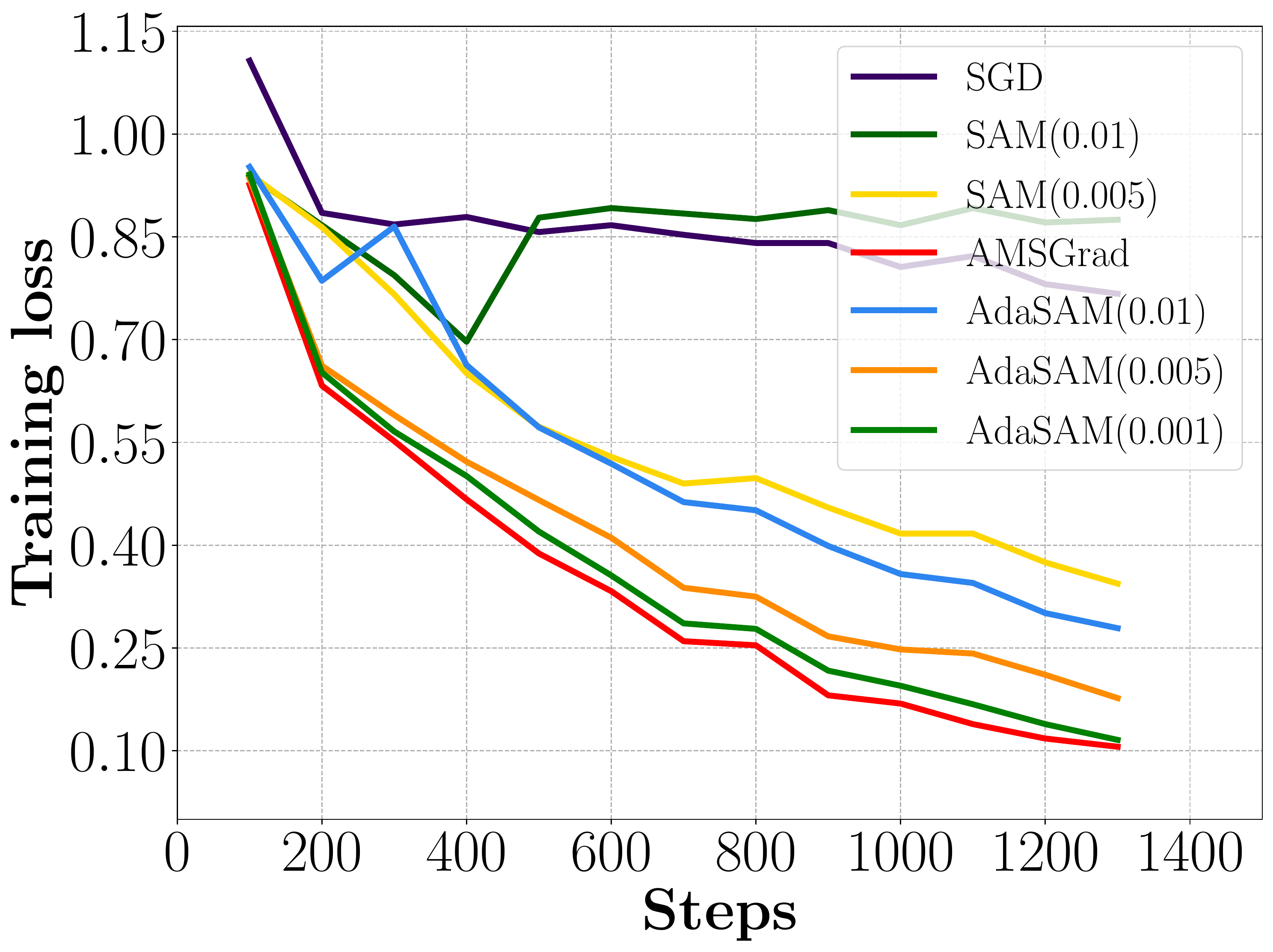}
    \caption{CoLA}
   \end{subfigure}
   \begin{subfigure}{0.23\linewidth}
    \includegraphics[width=\linewidth]{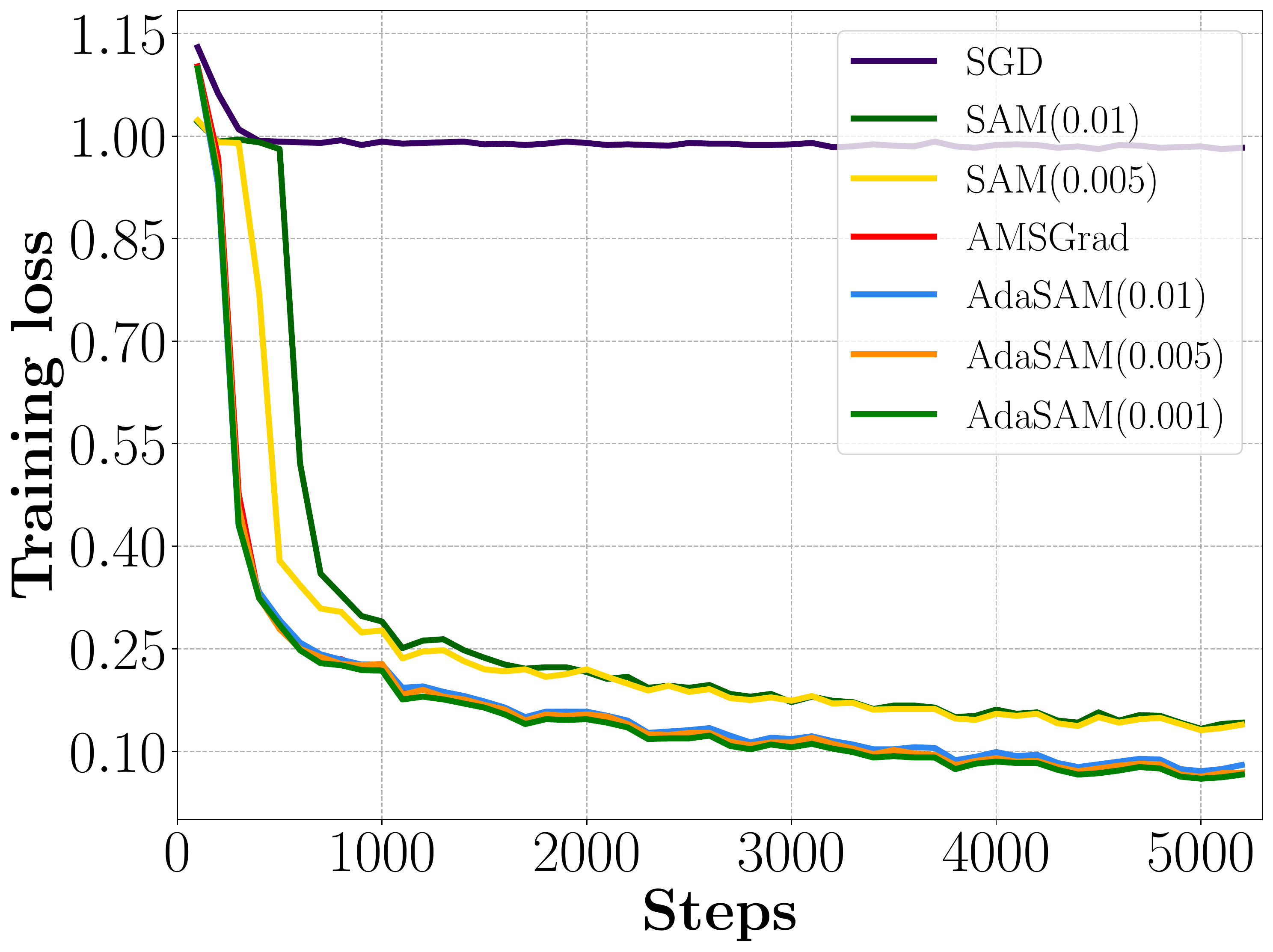}
    \caption{SST-2}
   \end{subfigure}\!\!
   
   \begin{subfigure}{0.23\linewidth}
    \includegraphics[width=\linewidth]{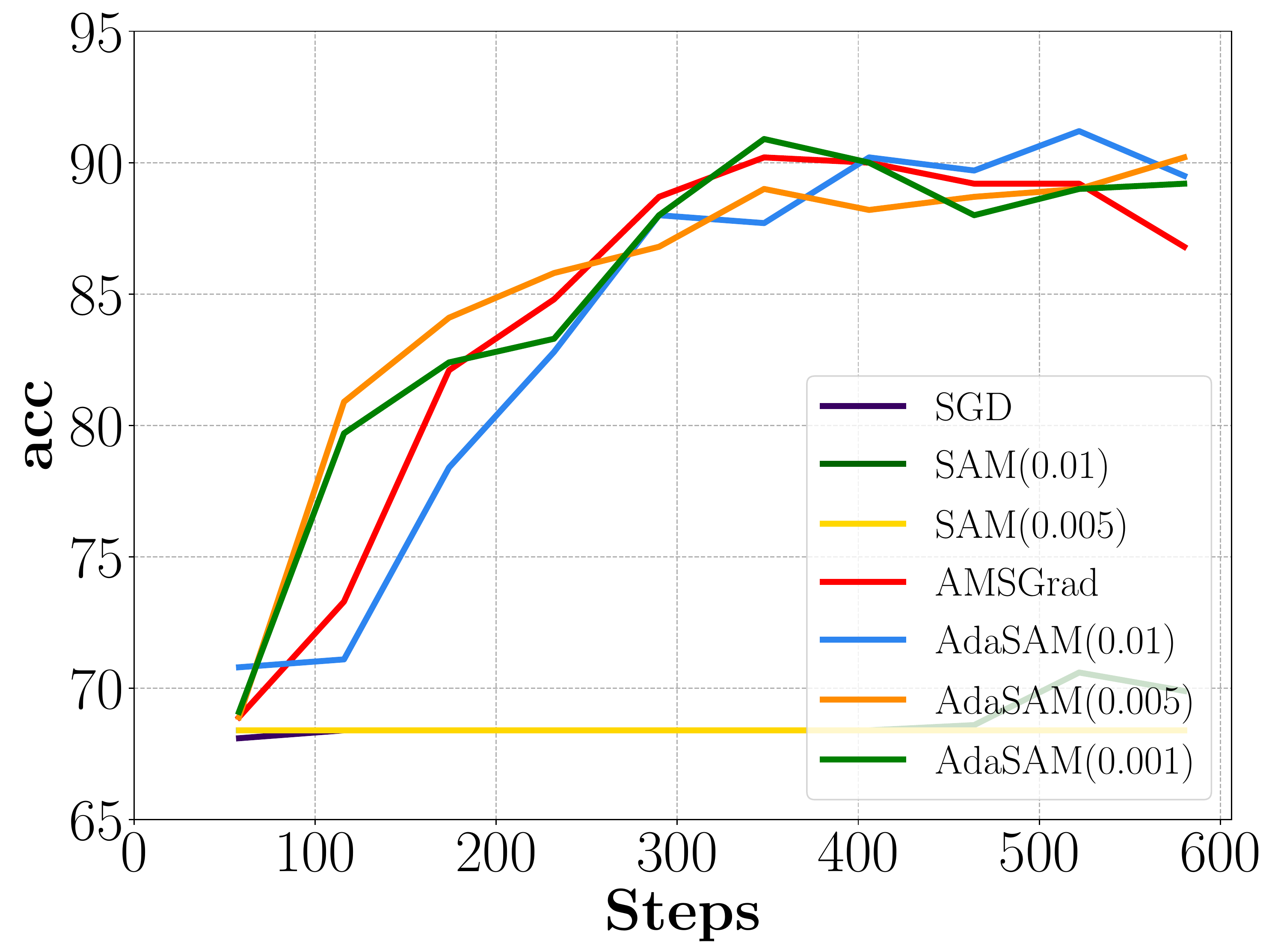}
    \caption{MRPC}
   \end{subfigure}\!\!
    \begin{subfigure}{0.23\linewidth}
    \includegraphics[width=\linewidth]{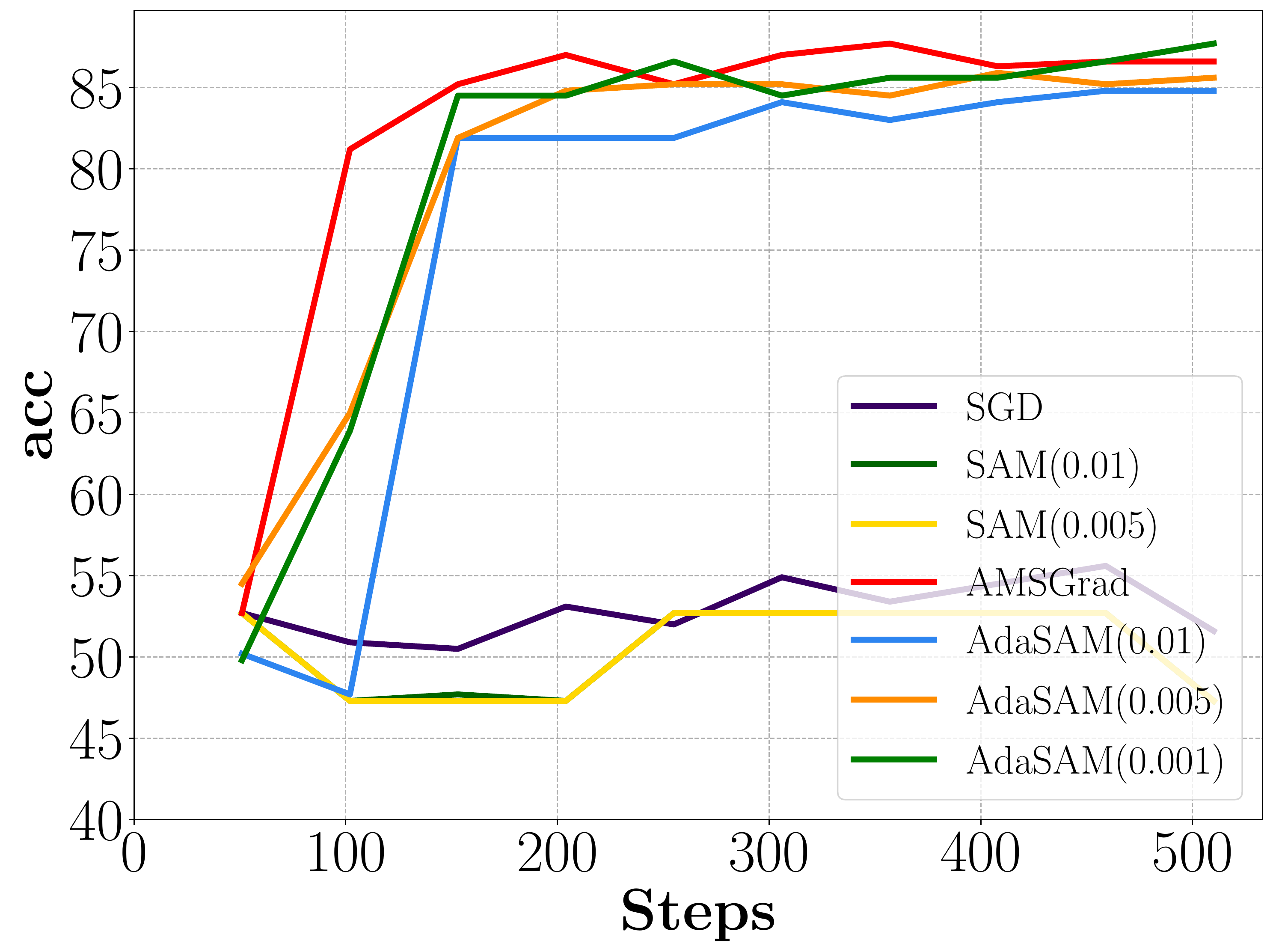}
    \caption{RTE}
   \end{subfigure}\!\!
    \begin{subfigure}{0.23\linewidth}
    \includegraphics[width=\linewidth]{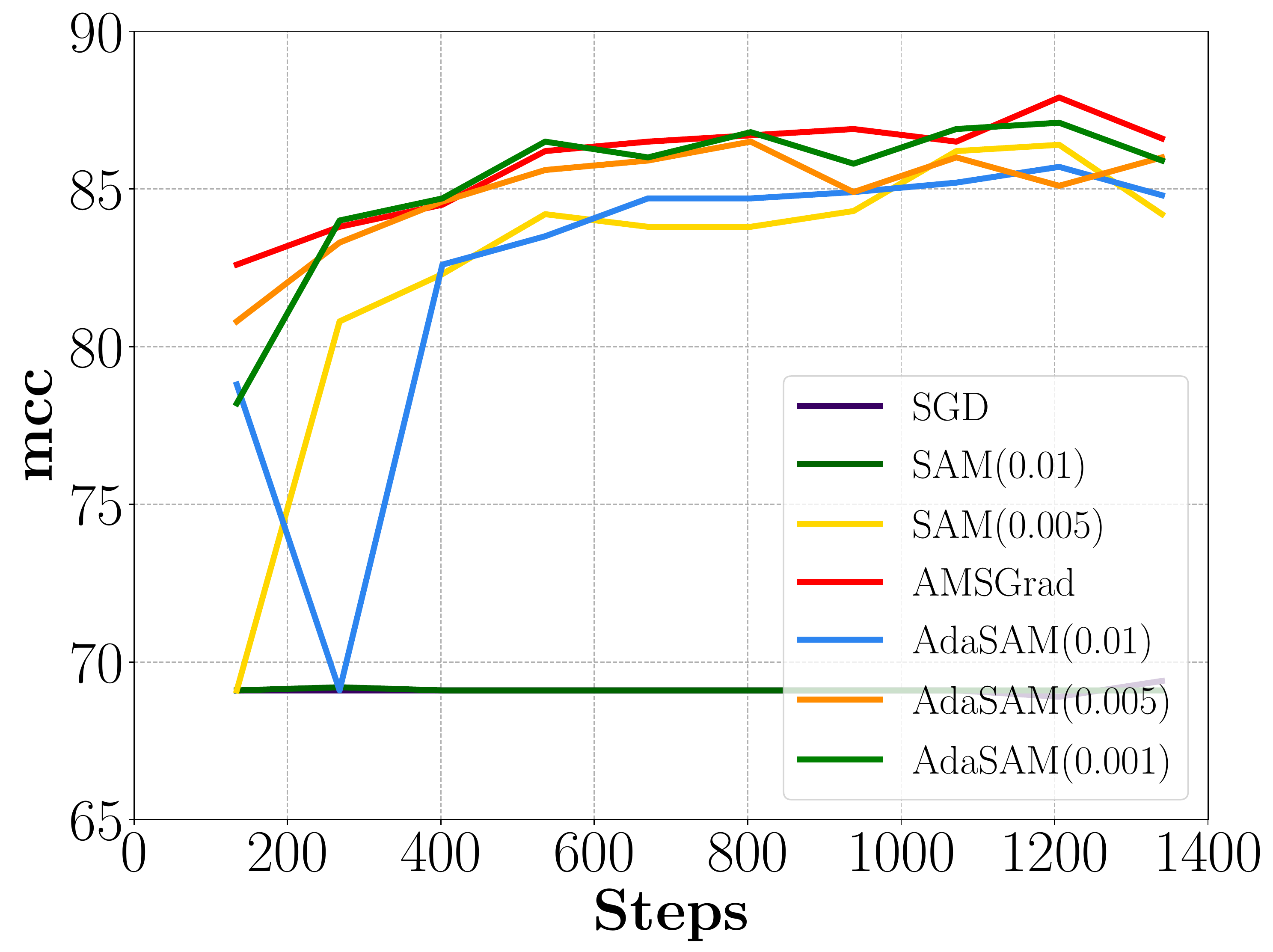}
    \caption{CoLA}
   \end{subfigure}
   \begin{subfigure}{0.23\linewidth}
    \includegraphics[width=\linewidth]{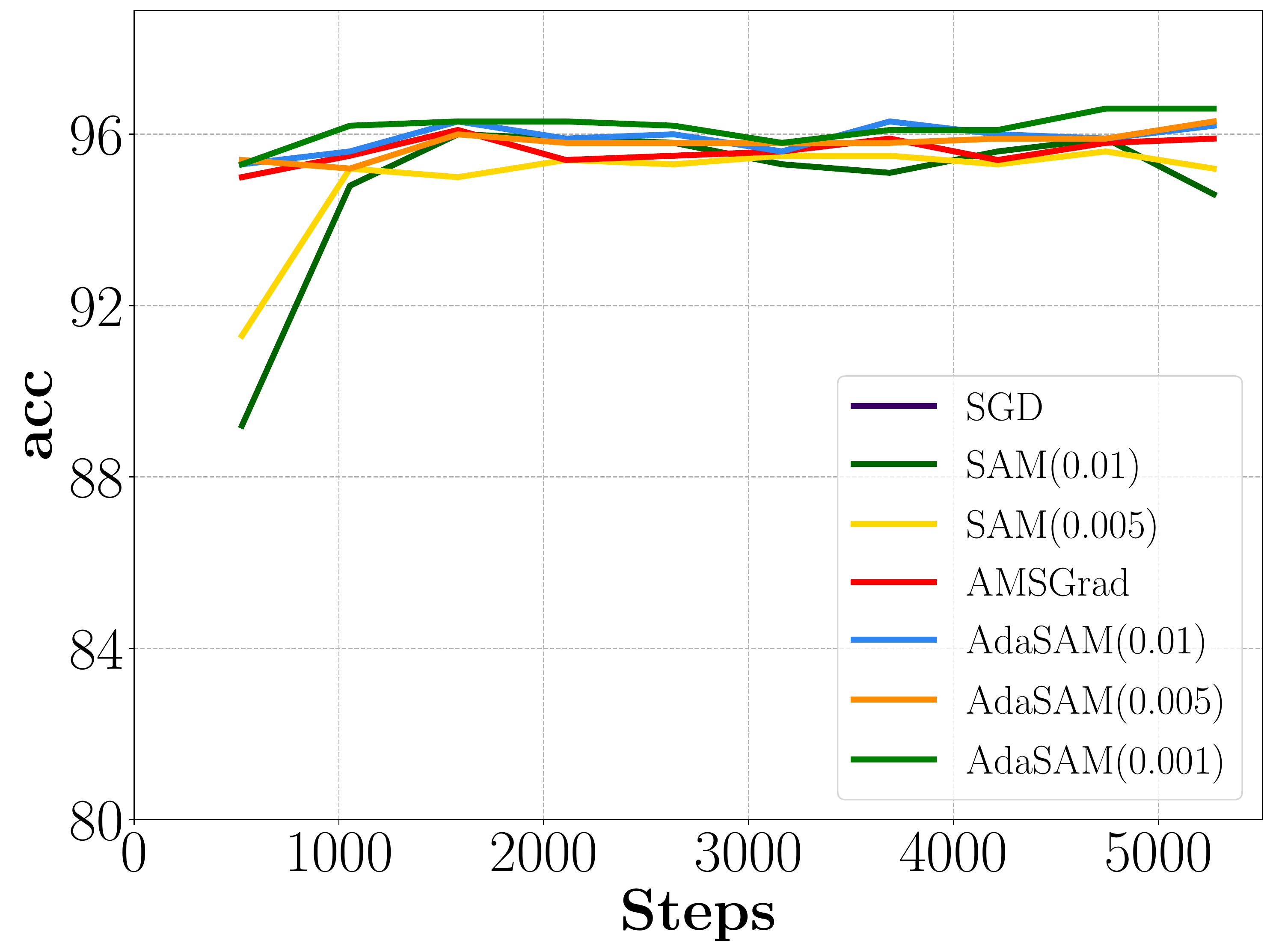}
    \caption{SST-2}
   \end{subfigure}\!\!

    \begin{subfigure}{0.23\linewidth}
    \includegraphics[width=\linewidth]{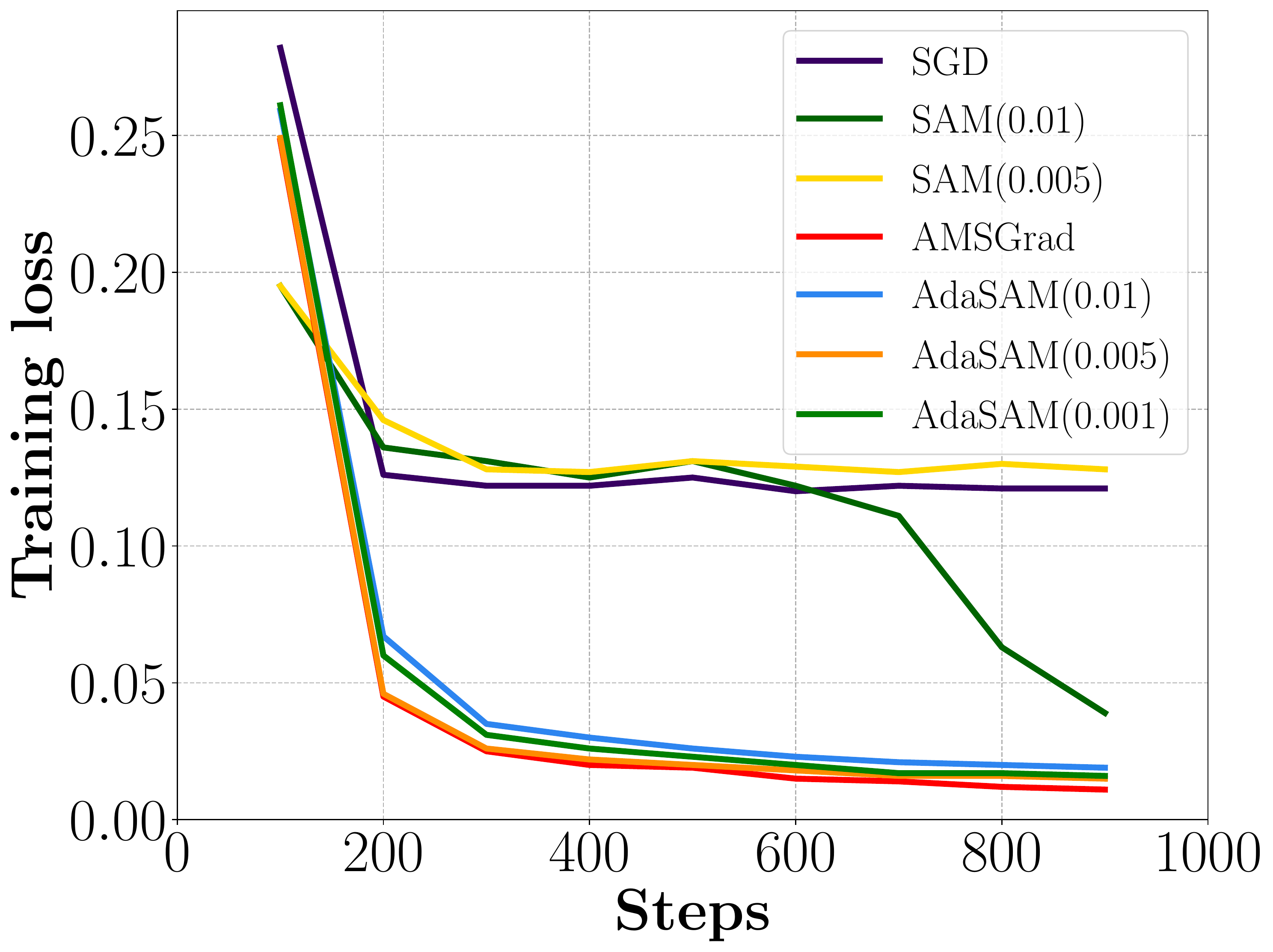}
    \caption{STS-B}
   \end{subfigure}\!\!
    \begin{subfigure}{0.23\linewidth}
    \includegraphics[width=\linewidth]{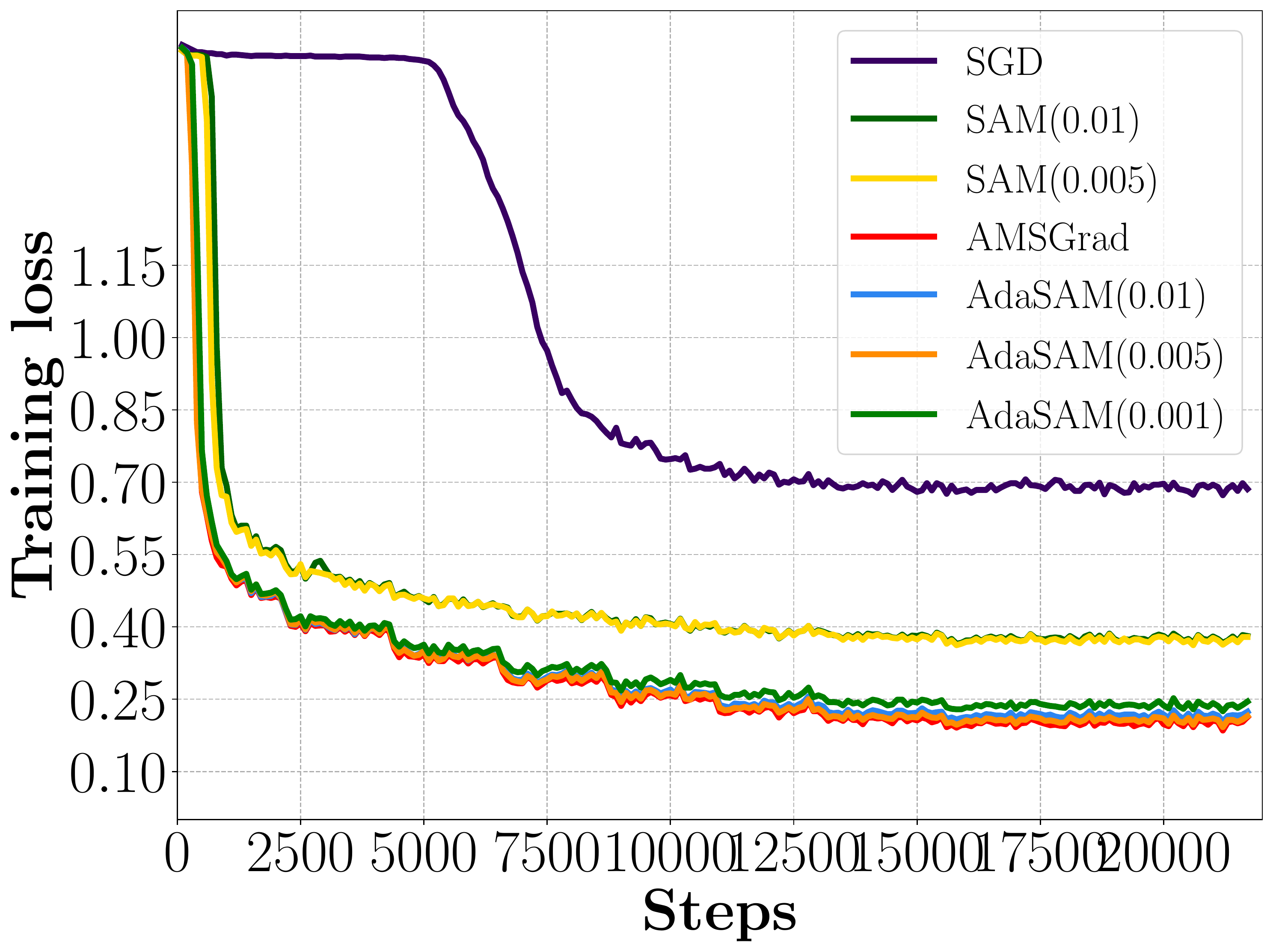}
    \caption{MNLI}
   \end{subfigure}
   \begin{subfigure}{0.23\linewidth}
    \includegraphics[width=\linewidth]{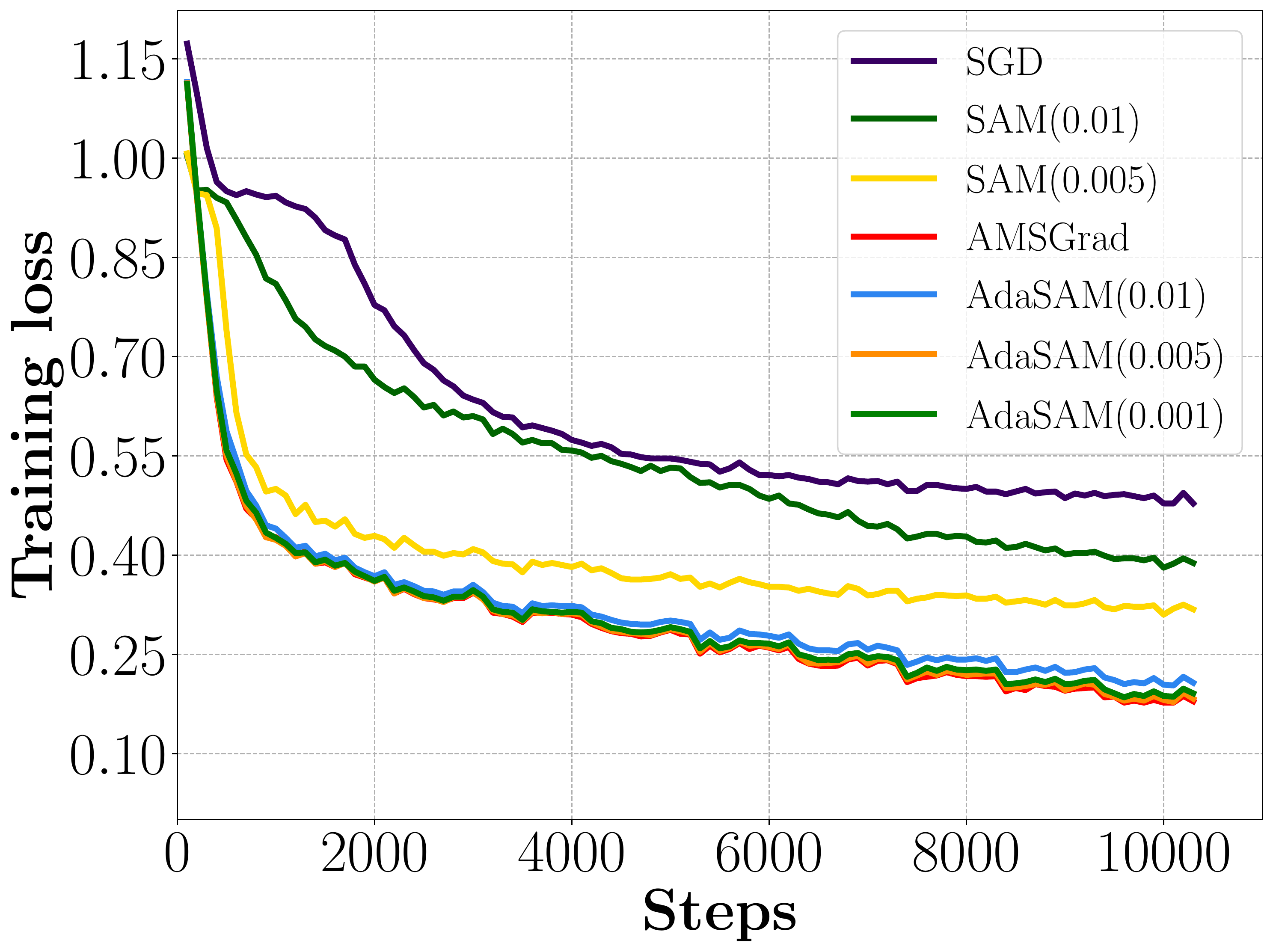}
    \caption{QQP}
  \end{subfigure}\!\!
    \begin{subfigure}{0.23\linewidth}
    \includegraphics[width=\linewidth]{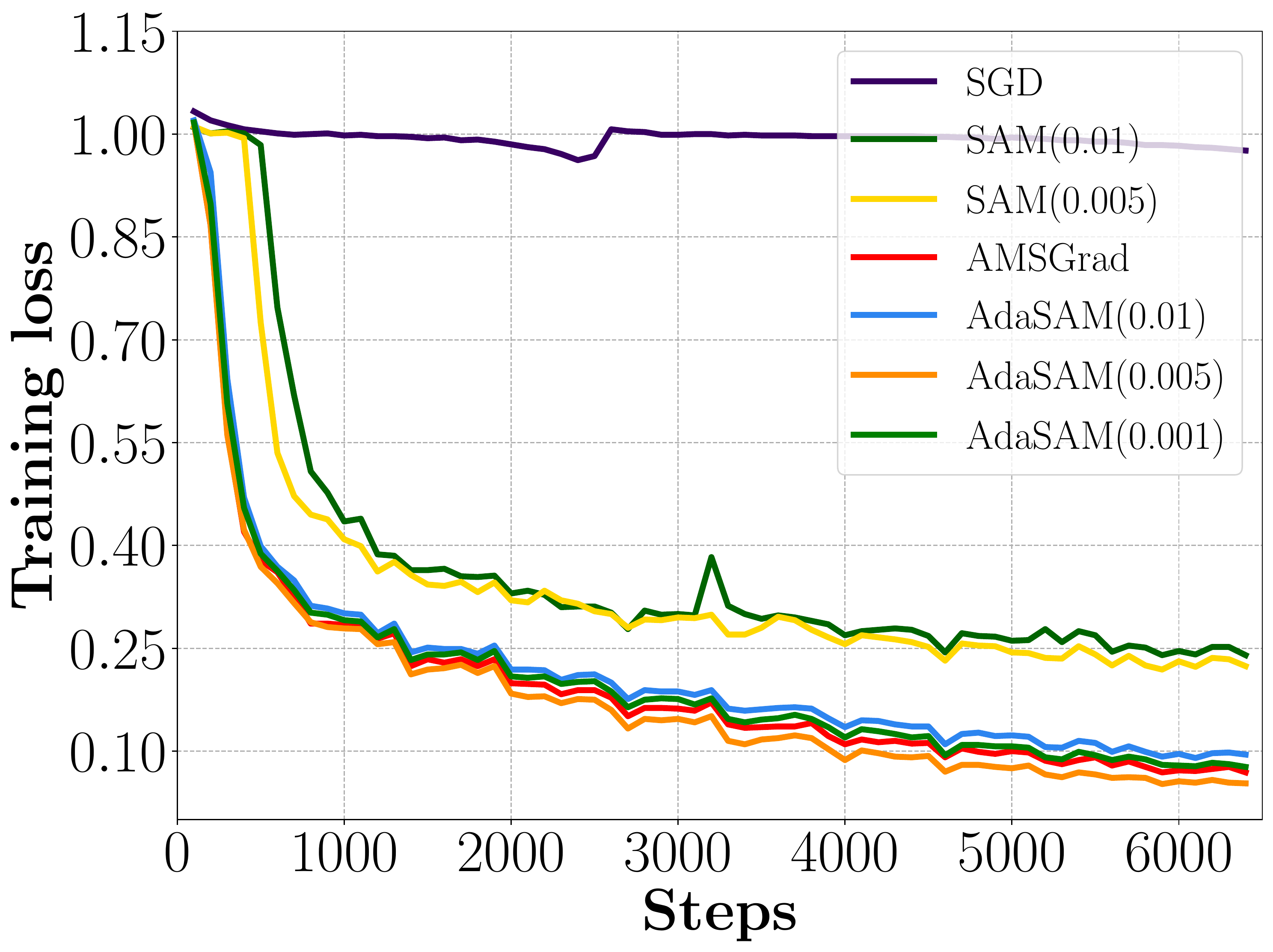}
    \caption{QNLI}
  \end{subfigure}\!\!
   
    \begin{subfigure}{0.23\linewidth}
    \includegraphics[width=\linewidth]{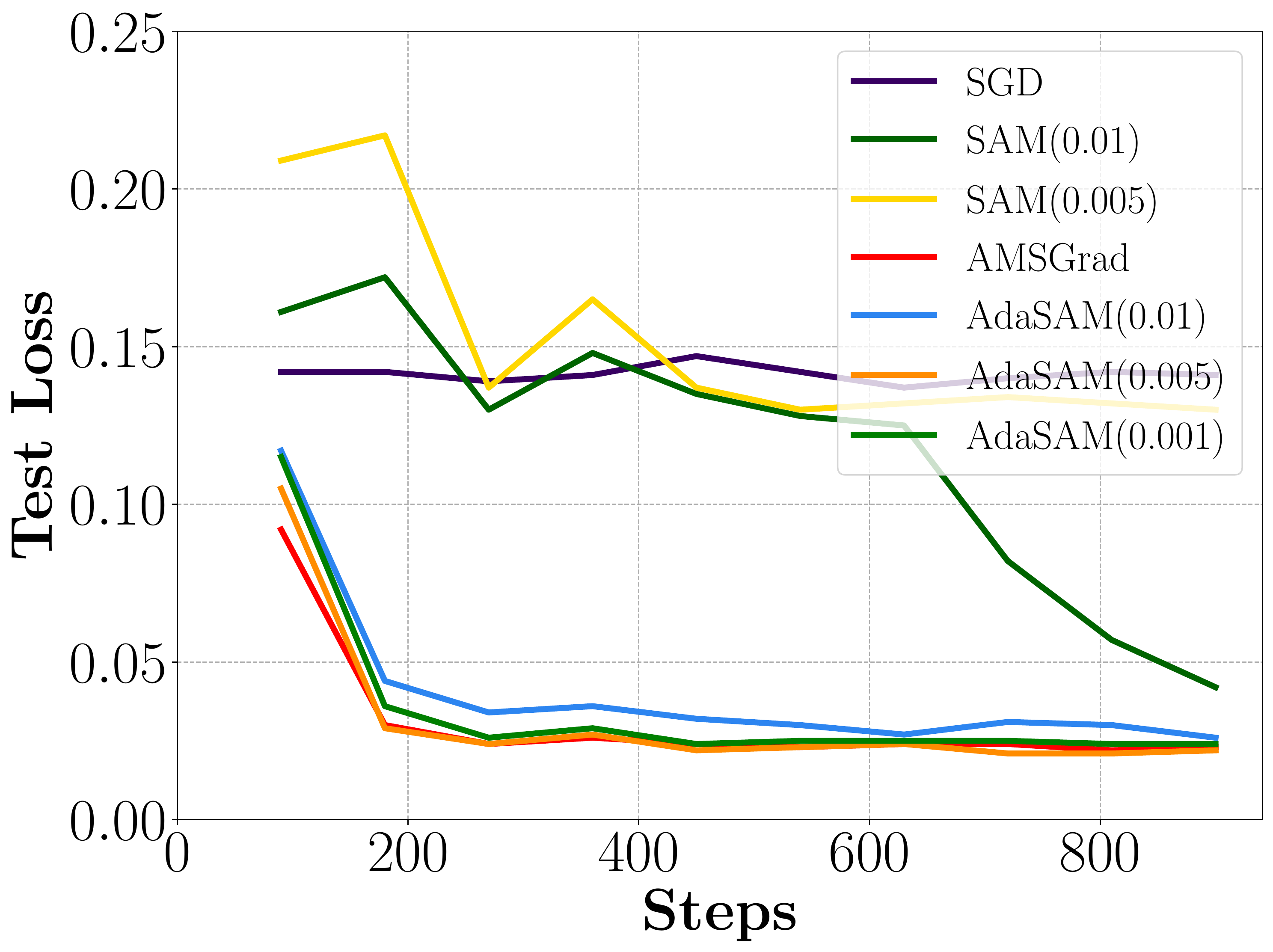}
    \caption{STS-B}
   \end{subfigure}\!\!
    \begin{subfigure}{0.23\linewidth}
    \includegraphics[width=\linewidth]{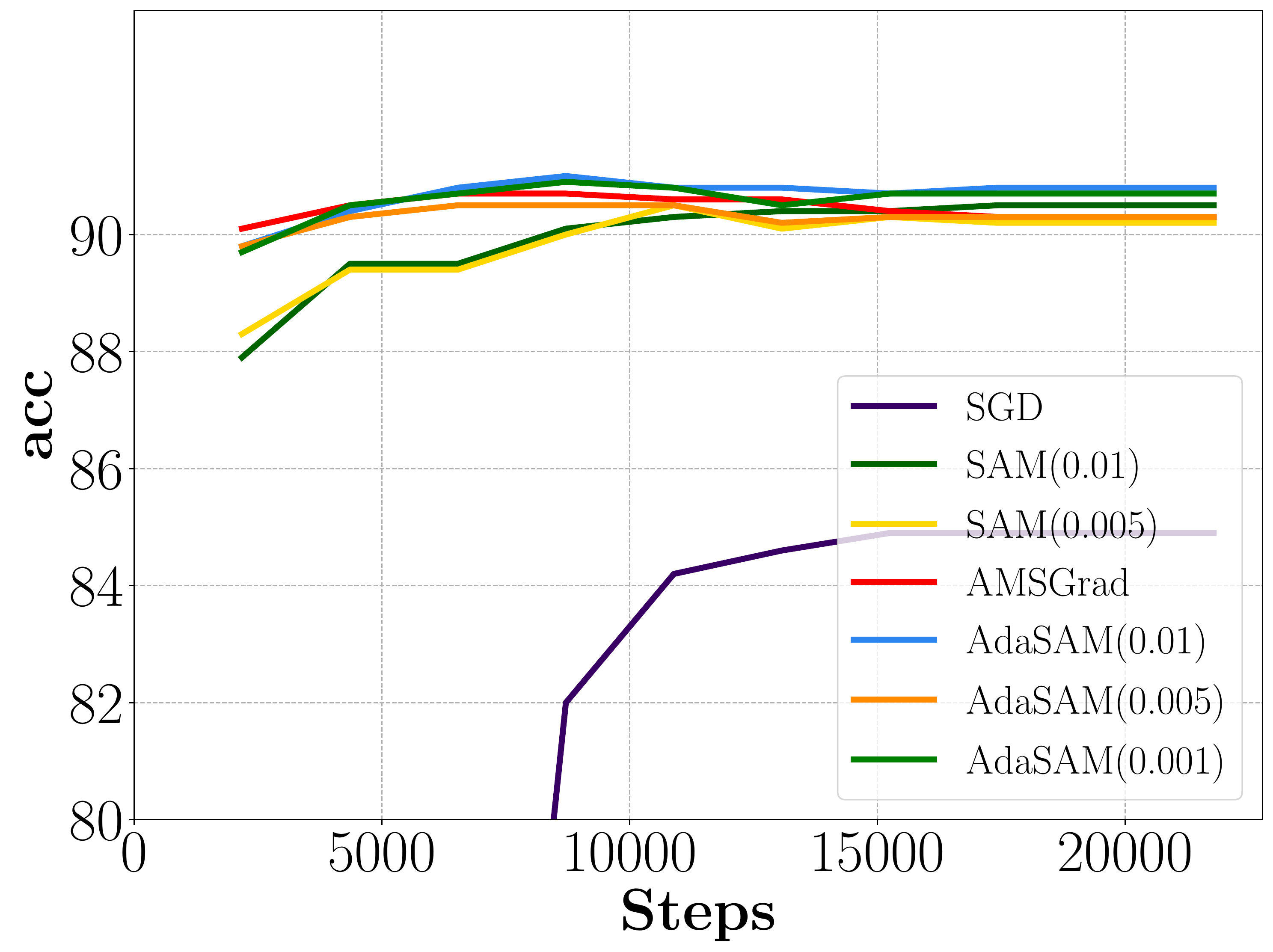}
    \caption{MNLI}
   \end{subfigure}
   \begin{subfigure}{0.23\linewidth}
    \includegraphics[width=\linewidth]{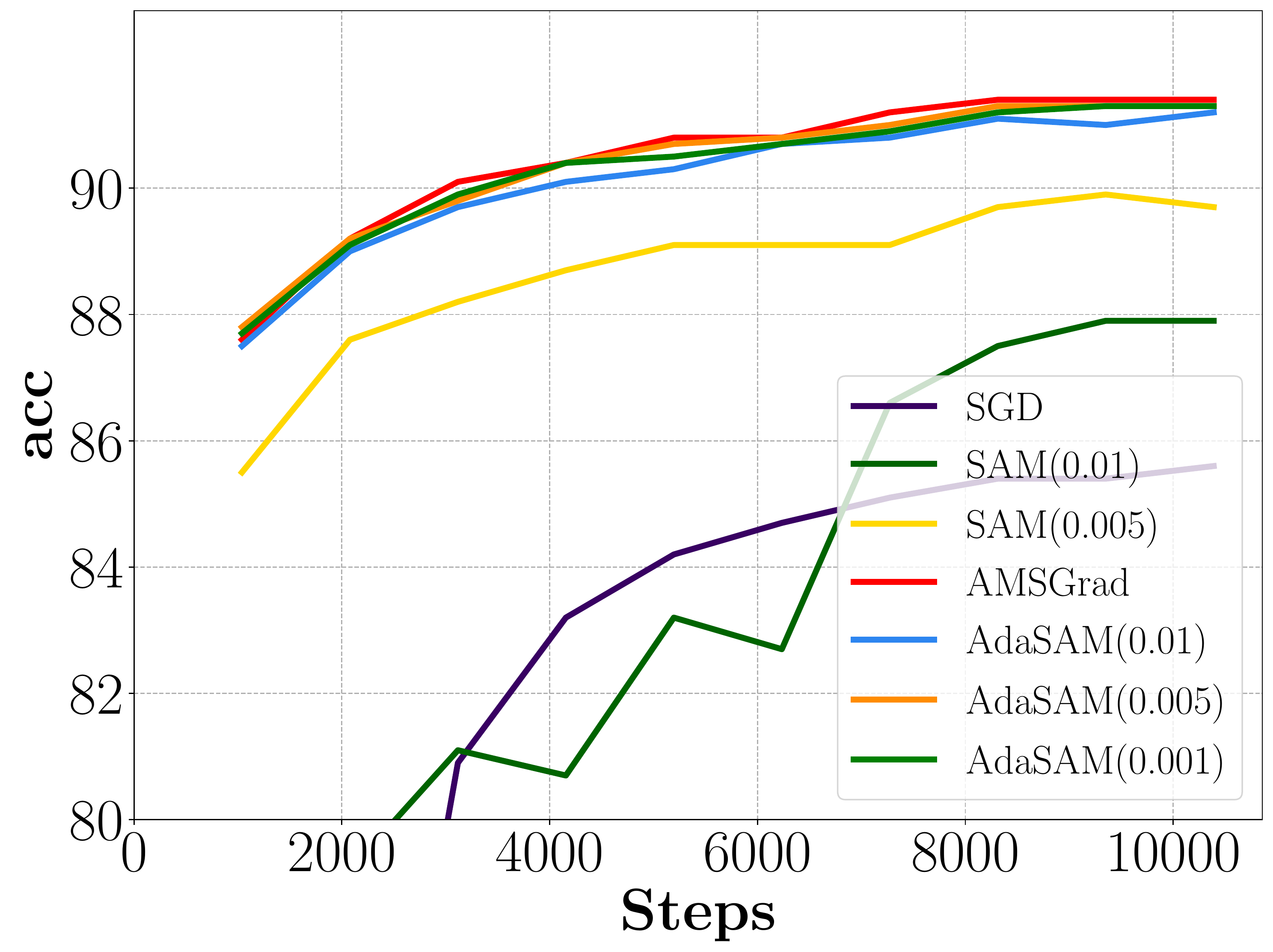}
    \caption{QQP}
  \end{subfigure}\!\!
    \begin{subfigure}{0.23\linewidth}
    \includegraphics[width=\linewidth]{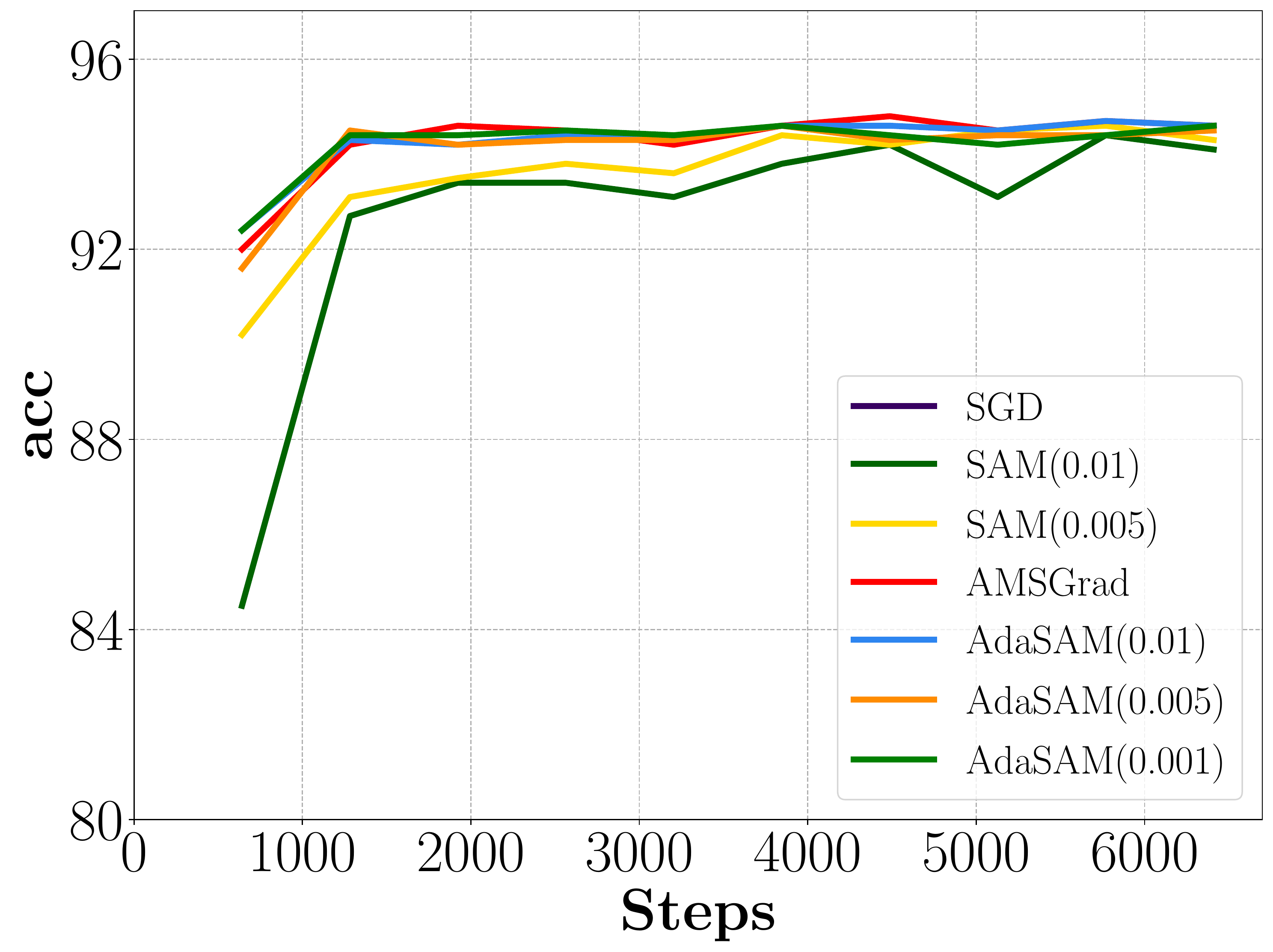}
    \caption{QNLI}
  \end{subfigure}\!\!
\caption{The loss and evaluation metric v.s. steps on MRPC, RTE, CoLA, SST-2, STS-B, MNLI, QQP, and QNLI.($\beta_1 = 0.9$)}
\label{fig:s_exp_3}
 \vspace{-0.0cm}
\end{figure*}

\subsection{Proof Sketch}
In this part, we give the proof sketch of the Theorem \ref{theorem}. For the complete proof, please see Appendix. Below, we first introduce an auxiliary sequence $z_{t}=x_{t}+\frac{\beta_1}{1-\beta_1}(x_{t}-x_{t-1})$. By applying $L$-smooth condition, we have
 \begin{align}
     f(z_{t+1}) \!\leq\! f(z_{t}) \!+\! \ip{\nabla f(z_{t})}{z_{t+1}-z_{t}} \!+\!\frac{L}{2}\norm[2]{}{z_{t+1}-z_{t}}.
 \end{align}
Applying it to the sequence $\{z_{t}\}$ and using the delay strategy yield 
 \begin{align}
& \;\;\;\;f(z_{t+1}) -f(z_{t}) \nonumber \\
&\leq \ip{\nabla f(z_{t})}{ \frac{\gamma \beta_1 }{1- \beta_1} \od{m_{t-1}}{(\eta_{t-1} - \eta_{t})} } + \frac{L}{2} \norm[2]{}{z_{t+1}- z_{t}}  \nonumber \\
& \;\;\;\;+\ip{\nabla f(z_{t})}{\frac{\gamma }{b} \sum_{i \in B } \nabla f_{i}(x_{t} + \rho_{t} \frac{s_{t} }{\norm[]{}{s_{t}} }) \odot (\eta_{t-1}-\eta_{t}) } \nonumber \\
&\;\;\;\;+\ip{ \nabla f(z_{t}) -\nabla f(x_t) }{-\frac{\gamma }{b} \sum_{i \in B } \nabla f_{i}(x_{t} + \rho_{t} \frac{s_{t} }{\norm[]{}{s_{t}} }) \odot \eta_{t-1} } \nonumber \\
&\;\;\;\; +\ip{\nabla f(x_{t})}{-\frac{\gamma }{b} \sum_{i \in B } \nabla f_{i}(x_{t} + \rho_{t} \frac{ \nabla f(x_{t}) }{\norm[]{}{ \nabla f(x_{t})} })  \odot \eta_{t-1}}\nonumber \\
&\;\;\;\;+\ip{\nabla f(x_{t})}{\frac{\gamma }{b} \sum_{i \in B } \nabla f_{i}(x_{t} + \rho_{t} \frac{ \nabla f(x_{t}) }{\norm[]{}{ \nabla f(x_{t})} }) \odot \eta_{t-1} \nonumber\\
&\;\;\;\;\;\;\;\;\;\;\;\; -\frac{\gamma }{b} \sum_{i \in B } \nabla f_{i}(x_{t} + \rho_{t} \frac{s_{t} }{\norm[]{}{s_{t}} }) \odot \eta_{t-1} } .\label{form:term1}
 \end{align}
 From the Lemma \ref{lemma:term1}, Lemma \ref{lemma:term1-1}, Lemma \ref{lemma:term2} in appendix, we can bound the above terms in \eqref{form:term1} as follows
\begin{align}
&\ip{\nabla f(z_{t})}{\frac{\gamma }{b} \sum_{i \in B } \nabla f_{i}(x_{t} + \rho_{t} \frac{s_{t} }{\norm[]{}{s_{t}} }) \odot (\eta_{t-1}-\eta_{t}) } \nonumber \\
&\leq \gamma G^{2} \norm[]{1}{\eta_{t-1} - \eta_{t}},\\
&\ip{\nabla f(z_{t})}{ \frac{\gamma \beta_1 }{1- \beta_1} \od{m_{t-1}}{(\eta_{t-1} - \eta_{t})} } \nonumber \\
&\leq \frac{\gamma \beta_1 }{1- \beta_1} G^{2} \norm[]{1}{\eta_{t-1} - \eta_{t}},\\
&\ip{\nabla f(x_{t})}{\frac{\gamma }{b} \sum_{i \in B } \nabla f_{i}(x_{t} + \rho_{t} \frac{ \nabla f(x_{t}) }{\norm[]{}{ \nabla f(x_{t})} })\odot \eta_{t-1} \nonumber \\
&\;\;\;\;-\frac{\gamma }{b} \sum_{i \in B } \nabla f_{i}(x_{t} + \rho_{t} \frac{s_{t} }{\norm[]{}{s_{t}} })\odot \eta_{t-1} } \nonumber \\
&\leq \frac{\gamma }{2 \mu^2} \norm[2]{}{\nabla f(x_{t})\odot \sqrt{\eta_{t-1}} } +\frac{2 \mu^2 \gamma  L^2 \rho_{t}^2}{\epsilon}.
\end{align}
Then we substitute them into the   \eqref{form:term1}, and take the conditional expectation to get
 \begin{align}
  &\EE f(z_{t+1}) - f(z_{t}) \nonumber \\
  &\leq \EE \ip{\nabla f(x_{t})}{-\frac{\gamma }{b} \sum_{i \in B } \nabla f_{i}(x_{t} + \rho_{t} \frac{ \nabla f(x_{t}) }{\norm[]{}{ \nabla f(x_{t})} })  \odot \eta_{t-1}}   \nonumber \\
 &+\frac{\gamma }{2 \mu^2} \norm[2]{}{\nabla f(x_{t})\odot \sqrt{\eta_{t-1}} } +  \frac{\gamma }{1- \beta_1 }G^{2} \norm[]{1}{\eta_{t-1} - \eta_{t}} \nonumber \\
 &+\EE \ip{ \nabla f(z_{t}) -\nabla f(x_t) }{-\frac{\gamma }{b} \sum_{i \in B } \nabla f_{i}(x_{t} + \rho_{t} \frac{s_{t} }{\norm[]{}{s_{t}} }) \odot \eta_{t-1} }\nonumber \\
 & +\frac{2 \mu^2 \gamma  L^2 \rho_{t}^2}{\epsilon} + \frac{L}{2} \EE \norm[2]{}{z_{t+1}- z_{t}}, \label{form:term2}
 \end{align}
 where $\mu>0$ is a  constant to be determined. Next, from the Lemma \ref{lemma:term3}, Lemma \ref{lemma:term4-0} and Lemma \ref{lemma:term-m} in Appendix, we have
 \begin{align}
&\EE \ip{\nabla f(x_{t})}{-\frac{\gamma }{b} \sum_{i \in B } \nabla f_{i}(x_{t} + \rho_{t} \frac{ \nabla f(x_{t}) }{\norm[]{}{ \nabla f(x_{t})} })  \odot \eta_{t-1}} \nonumber \\
&\leq - \gamma \norm[2]{}{\nabla f(x_{t}) \odot \sqrt{\eta_{t-1}}} +\EE \frac{\gamma }{2\alpha^2} \norm[2]{}{\nabla f(x_{t}) \odot \sqrt{\eta_{t-1}}} \nonumber \\
&\;\;\;\;+\frac{\gamma  \alpha^2 L^2 \rho_{t}^2 }{2  \epsilon},\\
&\frac{L}{2} \EE \norm[2]{}{z_{t+1}- z_{t}} \leq \frac{L G^2 \gamma^{2} \beta_{1}^{2}}{(1-\beta_1)^{2}}\EE \norm[2]{}{\eta_{t} - \eta_{t-1}}\nonumber \\
&\;\;\;\;+\gamma^{2}L (3\frac{1 + \beta}{\beta \epsilon} (\frac{L \rho_{t}^{2}}{\epsilon } + \frac{\sigma^{2}}{b \epsilon} + \EE \norm[2]{}{ \nabla f(x_t ) \odot  \sqrt{\eta_{t-1}}})\nonumber \\
& \;\;\;\;+ (1+ \beta)G^2 \EE \norm[2]{}{\eta_{t} -\eta_{t-1} }),\\
 &\EE \ip{ \nabla f(z_{t}) -\nabla f(x_t) }{-\frac{\gamma }{b} \sum_{i \in B } \nabla f_{i}(x_{t} + \rho_{t} \frac{s_{t} }{\norm[]{}{s_{t}} }) \odot \eta_{t-1} } \nonumber \\
&\leq \frac{\gamma^3 L^2 \beta_{1}^2}{2 \epsilon (1-\beta_{1})^{2}}(\frac{1}{\lambda_{1}^{2}}+\frac{1}{\lambda_{2}^{2}} +\frac{1}{\lambda_{3}^{2}})  \frac{d G_{\infty}^{2}}{\epsilon^{2}} + \frac{\gamma  L^2 \rho_{t}^2 }{2\epsilon}(\lambda_{2}^{2} +4  \lambda_{3}^2) \nonumber \\
&\;\;\;\;+\frac{\gamma \lambda_{1}^{2}}{2} \norm[2]{}{\nabla f(x_{t}) \odot \sqrt{\eta_{t-1} }} .
 \end{align}
Next, we substitute it into the \eqref{form:term2}. Taking the expectation over all history information yields
 \begin{align}
& \EE f(x_{t+1}) - \EE f(x_{t}) \nonumber \\
&\leq \!-\gamma (1\!-\! \frac{1}{2 \mu^2} \!-\!\frac{1}{2 \alpha^2}\!-\!\frac{3\gamma L(1\!+\!\beta)}{\beta \epsilon } \!-\! \frac{\lambda_{1}^{2}}{2})\EE \norm[2]{}{\nabla f(x_t) \odot  \sqrt{\eta_{t-1}}}\nonumber \\
&+\frac{2 \mu^2 \gamma  L^2 \rho_{t}^2}{\epsilon}  +  \frac{\gamma}{1-\beta_{1}} G^{2} \EE \norm[]{1}{\eta_{t-1} - \eta_{t}}   +\frac{\gamma  \alpha^2 L^2 \rho^2 }{2  \epsilon}\nonumber \\
&+\frac{\gamma^3 L^2 \beta_{1}^2}{2 \epsilon (1-\beta_{1})^{2}}(\frac{1}{\lambda_{1}^{2}}+\frac{1}{\lambda_{2}^{2}} +\frac{1}{\lambda_{3}^{2}})  \frac{d G_{\infty}^{2}}{\epsilon^{2}}  +\frac{\gamma  L^2 \rho_{t}^2 }{2\epsilon}(\lambda_{2}^{2} +4  \lambda_{3}^2) \nonumber \\
&   +\gamma^{2}L G^2 ((\frac{\beta_1}{1-\beta_1})^2 +1 +\beta) \EE \norm[2]{}{\eta_{t} -\eta_{t-1}}\nonumber \\
&+ \frac{3 \gamma^2 L (1+ \beta)}{ \beta \epsilon}(\frac{L \rho_{t}^{2}}{\epsilon }+ \frac{\sigma^{2}}{b \epsilon}).
 \end{align}
 We set $\mu^2 = \alpha^2 = 8$, $\beta = 3$, $\lambda_{1}^2 =\frac{1}{4} $, $\lambda_{2}^2 =\lambda_{3}^2 =1 $ and we choose $\frac{2 \gamma  L}{\epsilon} \leq \frac{1}{8}$.
Note that $ \eta_{t} $ is bounded. We have
 \begin{align}
&\frac{\gamma }{2 G} \EE \norm[2]{}{ \nabla f(x_t )} \leq \frac{\gamma }{2} \EE \norm[2]{}{ \nabla f(x_t ) \odot  \sqrt{\eta_{t-1}}}\\
&\leq - \EE f(x_{t+1}) + \EE f(x_{t}) + \frac{45 \gamma  L^2 \rho_{t}^2}{2 \epsilon} + \frac{4 \gamma^2 L }{ \epsilon}(\frac{L \rho_{t}^{2}}{\epsilon } + \frac{\sigma^{2}}{b \epsilon})  \nonumber \\
& +  \frac{\gamma}{1-\beta_1}G^{2} \EE \norm[]{1}{\eta_{t-1} - \eta_{t}}  +\frac{3 \gamma^3 L^2 \beta_{1}^2}{  (1-\beta_{1})^{2}}  \frac{d G_{\infty}^{2}}{\epsilon^{3}} \nonumber \\
&+(4+(\frac{\beta_1}{1-\beta_1})^2 )\gamma^{2}L G^2  \EE \norm[2]{}{\eta_{t} -\eta_{t-1}} .
 \end{align}
 Then,  telescoping it from $t=0$ to $t=T-1$, and   assuming $\gamma $ is a constant, it follows that
 \begin{align}
&\frac{1}{T}\sum_{t=0}^{T-1} \EE \norm[2]{}{ \nabla f(x_t )} \leq  \frac{2G(f(x_{0})-f^{*})}{\gamma  T}+ \frac{8G \gamma  L }{ \epsilon} \frac{\sigma^{2}}{b \epsilon}  \nonumber \\
&+ \frac{45G  L^2 \rho_{t}^2}{ \epsilon} + \frac{2G^{3}}{(1-\beta_1)T} 
d(\frac{1}{\epsilon}-\frac{1}{G}) + \frac{6 \gamma^2 L^2 \beta_{1}^2}{  (1-\beta_{1})^{2}}  \frac{d G^{3}}{\epsilon^{3}}\nonumber \\
&+\frac{8G \gamma  L }{ \epsilon} \frac{L \rho_{t}^{2}}{\epsilon }    + \frac{2(4+(\frac{\beta_1}{1-\beta_1})^2 )\gamma L G^3}{T}  d(\epsilon ^{-2}-G^{-2}),
 \end{align}
which completes the proof.

\begin{figure*}[htbp]
   \centering
    \begin{subfigure}{0.33\linewidth}
    \includegraphics[width=\linewidth]{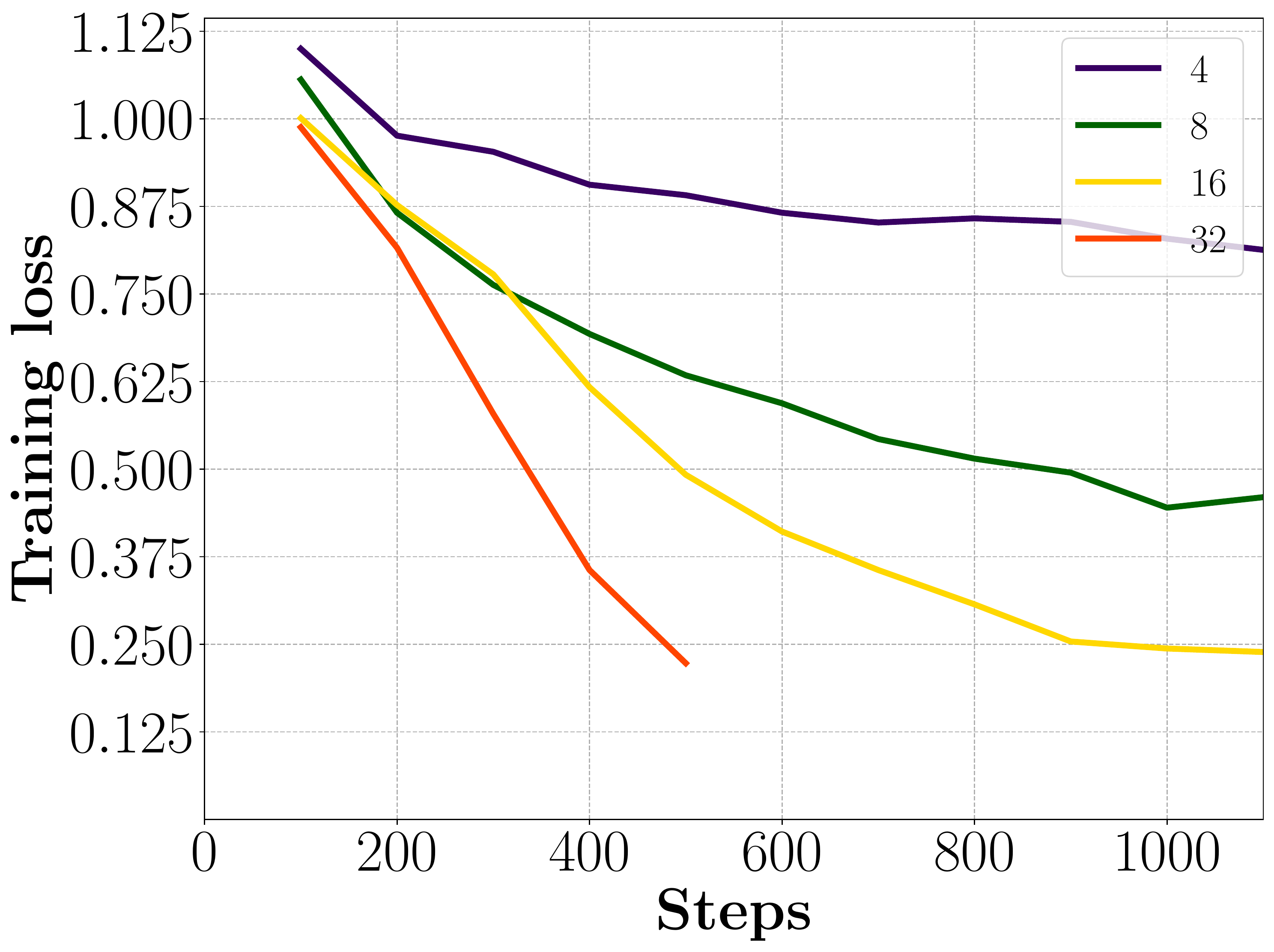}
    \caption{MRPC}
   \end{subfigure}\!\!
    \begin{subfigure}{0.33\linewidth}
    \includegraphics[width=\linewidth]{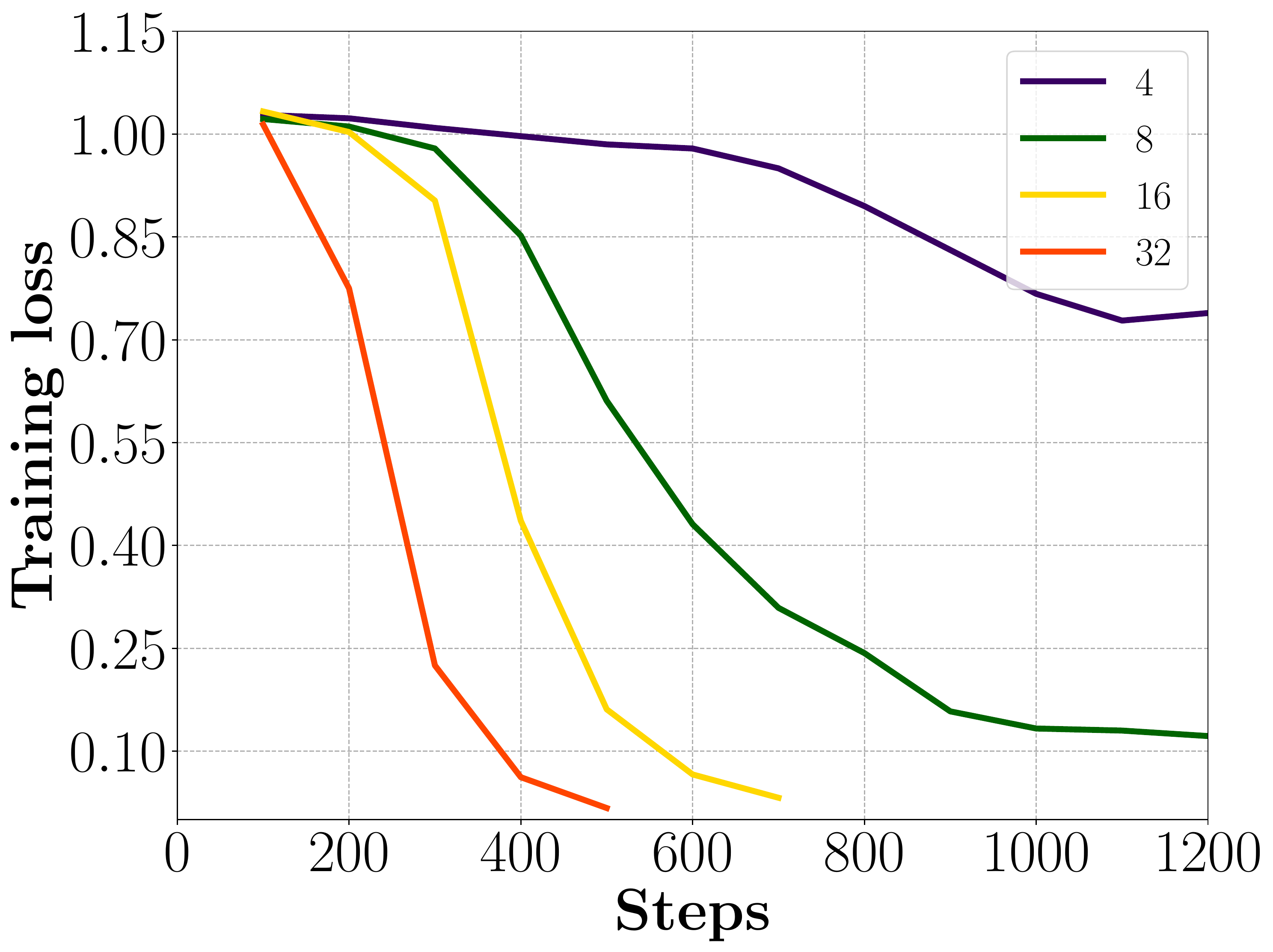}
    \caption{RTE}
   \end{subfigure}\!\!
    \begin{subfigure}{0.33\linewidth}
    \includegraphics[width=\linewidth]{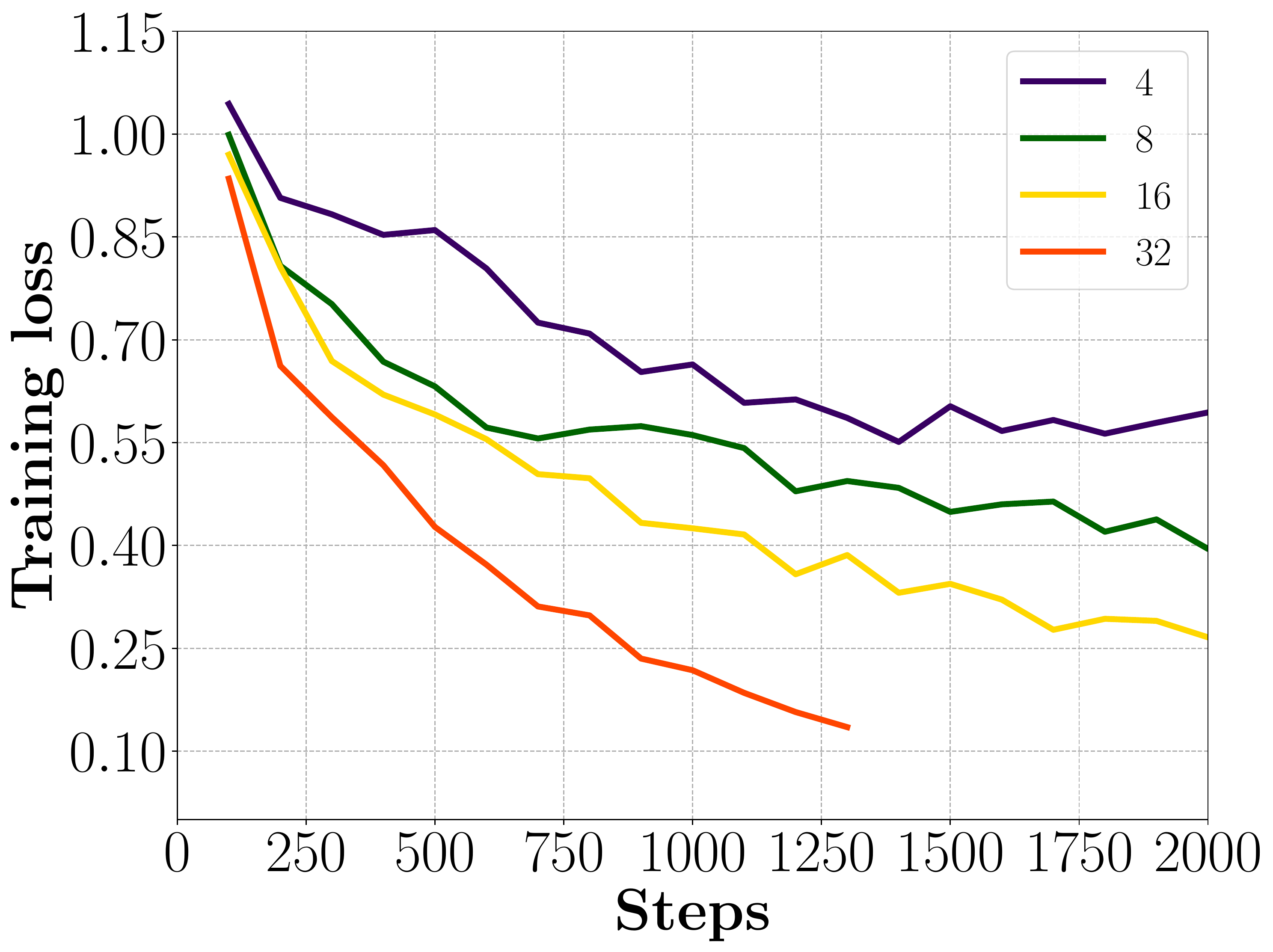}
    \caption{CoLA}
   \end{subfigure}
\caption{The linear speedup verification of AdaSAM with the number of batch size of 4, 8, 16, 32.}
\label{fig:linear}
\end{figure*}

\begin{table*}[htbp]
\centering
\caption{Results of SGD, SAM, AMSGrad and AdaSAM on the GLUE benchmark without momentum, i.e., $\beta_1 =0$}
\begin{tabular}{lcccccccccc}
\toprule
\multicolumn{1}{c}{}                        & \textbf{CoLA}               & \textbf{SST-2}              & \textbf{MRPC}                    & \textbf{STS-B}                          & \textbf{RTE}                & \textbf{MNLI}               & \textbf{QNLI}                      & \textbf{QQP} &                        \\
\multicolumn{1}{c}{\multirow{-2}{*}{\textbf{Model}}} & {mcc.} & {Acc.} & {Acc./F1}  & {Pcor./Scor.} & {Acc.} & {m./mm.} & {Acc.} & F1/ Acc.           & \multirow{-2}{*}{\textbf{Avg.}} \\ 
\midrule
SGD & 0 & 51.722 & 68.38/ 81.22 & 5.55/ 7.2 & 51.27 & 32.51/ 32.42 & 53.32 & 0/ 63.18 & 37.23                   \\
SAM($\rho=$0.01) & 41.91 & 95.3 & 68.38/ 81.22 & 9.21/ 10.38 & 53.07 & 87.99/ 87.8 & 51.24 & 83.44/ 87.27 & 63.1                 \\
SAM($\rho=$0.005) & 58.79 & 81.54 & 68.38/ 81.22 & 13.52/ 16.6 & 53.79 & 88.42/ 88.15 & 92.95 & 83.84/ 87.7 & 67.91                  \\
SAM(best) & 58.79 & 95.3 & 68.38/ 81.22 & 13.52/ 16.6 & 53.79 & 88.42/ 88.15 & 92.95 & 83.84/ 87.7 & 69.06                  \\
AMSGrad                                     & 63.78                        & 96.44                        & 89.71/ 92.44                   & 89.98/ 90.35                         & 87.36                        & 90.65/ 90.35                      & 94.53                       & 88.59/ 91.27           & 88.79                  \\
\midrule
AdaSAM($\rho=$0.01)                                 & 69.23                        & 96.22                        & 89.96/ 92.84                   & 88.83/ 89.07                         & 87                        & 90.83/ 90.41                      & 94.8                       & 88.67/ 91.38           & 89.1                 \\
AdaSAM($\rho=$0.005)                                 & 68.47                        & 96.22                        & 89.96/ 92.82                   & 91.59/ 91.22                         & 73.65                        & 90.75/ 90.42                      & 94.73                       & 88.72/ 91.46           & 88.33                  \\
AdaSAM(best)                                 & 69.23                        & 96.22                        & 89.96/ 92.84                   & 91.59/ 91.22                         & 87                        & 90.83/ 90.42                      & 94.8                       & 88.72/ 91.46           & 89.52       
\\
\bottomrule
\end{tabular}
\label{tab:addlabel-3}
\end{table*}

\section{Experiments}
In this section, we apply AdaSAM to train language models and compare it with SGD, AMSGrad, and SAM to show its effectiveness. Due to space limitations, more experiments, including visualization, task description, implementation details and results description, are placed in the Appendix.

\subsection{Experimental Setup}
\textbf{Tasks and Datasets.}\ 
We evaluate AdaSAM on a popular benchmark, \textit{i.e.} General Language Understanding Evaluation (GLUE) \cite{wang2018glue}, which consists of several language understanding tasks including sentiment analysis, question answering and textual entailment.
For a fair comparison, we report the results based on single-task, without multi-task or ensemble training. We evaluate the performance with Accuracy (``\textit{Acc}'') metric for most tasks, except the F1 scores for QQP and MRPC, the Pearson-Spearman correlations (``\textit{Pcor/Scor}'') for STS-B and the Matthew correlations (``\textit{Mcc}'') for CoLA.
The performance is better as the metric is higher.

\textbf{Implementations.}\  
We conduct our experiments using a widely-used pre-trained language model, RoBERTa-large\footnote{\url{https://dl.fbaipublicfiles.com/fairseq/models/roberta.large.tar.gz}} in the open-source toolkit fairseq\footnote{\url{https://github.com/facebookresearch/fairseq}}, with 24 transformer layers, a hidden size of 1024. For fine-tuning on each task, we use different combinations of hyper-parameters, including the learning rate, the number of epochs, the batch size, \textit{etc}~\footnote{Due to the space limitation, we show the details of the dataset and training setting in Appendix~\ref{expsetting}.}.
In particular, for RTE, STS-B and MRPC of GLUE benchmark, we first fine-tune the pre-trained RoBERTa-large model on the MNLI dataset and continue fine-tuning the RoBERTa-large-MNLI model on the corresponding single-task corpus for better performance, as many prior works did \cite{liu2019roberta, he2020deberta}.
All models are trained on NVIDIA DGX SuperPOD cluster, in which each machine contains 8$\times$40GB A100 GPUs.

\subsection{Results on GLUE Benchmark}
Table \ref{tab:addlabel-1} shows the performance of SGD, SAM, AMSGrad, and AdaSAM.  For the AdaSAM, we tune the neighborhood size of the perturbation parameter from 0.01, 0.005, and 0.001.
The result shows that AdaSAM outperforms AMSGrad on 6 tasks of 8 tasks except for QNLI and QQP.
Overall, it improves the 0.28 average score than AMSGrad.
On the other hand, Table \ref{tab:addlabel-1} indicates that SAM is better than SGD on 7 tasks of 8 tasks except for RTE. And SAM can significantly improve performance. Comparing the results of Table \ref{tab:addlabel-1}, we can find that the adaptive learning rate method is better than SGD tuned with handicraft learning rate.
AdaSAM achieves the best metric on 6 tasks which is CoLA, SST-2, MRPC, STS-B, RTE, QNLI, and MNLI.
In general, AdaSAM is better than the other methods.

In addition, Figure \ref{fig:s_exp_1} shows the convergence speed of the detailed loss and evaluation metrics vs. the number of steps during training, respectively. The loss curve of AdaSAM decreases faster than SAM and SGD in all tasks, and it has a similar decreasing speed as the AMSGrad. 
The evaluation metric curve of AdaSAM and AMSGrad show that the AdaSAM is better than SGD and SAM and decreases the loss value as faster as the AMSGrad in all tasks.

\subsection{Mini-batch Speedup}
In this part, we test the performance with different batch sizes to validate the linear speedup property. The experiments are conducted on the MRPC, RTE, and CoLA tasks. The batch size is set as 4, 8, 16, 32, respectively. We scale the learning rate as $\sqrt{N}$, which is similar as \cite{li2021distributed}, where $N$ is the batch size. The results show that the training loss decreases faster as the batchsize increases, and the loss curve with the batch size of 32 achieves nearly half iterations as the curve with the batch size of 16.

\subsection{Ablation Study}
In this subsection, we conduct the experiments the momentum hyper-parameter $\beta_1$ is set to 0 to evaluate the influence of the momentum acceleration and the adaptive learning rate.
Table \ref{tab:addlabel-3} shows that AdaSAM outperforms AMSGrad on 6 tasks of 8 tasks except for SST-2 and RTE.
In Table \ref{tab:addlabel-3}, we also compare SGD and SAM, and without the momentum, SAM outperforms SGD on all tasks.
Under this situation, AdaSAM without the momentum acceleration method is better than the other methods.

When comparing the result of Table \ref{tab:addlabel-1} and Table \ref{tab:addlabel-3}, we find that both the adaptive learning rate method and the momentum acceleration are helpful for the model's generalization ability. When there is no momentum term, SAM with an adaptive learning rate improves the 0.74 average score to AMSGrad. With a momentum term, AdaSAM improves the 0.28 average score to AMSGrad. It shows that the adaptive method can improve the performance with or without momentum acceleration and it achieves the best performance with momentum acceleration. And we can find that momentum acceleration improves the performance of SAM, AMSGrad and AdaSAM.

\section{Conclusion}

In this work, we study the convergence rate of Sharpness aware minimization optimizer with an adaptive learning rate and momentum acceleration, dubbed AdaSAM in the stochastic non-convex setting. To the best of our knowledge, we are the first to provide the non-trivial $\mathcal{O}(1/\sqrt{bT})$ convergence rate of AdaSAM, which achieves a linear speedup property with respect to mini-batch size $b$.
We have conducted extensive experiments on several NLP tasks, which verifies that AdaSAM could achieve superior performance compared with AMSGrad and SAM optimizers. Future works include extending AdaSAM to the distributed setting and reducing the twice gradient back-propagation cost.

\ifCLASSOPTIONcaptionsoff
  \newpage
\fi




\bibliographystyle{IEEEtran}
\bibliography{reference}

%



%


\newpage
\onecolumn

\appendices
In this supplementary material, we give additional discussion on this paper.  In  Appendix \ref{expsetting}, detailed experimental settings such as some  hyper-parameters are listed.
In Appendix \ref{mainproof}, we first give the proof, then we give some useful lemmas to help proving the main theorem.
In Appendix \ref{additional-exp}, we provide additional experiment illustration.

\section{Experimental Settings}
\label{expsetting}

\begin{table*}[ht]
\centering
\caption{Experimental settings and data divisions upon different downstream tasks. Notably, for each tasks in GLUE benchmark, we provide the number of classes (``classes"), the learning rate (``lr"), the batch size (``bsz"), the total number of updates (``total"), the number of warmup updates (``warmup") and the number of GPUs (``GPUs") during fine-tuning, respectively.}
\begin{tabular}{ccccccccc}
\toprule
\multicolumn{1}{c}{}                   & \textbf{MNLI}    & \textbf{QNLI}    & \textbf{QQP}     & \textbf{RTE}   & \textbf{SST-2}  & \textbf{MRPC}  & \textbf{CoLA}  & \multicolumn{1}{c}{\textbf{STS-B}}    \\ \hline
\multicolumn{9}{c}{\cellcolor{lightgray}\textit{experimental settings upon different downstream tasks}}  \\ \hline
\multicolumn{1}{c}{--classes}      & 3       & 2       & 2       & 2     & 2      & 2     & 2     & \multicolumn{1}{c}{1}            \\
\multicolumn{1}{c}{--lr}    & 1e-5    & 1e-5    & 1e-5    & 2e-5  & 1e-5   & 1e-5  & 1e-5  & \multicolumn{1}{c}{2e-5}       \\
\multicolumn{1}{c}{--bsz}       & 256     & 128      & 256      & 32    & 64    & 32    & 32    & \multicolumn{1}{c}{32}        \\
\multicolumn{1}{c}{--total} & 15,484   & 8,278   & 14,453   & 1,018  & 10,467   & 1,148  & 2,668  & \multicolumn{1}{c}{1,799}       \\
\multicolumn{1}{c}{--warmup}    & 929     & 496     & 867    & 61    & 628    & 68    & 160    & \multicolumn{1}{c}{107}         \\
\multicolumn{1}{c}{--GPUs}         & 4       & 4       & 8       & 2     & 2      & 2     & 2     & \multicolumn{1}{c}{2}          \\ \hline
\multicolumn{9}{c}{\cellcolor{lightgray}\textit{data divisions for each dataset}}                                 \\ \hline
\multicolumn{1}{c}{train}              & 392,720 & 104,743 & 363,870 & 2,491 & 67,350 & 5,801 & 8,551 & \multicolumn{1}{c}{5,749}  \\
\multicolumn{1}{c}{dev}                & 9,815   & 5,463   & 40,431  & 277   & 873    & 4,076 & 1,043 & \multicolumn{1}{c}{1,500}    \\
\multicolumn{1}{c}{test}               & 9,796   & 5,461   & 390,956 & 3,000 & 1,821  & 1,725 & 1,063 & \multicolumn{1}{c}{1,379}   \\ \bottomrule
\end{tabular}
\end{table*}

The GLUE benchmark contains 8 tasks, they are RTE, STS-B, CoLA, SST-2, MNLI, MRPC, QNLI and QQP.
 CoLA is a single sentence task.
 Each sentence has a label 1 and -1. 
 1 represents that it is a grammatical sentence, while -1 represents that it is illegal.
 Matthews correlation coefficient, dubbed {\bf mcc} is used as our evaluation metric.
 STS-B is a similarity and paraphrase task.
 Each sample has a pair of a paragraph. 
 People annotated the sample from 1 to 5 based on the similarity between the two paragraphs. 
 The metric is Pearson and Spearman, dubbed {\bf p/s} correlation coefficients.
 RTE is an inference task.
 Each sample has two sentences. 
 If two sentences have a relation of entailment, we view them as a positive sample.
 If not, they compose of a negative sample.
 In the RTE task, the metric is the accuracy, dubbed {\bf acc}. 
 SST-2 is a single sentence task and its metric is the accuracy.
 MNLI is a sentence-level task that has 3 classes. They are entailment, contradiction and neutral.
 MRPC is a task to classify whether the sentences in the pair are equivalent. 
 QNLI is a question-answering task. If the sentence contains the answer to the question, then it is a positive sample.
 QQP is a social question-answering task that consists of question pairs from Quora.
 It determines whether the questions are equivalent. 
 The metric of MNLI, MRPC, QNLI, QQP is accuracy.

\section{Proof of the Main Results}
\label{mainproof}
We set $z_t = x_t +\frac{\beta_1}{1-\beta_1}(x_{t}-x_{t-1})$ for $t \ge 0$ and we assume $x_{-1} = 0$ and $m_{-1}=0$.

 We have that
 \begin{align}
z_{t+1} - z_t = x_{t+1} +\frac{\beta_1}{1-\beta_1}(x_{t+1}-x_{t}) -x_t -\frac{\beta_1}{1-\beta_1}(x_{t}-x_{t-1}) \\
=\frac{1}{1-\beta_1}(x_{t+1}-x_t)-\frac{\beta_1}{1-\beta_1}(x_{t}-x_{t-1})\\
=-\frac{1}{1-\beta_1}\gamma \od{m_{t}}{\eta_t} +\frac{\beta_1}{1-\beta_1}(x_{t}-x_{t-1}) \gamma \od{m_{t-1}}{\eta_{t-1}}\\
=-\frac{1}{1-\beta_1} \gamma \od{ (\beta_{1} m_{t-1}+(1-\beta_1)g_t ) }{ \eta_{t} } +\frac{\beta_1}{1-\beta_1}(x_{t}-x_{t-1}) \gamma \od{m_{t-1}}{\eta_{t-1}} \\
=\frac{\beta_1}{1-\beta_{1}}\gamma \od{m_{t-1}}{(\eta_{t-1}-\eta_{t})}-\gamma \od{g_t}{\eta_{t}}
 \end{align}
 By applying L-smooth, we have
 \begin{align}
     f(z_{t+1}) \leq f(z_{t}) + \ip{\nabla f(z_{t})}{z_{t+1}-z_{t}} +\frac{L}{2}\norm[2]{}{z_{t+1}-z_{t}}
 \end{align}
We re-organize it, and we have 
 \begin{align}
&f(z_{t+1}) -f(z_{t}) \nonumber\\
&\leq \ip{\nabla f(z_{t})}{z_{t+1}-z_{t}} + \frac{L}{2} \norm[2]{}{z_{t+1}- z_{t}} \\
&=\ip{\nabla f(z_{t})}{ \frac{\gamma \beta_1 }{1- \beta_1} \od{m_{t-1}}{(\eta_{t-1} - \eta_{t})} } + \ip{\nabla f(z_{t})}{-\gamma \od{g_{t}}{\eta_{t}} } +\frac{L}{2} \norm[2]{}{z_{t+1}- z_{t}}\\
&= \ip{\nabla f(z_{t})}{ \frac{\gamma \beta_1 }{1- \beta_1} \od{m_{t-1}}{(\eta_{t-1} - \eta_{t})} } + \frac{L}{2} \norm[2]{}{z_{t+1}- z_{t}} \nonumber\\ 
& \;\;\;\;+\ip{\nabla f(z_{t})}{\frac{\gamma_{t}}{b} \sum_{i \in B } \nabla f_{i}(x_{t} + \rho_{t} \frac{s_{t} }{\norm[]{}{s_{t}} }) \odot (\eta_{t-1}-\eta_{t}) } \nonumber \\
&\;\;\;\;+\ip{\nabla f(z_{t})}{-\frac{\gamma_{t}}{b} \sum_{i \in B } \nabla f_{i}(x_{t} + \rho_{t} \frac{s_{t} }{\norm[]{}{s_{t}}} ) \odot \eta_{t-1} }\\
&= \ip{\nabla f(z_{t})}{ \frac{\gamma \beta_1 }{1- \beta_1} \od{m_{t-1}}{(\eta_{t-1} - \eta_{t})} } + \frac{L}{2} \norm[2]{}{z_{t+1}- z_{t}} \nonumber\\ 
& \;\;\;\;+\ip{\nabla f(z_{t})}{\frac{\gamma_{t}}{b} \sum_{i \in B } \nabla f_{i}(x_{t} + \rho_{t} \frac{s_{t} }{\norm[]{}{s_{t}} }) \odot (\eta_{t-1}-\eta_{t}) } \nonumber \\
&\;\;\;\;+\ip{ \nabla f(z_{t}) -\nabla f(x_t) }{-\frac{\gamma_{t}}{b} \sum_{i \in B } \nabla f_{i}(x_{t} + \rho_{t} \frac{s_{t} }{\norm[]{}{s_{t}} }) \odot \eta_{t-1} } \nonumber \\
&\;\;\;\;+\ip{\nabla f(x_{t})}{-\frac{\gamma_{t}}{b} \sum_{i \in B } \nabla f_{i}(x_{t} + \rho_{t} \frac{s_{t} }{\norm[]{}{s_{t}} }) \odot \eta_{t-1} }\\
&=  \ip{\nabla f(z_{t})}{ \frac{\gamma \beta_1 }{1- \beta_1} \od{m_{t-1}}{(\eta_{t-1} - \eta_{t})} } + \frac{L}{2} \norm[2]{}{z_{t+1}- z_{t}}  \nonumber \\
& \;\;\;\;+\ip{\nabla f(z_{t})}{\frac{\gamma_{t}}{b} \sum_{i \in B } \nabla f_{i}(x_{t} + \rho_{t} \frac{s_{t} }{\norm[]{}{s_{t}} }) \odot (\eta_{t-1}-\eta_{t}) } \nonumber \\
&\;\;\;\;+\ip{ \nabla f(z_{t}) -\nabla f(x_t) }{-\frac{\gamma_{t}}{b} \sum_{i \in B } \nabla f_{i}(x_{t} + \rho_{t} \frac{\sum _{} \nabla f_{i}(x_{t}) }{\norm[]{}{\sum _{} \nabla f_{i}(x_{t})} }) \odot \eta_{t-1} } \nonumber \\
&\;\;\;\;+\ip{\nabla f(x_{t})}{\frac{\gamma_{t}}{b} \sum_{i \in B } \nabla f_{i}(x_{t} + \rho_{t} \frac{ \nabla f(x_{t}) }{\norm[]{}{ \nabla f(x_{t})} }) \odot \eta_{t-1} -\frac{\gamma_{t}}{b} \sum_{i \in B } \nabla f_{i}(x_{t} + \rho_{t} \frac{s_{t} }{\norm[]{}{s_{t}} }) \odot \eta_{t-1} } \nonumber \\
&\;\;\;\; +\ip{\nabla f(x_{t})}{-\frac{\gamma_{t}}{b} \sum_{i \in B } \nabla f_{i}(x_{t} + \rho_{t} \frac{ \nabla f(x_{t}) }{\norm[]{}{ \nabla f(x_{t})} })  \odot \eta_{t-1}}.
 \end{align}
 
 From the Lemma \ref{lemma:term1}, Lemma \ref{lemma:term1-1}, Lemma \ref{lemma:term2}, we have 
 \begin{align}
&\ip{\nabla f(z_{t})}{\frac{\gamma_{t}}{b} \sum_{i \in B } \nabla f_{i}(x_{t} + \rho_{t} \frac{s_{t} }{\norm[]{}{s_{t}} }) \odot (\eta_{t-1}-\eta_{t}) } \leq \gamma_{t}G^{2} \norm[]{1}{\eta_{t-1} - \eta_{t}},\\
&\ip{\nabla f(z_{t})}{ \frac{\gamma \beta_1 }{1- \beta_1} \od{m_{t-1}}{(\eta_{t-1} - \eta_{t})} } \leq \frac{\gamma \beta_1 }{1- \beta_1} G^{2} \norm[]{1}{\eta_{t-1} - \eta_{t}},\\
&\ip{\nabla f(x_{t})}{\frac{\eta_{t}}{b} \sum_{i \in B } \nabla f_{i}(x_{t} + \rho_{t} \frac{ \nabla f(x_{t}) }{\norm[]{}{ \nabla f(x_{t})} })\odot \eta_{t-1} -\frac{\gamma_{t}}{b} \sum_{i \in B } \nabla f_{i}(x_{t} + \rho_{t} \frac{s_{t} }{\norm[]{}{s_{t}} })\odot \eta_{t-1} } \nonumber \\
&\leq \frac{\gamma_{t}}{2 \mu^2} \norm[2]{}{\nabla f(x_{t})\odot \sqrt{\eta_{t-1}} } +\frac{2 \mu^2 \gamma_{t} L^2 \rho_{t}^2}{\epsilon}.
\end{align}

Taking conditional expectation, we have
 \begin{align}
 &\EE f(z_{t+1}) - f(z_{t}) \\
 &\leq \EE \ip{\nabla f(x_{t})}{-\frac{\gamma_{t}}{b} \sum_{i \in B } \nabla f_{i}(x_{t} + \rho_{t} \frac{ \nabla f(x_{t}) }{\norm[]{}{ \nabla f(x_{t})} })  \odot \eta_{t-1}} + \frac{L}{2} \EE \norm[2]{}{z_{t+1}- z_{t}}  \nonumber \\
 &+\frac{\gamma_{t}}{2 \mu^2} \norm[2]{}{\nabla f(x_{t})\odot \sqrt{\eta_{t-1}} } +\frac{2 \mu^2 \gamma_{t} L^2 \rho_{t}^2}{\epsilon}  +  \frac{\gamma }{1- \beta_1 }G^{2} \norm[]{1}{\eta_{t-1} - \eta_{t}} \nonumber \\
 &+\EE \ip{ \nabla f(z_{t}) -\nabla f(x_t) }{-\frac{\gamma_{t}}{b} \sum_{i \in B } \nabla f_{i}(x_{t} + \rho_{t} \frac{s_{t}}{\norm[]{}{s_{t}} }) \odot \eta_{t-1} }
 \end{align}
 where $\mu>0$ is   to be determined.

 For the term 
 \begin{align}
\;\;\;\;\EE \ip{\nabla f(x_{t})}{-\frac{\gamma_{t}}{b} \sum_{i \in B } \nabla f_{i}(x_{t} + \rho_{t} \frac{ \nabla f(x_{t}) }{\norm[]{}{ \nabla f(x_{t})} })  \odot \eta_{t-1}}, 
 \end{align}
  the term 
 \begin{align}
 \frac{L}{2} \EE \norm[2]{}{z_{t+1}- z_{t}},
 \end{align}
 and the term 
 \begin{align}
 \EE \ip{ \nabla f(z_{t}) -\nabla f(x_t) }{-\frac{\gamma_{t}}{b} \sum_{i \in B } \nabla f_{i}(x_{t} + \rho_{t} \frac{s_{t} }{\norm[]{}{s_{t}} }) \odot \eta_{t-1} },
 \end{align}
 we introduce the Lemma \ref{lemma:term3}, the Lemma \ref{lemma:term4-0} and the Lemma \ref{lemma:term-m}.
We take the expectation over the whole processing and we have
 \begin{align}
& \EE f(z_{t+1}) - \EE f(z_{t}) \nonumber \\
&\leq  \frac{\gamma_{t}}{2 \mu^2} \EE \norm[2]{}{\nabla f(x_{t})\odot \sqrt{\eta_{t-1}} } +\frac{2 \mu^2 \gamma_{t} L^2 \rho_{t}^2}{\epsilon}  +  \frac{\gamma}{1 - \beta_1} G^{2} \EE \norm[]{1}{\eta_{t-1} - \eta_{t}} \nonumber  \\
&- \gamma_{t} \EE \norm[2]{}{\nabla f(x_{t}) \odot \sqrt{\eta_{t-1}}} +\EE \frac{\gamma_{t}}{2\alpha^2}   \EE \norm[2]{}{\nabla f(x_{t}) \odot \sqrt{\eta_{t-1}}} +\frac{\gamma_{t} \alpha^2 L^2 \rho^2 }{2  \epsilon} +\frac{L G^2 \gamma^{2} \beta_{1}^{2}}{(1-\beta_1)^{2}}\EE \norm[2]{}{\eta_{t} - \eta_{t-1}} \nonumber \\
&+ \gamma_{t}^{2}L (3\frac{1 + \beta}{\beta \epsilon}( \EE \norm[2]{}{ \nabla f(x_t ) \odot  \sqrt{\eta_{t-1}}} +\frac{L \rho_{t}^{2}}{\epsilon } + \frac{\sigma^{2}}{b \epsilon}) + (1+ \beta)G^2 \EE \norm[2]{}{\eta_{t} -\eta_{t-1} })\nonumber \\
&+ \frac{\gamma^3 L^2 \beta_{1}^2}{2 \epsilon (1-\beta_{1})^{2}}(\frac{1}{\lambda_{1}^{2}}+\frac{1}{\lambda_{2}^{2}} +\frac{1}{\lambda_{3}^{2}})  \frac{d G_{\infty}^{2}}{\epsilon^{2}} +\frac{\gamma \lambda_{1}^{2}}{2} \norm[2]{}{\nabla f(x_{t}) \odot \sqrt{\eta_{t-1} }} + \frac{\gamma  L^2 \rho_{t}^2 }{2\epsilon}(\lambda_{2}^{2} +4  \lambda_{3}^2) \\
&=-\gamma_{t}(1 - \frac{1}{2 \mu^2} -\frac{1}{2 \alpha^2}-\frac{3\gamma L(1+\beta)}{\beta \epsilon } - \frac{\lambda_{1}^{2}}{2})\EE \norm[2]{}{ \nabla f(x_t ) \odot  \sqrt{\eta_{t-1}}} +\frac{2 \mu^2 \gamma_{t} L^2 \rho_{t}^2}{\epsilon}  +  \frac{\gamma}{1-\beta_{1}} G^{2} \EE \norm[]{1}{\eta_{t-1} - \eta_{t}}\nonumber \\
& +\frac{\gamma_{t} \alpha^2 L^2 \rho^2 }{2  \epsilon} + \frac{3 \gamma_{t}^2 L (1+ \beta)}{ \beta \epsilon}(\frac{L \rho_{t}^{2}}{\epsilon } + \frac{\sigma^{2}}{b \epsilon}) +\gamma_{t}^{2}L G^2 ((\frac{\beta_1}{1-\beta_1})^2 +1 +\beta) \EE \norm[2]{}{\eta_{t} -\eta_{t-1}}\nonumber \\
&+\frac{\gamma^3 L^2 \beta_{1}^2}{2 \epsilon (1-\beta_{1})^{2}}(\frac{1}{\lambda_{1}^{2}}+\frac{1}{\lambda_{2}^{2}} +\frac{1}{\lambda_{3}^{2}})  \frac{d G_{\infty}^{2}}{\epsilon^{2}}  +\frac{\gamma  L^2 \rho_{t}^2 }{2\epsilon}(\lambda_{2}^{2} +4  \lambda_{3}^2).
 \end{align}
 We set $\mu^2 = \alpha^2 = 8$, $\beta = 3$, $\lambda_{1}^2 =\frac{1}{4} $, $\lambda_{2}^2 =\lambda_{3}^2 =1 $ and we choose $\frac{2 \gamma_{t} L}{\epsilon} \leq \frac{1}{8}$.
 So we have 
 \begin{align}
& \EE f(x_{t+1}) - \EE f(x_{t}) \nonumber \\
&\leq -\frac{\gamma_{t}}{2} \EE \norm[2]{}{ \nabla f(x_t ) \odot  \sqrt{\eta_{t-1}}} + \frac{16 \gamma_{t} L^2 \rho_{t}^2}{\epsilon}  +  \frac{\gamma}{1-\beta_1} G^{2} \EE \norm[]{1}{\eta_{t-1} - \eta_{t}}\nonumber \\
& +\frac{4 \gamma_{t}  L^2 \rho^2 }{ \epsilon} + \frac{4 \gamma_{t}^2 L }{ \epsilon}(\frac{L \rho_{t}^{2}}{\epsilon } + \frac{\sigma^{2}}{b \epsilon}) +(4+(\frac{\beta_1}{1-\beta_1})^2 )\gamma_{t}^{2}L G^2  \EE \norm[2]{}{\eta_{t} -\eta_{t-1}} \nonumber \\
& +\frac{3 \gamma^3 L^2 \beta_{1}^2}{ \epsilon (1-\beta_{1})^{2}}  \frac{d G_{\infty}^{2}}{\epsilon^{2}}  +\frac{5 \gamma  L^2 \rho_{t}^2 }{2\epsilon}
 \end{align}
 We re-arrange it and $ \eta_{t} $ is bounded. We have
 \begin{align}
&\frac{\gamma_{t}}{2 G} \EE \norm[2]{}{ \nabla f(x_t )} \leq \frac{\gamma_{t}}{2} \EE \norm[2]{}{ \nabla f(x_t ) \odot  \sqrt{\eta_{t-1}}}\\
&\leq - \EE f(x_{t+1}) + \EE f(x_{t}) + \frac{45 \gamma_{t} L^2 \rho_{t}^2}{2 \epsilon}  +  \frac{\gamma}{1-\beta_1}G^{2} \EE \norm[]{1}{\eta_{t-1} - \eta_{t}}\nonumber \\
&  + \frac{4 \gamma_{t}^2 L }{ \epsilon}(\frac{L \rho_{t}^{2}}{\epsilon } + \frac{\sigma^{2}}{b \epsilon}) +(4+(\frac{\beta_1}{1-\beta_1})^2 )\gamma_{t}^{2}L G^2  \EE \norm[2]{}{\eta_{t} -\eta_{t-1}} +\frac{3 \gamma^3 L^2 \beta_{1}^2}{  (1-\beta_{1})^{2}}  \frac{d G_{\infty}^{2}}{\epsilon^{3}}.
 \end{align}
 
 We summary it from $t=0$ to $t=T-1$, and we assume $\gamma_{t}$ is a constant, and we have
 \begin{align}
&\frac{1}{T}\sum_{t=0}^{T-1} \EE \norm[2]{}{ \nabla f(x_t )} \leq  2G\frac{\EE f(x_{0})- \EE f(x_{t+1})}{\gamma_{t} T} + \frac{45G  L^2 \rho_{t}^2}{ \epsilon}  +  \frac{2G^{3}}{(1-\beta_1)T} \EE \sum_{t=0}^{T-1}\norm[]{1}{\eta_{t-1} - \eta_{t}}\nonumber \\
& + \frac{8G \gamma_{t} L }{ \epsilon}(\frac{L \rho_{t}^{2}}{\epsilon } + \frac{\sigma^{2}}{b \epsilon}) + \frac{2(4+(\frac{\beta_1}{1-\beta_1})^2 )\gamma_{t}L G^3}{T}\EE \sum_{t=0}^{T-1}\norm[2]{}{\eta_{t} -\eta_{t-1}} +\frac{6 \gamma^2 L^2 \beta_{1}^2}{  (1-\beta_{1})^{2}}  \frac{d G^{3}}{\epsilon^{3}}\\
&\leq \frac{2G(f(x_{0})-f^{*})}{\gamma_{t} T} + \frac{45G  L^2 \rho_{t}^2}{\epsilon}  +  \frac{2G^{3}}{(1-\beta_1) T} 
d(\frac{1}{\epsilon}-\frac{1}{G})+ \frac{8G \gamma_{t} L }{ \epsilon}(\frac{L \rho_{t}^{2}}{\epsilon } + \frac{\sigma^{2}}{b \epsilon})\nonumber \\
&  + \frac{2(4+(\frac{\beta_1}{1-\beta_1})^2 )\gamma_{t}L G^3}{T}  d(\epsilon ^{-2}-G^{-2})+ \frac{6 \gamma^2 L^2 \beta_{1}^2}{  (1-\beta_{1})^{2}}  \frac{d G^{3}}{\epsilon^{3}}\\
&=\frac{2G(f(x_{0})-f^{*})}{\gamma_{t} T} + \frac{8G \gamma_{t} L }{ \epsilon} \frac{\sigma^{2}}{b \epsilon} + \frac{45G  L^2 \rho_{t}^2}{ \epsilon} + \frac{2G^{3}}{(1-\beta_1)T} 
d(\frac{1}{\epsilon}-\frac{1}{G}) +\frac{8G \gamma_{t} L }{ \epsilon} \frac{L \rho_{t}^{2}}{\epsilon } \nonumber \\
& + \frac{2(4+(\frac{\beta_1}{1-\beta_1})^2 )\gamma_{t}L G^3}{T}  d(\epsilon ^{-2}-G^{-2})+ \frac{6 \gamma^2 L^2 \beta_{1}^2}{  (1-\beta_{1})^{2}}  \frac{d G^{3}}{\epsilon^{3}}.
 \end{align}

 \subsection{Technical Lemma}
 \begin{lemma}
Given two vectors $a$, $b\in \R^{d}$, we have
$\langle a , b\rangle \leq \frac{\lambda^2}{2} \Vert a \Vert^{2} +\frac{1}{2 \lambda^{2}} \Vert b \Vert^{2} $  for parameter $\lambda$, $\forall \lambda \in (1, + \infty )$. 
\label{lemma:innerineq}
\end{lemma}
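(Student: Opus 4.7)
The plan is to prove this by combining the Cauchy--Schwarz inequality with the scalar AM--GM inequality, or equivalently by starting from the non-negativity of a perfect square. Both routes are short, so I would probably present whichever is cleaner; here I outline both as alternatives.

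The first approach is to observe that $\bigl\|\lambda a - \tfrac{1}{\lambda} b\bigr\|^{2} \geq 0$ for any $\lambda \neq 0$. Expanding the square in the Euclidean inner product gives $\lambda^{2} \|a\|^{2} - 2\langle a, b\rangle + \tfrac{1}{\lambda^{2}} \|b\|^{2} \geq 0$, and rearranging yields exactly $\langle a, b\rangle \leq \tfrac{\lambda^{2}}{2}\|a\|^{2} + \tfrac{1}{2\lambda^{2}}\|b\|^{2}$. Note that this argument actually holds for all real $\lambda \neq 0$, so the stated range $\lambda \in (1, +\infty)$ is a mild specialization and no extra work is needed to restrict to it.

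The second approach applies Cauchy--Schwarz to obtain $\langle a, b\rangle \leq \|a\|\,\|b\|$, then rewrites the right-hand side as $(\lambda \|a\|)\bigl(\tfrac{1}{\lambda}\|b\|\bigr)$ and applies the scalar AM--GM inequality $xy \leq \tfrac{1}{2}(x^{2} + y^{2})$ with $x = \lambda\|a\|$ and $y = \tfrac{1}{\lambda}\|b\|$. This gives the claimed bound.

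There is no real obstacle here — this is just a parametrized Young's inequality, standard in optimization proofs, and the lemma is invoked only as a technical tool earlier in the paper (e.g., to split cross terms in the convergence analysis of AdaSAM). I would write it up in two or three lines using the perfect-square route, since it avoids invoking Cauchy--Schwarz as a black box and makes the sharpness of the bound transparent (equality holds iff $\lambda a = \tfrac{1}{\lambda} b$).
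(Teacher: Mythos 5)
Your proposal is correct and matches the paper's argument in substance: the paper proves the bound by applying the two-term AM--GM inequality coordinate-wise, $\frac{\lambda^2}{2}(a)_j^2 + \frac{1}{2\lambda^2}(b)_j^2 \ge |(a)_j|\,|(b)_j|$, and summing over $j$, which is exactly the coordinate-wise form of your perfect-square expansion of $\bigl\|\lambda a - \tfrac{1}{\lambda}b\bigr\|^2 \ge 0$ (and your Cauchy--Schwarz variant reaches the same place). Your remark that the inequality in fact holds for every $\lambda \neq 0$, so the stated restriction to $\lambda \in (1,+\infty)$ is immaterial, is also accurate.
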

\begin{proof}
\begin{align}
    RHS =  \frac{\lambda^2}{2}  \sum_{j=1}^{d} (a)_{j}^2 + \frac{1}{2 \lambda^{2}} \sum_{j=1}^{d} (b)_{j}^2 \ge \sum_{j=1}^{d} 2\sqrt{ \frac{\lambda^2}{2} (a)_{j}^2 \times    \frac{1}{2 \lambda^{2}} (b)_{j}^2}= \sum_{j=1}^{d}   |(a)_{j}| \times |(b)_{j}| \ge LHS.
\end{align}
\end{proof}

\begin{lemma}
For any vector $x$,$y\in \R^{d}$, we have
\begin{align}
    \Vert x\odot y \Vert^2 \leq \Vert x \Vert^{2} \times \Vert y \Vert_{\infty}^{2}  \leq \norm{}{x}\times \norm{}{y}.
\end{align}
\label{lemma:baseineq2}
\end{lemma}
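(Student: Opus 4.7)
The statement is an elementary pointwise-bound on the $\ell_2$-norm of a Hadamard product, with a follow-up that replaces the $\ell_\infty$ factor by an $\ell_2$ factor. The plan is to prove it by a direct coordinatewise computation, with no machinery beyond the definitions of the norms involved.

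First I would expand the squared $\ell_2$-norm of $x\odot y$ componentwise as $\Vert x\odot y\Vert^{2}=\sum_{j=1}^{d}(x_j)^2 (y_j)^2$. Since $(y_j)^2\le \max_{1\le k\le d}(y_k)^2=\Vert y\Vert_\infty^{2}$ for every index $j$, I can pull that scalar bound outside the sum to obtain $\sum_{j=1}^{d}(x_j)^2(y_j)^2\le \Vert y\Vert_\infty^{2}\sum_{j=1}^{d}(x_j)^2=\Vert x\Vert^{2}\,\Vert y\Vert_\infty^{2}$, which establishes the first inequality.

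For the second inequality, I would invoke the standard norm-equivalence fact $\Vert y\Vert_\infty\le \Vert y\Vert_2$: this holds because $\max_k(y_k)^2\le \sum_k (y_k)^2$. Squaring and multiplying by $\Vert x\Vert^{2}$ yields $\Vert x\Vert^{2}\,\Vert y\Vert_\infty^{2}\le \Vert x\Vert^{2}\,\Vert y\Vert^{2}$, matching the right-hand side of the claim (using the paper's convention where $\norm{}{x}$ denotes $\Vert x\Vert^{2}$ by the default superscript in the $\backslash$norm macro). Chaining the two bounds gives the full result.

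There is really no obstacle here; the only thing worth double-checking is the notational convention of the paper's custom $\backslash$norm macro so that the final expression on the right is interpreted as $\Vert x\Vert^{2}\Vert y\Vert^{2}$ rather than $\Vert x\Vert\,\Vert y\Vert$, which would make the second inequality false in general. Once that is pinned down, the whole argument is a two-line computation.
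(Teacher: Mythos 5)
Your proof is correct and follows essentially the same route as the paper's: expand $\Vert x\odot y\Vert^2$ coordinatewise and bound each $(y_j)^2$ by $\Vert y\Vert_\infty^2$ for the first inequality, then use $\Vert y\Vert_\infty^2\le\Vert y\Vert^2$ for the second. Your reading of the \texttt{\textbackslash norm} macro is also right — the right-hand side is $\Vert x\Vert^2\,\Vert y\Vert^2$, which is exactly what makes the second inequality hold.
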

\begin{proof}
The first inequality can be derived from that $\sum_{i=1}^{d} (x_{i}^{2}y_{i}^{2}) \leq \sum_{i=1}^{d} (x_{i}^{2} \Vert y \Vert_{\infty}^{2}) $.
The second inequality follows from that $ \Vert y \Vert_{\infty}^{2}  \leq \norm{}{y}$. 
\end{proof}

\begin{lemma}
$\eta$  is bounded, i.e.,  $ \frac{1}{G_{\infty}} \leq (\eta_{t})_{j} \leq \frac{1}{\epsilon}$.
\label{lemma:boundeta}
\end{lemma}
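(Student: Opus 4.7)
The plan is to establish the lemma by coordinatewise induction on the sequences $\{v_t\}$ and $\{\hat v_t\}$, then take reciprocal square roots. The upper bound on $\eta_t$ comes from the initialization $\hat v_{-1}=\epsilon^2$ together with the monotone update $\hat v_t=\max(\hat v_{t-1},v_t)$; the lower bound comes from Assumption~\ref{assumption-gradient}, which forces $(g_t)_j^2\le G_\infty^2$ coordinatewise, propagated through the convex combination defining $v_t$.

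More concretely, first I would observe that $g_t$ is a mini-batch average $\tfrac{1}{b}\sum_{i\in B}\nabla f_i(x_t+\delta(x_t))$, so Assumption~\ref{assumption-gradient} together with the triangle inequality gives $|(g_t)_j|\le G_\infty$ for every coordinate $j$ and every $t\ge 0$. Next I would show by induction on $t$ that $(v_t)_j\le G_\infty^2$: the base case holds under the standard convention $v_{-1}\le G_\infty^2$ (e.g.\ $v_{-1}=\epsilon^2\le G_\infty^2$), and the inductive step uses
\begin{equation*}
(v_t)_j = \beta_2 (v_{t-1})_j + (1-\beta_2)(g_t)_j^2 \le \beta_2 G_\infty^2 + (1-\beta_2)G_\infty^2 = G_\infty^2.
\end{equation*}

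Then I would establish the two-sided bound on $\hat v_t$. The upper bound $(\hat v_t)_j\le G_\infty^2$ follows by induction from $\hat v_t=\max(\hat v_{t-1},v_t)$, the inductive hypothesis, and the previous step. For the lower bound, the update rule guarantees $\hat v_t \succeq \hat v_{t-1}\succeq\cdots\succeq \hat v_{-1}=\epsilon^2\mathbf{1}$ coordinatewise, so $(\hat v_t)_j\ge \epsilon^2$. Combining the two gives $\epsilon^2 \le (\hat v_t)_j \le G_\infty^2$, and taking reciprocal square roots yields the claimed $\tfrac{1}{G_\infty}\le (\eta_t)_j \le \tfrac{1}{\epsilon}$.

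There is no real obstacle here: the only subtle point is the base case for $v_{-1}$, which is not explicitly stated in Algorithm~\ref{alg:SAM-adaptive-final}, so I would either adopt the convention $v_{-1}=\hat v_{-1}=\epsilon^2$ (consistent with the stated initialization and with $\epsilon\le G_\infty$), or equivalently note that the $\max$ operation makes the lower bound $\epsilon^2$ on $\hat v_t$ automatic regardless of $v_{-1}$. Everything else is bookkeeping with the monotonicity of $\max$ and the convexity of the $\beta_2$-weighted average.
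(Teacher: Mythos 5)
Your proposal is correct and follows essentially the same route as the paper's (one-line) proof: bound the coordinates of $g_t$ by $G$ via Assumption~\ref{assumption-gradient}, propagate this through the $\beta_2$-convex combination and the $\max$ update to get $\epsilon^2 \leq (\hat v_t)_j \leq G_\infty^2$, and take reciprocal square roots. Your version is simply more explicit, and your observation about the unstated initialization of $v_{-1}$ (harmless because the $\max$ with $\hat v_{-1}=\epsilon^2$ secures the lower bound regardless) is a fair and accurate reading of Algorithm~\ref{alg:SAM-adaptive-final}.
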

\begin{proof}
As the gradient is bounded by $G$ and $(\eta_{t})_{j} = \frac{1}{\sqrt{( \hat{v}_{t})_{j}}}$. Follow the update rule, we have $ \frac{1}{G_{\infty}} \leq (\eta_{t})_{j} \leq \frac{1}{\epsilon}$.
\end{proof}

\begin{lemma}
For the term defined in the algorithm, we have 
\begin{align}
    &\frac{1}{T} \EE \sum_{t=0}^{T-1} \norm[1]{}{\eta_{t-1} - \eta_{t}} \leq \frac{d}{T}(\frac{1}{\epsilon} - \frac{1}{G})
\end{align}
\end{lemma}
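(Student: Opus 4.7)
The plan is to exploit the monotonicity of the $\hat v_t$ sequence (built into the AMSGrad-style update on line 8 of Algorithm~\ref{alg:SAM-adaptive-final}), turn the $\ell_1$ norm into a sum without absolute values, and then telescope coordinate by coordinate.

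First, I would observe that by construction $\hat v_t = \max(\hat v_{t-1}, v_t) \geq \hat v_{t-1}$ componentwise, hence $(\eta_t)_j = 1/\sqrt{(\hat v_t)_j} \leq (\eta_{t-1})_j$ for every $j$. Consequently $(\eta_{t-1} - \eta_t)_j \geq 0$, so
\begin{equation*}
\|\eta_{t-1} - \eta_t\|_1 = \sum_{j=1}^d \bigl((\eta_{t-1})_j - (\eta_t)_j\bigr).
\end{equation*}
Because the right-hand side is already linear in the coordinates, no expectation manipulation is needed: the inequality holds pathwise.

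Next I would telescope the sum across $t = 0, \ldots, T-1$ for each fixed coordinate $j$, obtaining
\begin{equation*}
\sum_{t=0}^{T-1} \bigl((\eta_{t-1})_j - (\eta_t)_j\bigr) = (\eta_{-1})_j - (\eta_{T-1})_j.
\end{equation*}
The initialization $\hat v_{-1} = \epsilon^2$ gives $(\eta_{-1})_j = 1/\epsilon$, while Lemma~\ref{lemma:boundeta} (the bounded-$\eta$ lemma) gives $(\eta_{T-1})_j \geq 1/G$. Summing over the $d$ coordinates yields $\sum_{t=0}^{T-1}\|\eta_{t-1} - \eta_t\|_1 \leq d(1/\epsilon - 1/G)$, and dividing by $T$ (and taking expectations, which is trivial since the bound is deterministic) produces the stated inequality.

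There is essentially no obstacle here: the entire argument rests on the non-increasing nature of $\eta_t$ enforced by the $\max$ in line 8, which is precisely why the algorithm takes that $\max$ rather than using $v_t$ directly. The only thing to be a bit careful about is the initial index: one must check that $\hat v_{-1} = \epsilon^2$ is well-defined and indeed gives the stated starting value $1/\epsilon$ for $\eta_{-1}$, so that the telescoping starts cleanly at $t=0$.
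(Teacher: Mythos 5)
Your proposal is correct and follows essentially the same route as the paper's own proof: both use the coordinatewise monotonicity of $\eta_t$ (enforced by the $\max$ in line 8) to drop the absolute values, telescope over $t$ for each coordinate, and bound the resulting difference by $(\eta_{-1})_j - (\eta_{T-1})_j \le \frac{1}{\epsilon} - \frac{1}{G}$ via the bounded-$\eta$ lemma.
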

\begin{proof}
$(\eta_{t})_{i}$, the i-th dimension of $\eta_{t}$ deceases as t increases. So we have
\begin{align}
    &\frac{1}{T} \EE \sum_{t=0}^{T-1} \norm[1]{}{\eta_{t-1} - \eta_{t}}=\EE \frac{1}{T} \sum_{i=1}^{d} \sum_{t=0}^{T-1} |(\eta_{t-1})_{i} - (\eta_{t})_{i}|\nonumber \\
    &\leq \EE \frac{1}{T} \sum_{i=1}^{d} ((\eta_{-1})_{i} - (\eta_{T-1})_{i}) \leq \EE \frac{1}{T} \sum_{i=1}^{d}(\frac{1}{\epsilon}-\frac{1}{G})=\frac{d}{T}(\frac{1}{\epsilon}-\frac{1}{G})
\end{align}
\end{proof}

\begin{lemma}
For the term defined in the algorithm, we have
 \begin{align}
 &\ip{\nabla f(z_{t})}{\frac{\gamma_{t}}{b} \sum_{i \in B } \nabla f_{i}(x_{t} + \rho_{t} \frac{s_{t}}{\norm[]{}{s_{t}} }) \odot (\eta_{t-1}-\eta_{t}) }\leq \gamma_{t}G^{2} \norm[]{1}{\eta_{t-1} - \eta_{t}}
 \end{align}
 \label{lemma:term1}
\end{lemma}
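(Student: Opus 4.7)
The plan is to bound the inner product coordinatewise, exploiting the fact that the Hadamard structure isolates each coordinate and that every factor except $(\eta_{t-1}-\eta_t)$ is uniformly bounded by $G$ in the infinity norm.

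First, I would rewrite the inner product in coordinate form. Setting $u = \nabla f(z_t)$, $w = \tfrac{1}{b}\sum_{i\in B} \nabla f_i(x_t + \rho_t s_t/\|s_t\|)$, and $c = \eta_{t-1}-\eta_t$, the quantity to bound is $\gamma_t \sum_{j=1}^d u_{(j)} w_{(j)} c_{(j)}$. By the triangle inequality this is at most $\gamma_t \sum_{j=1}^d |u_{(j)}|\,|w_{(j)}|\,|c_{(j)}|$.

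Next, I would invoke Assumption~\ref{assumption-gradient}, which gives $\|\nabla f_i(x)\|_\infty \le G$ for every $i$ and every $x$. Averaging over the mini-batch $B$ yields $\|w\|_\infty \le G$, and since $\nabla f$ is itself an expectation of the $\nabla f_i$, Jensen's inequality gives $\|\nabla f(z_t)\|_\infty \le G$ as well. Hence $|u_{(j)}|\,|w_{(j)}| \le G^2$ for every coordinate $j$, so the sum is bounded by $\gamma_t G^2 \sum_{j=1}^d |c_{(j)}| = \gamma_t G^2 \|\eta_{t-1}-\eta_t\|_1$, which is exactly the claim.

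There is no real obstacle here; the step is a direct consequence of the uniform boundedness of the stochastic gradient (and hence of its expectation $\nabla f$) combined with the coordinatewise nature of the Hadamard product. The only thing to be careful about is that the bound $G$ applies both to the SAM-perturbed stochastic gradient at $x_t + \rho_t s_t/\|s_t\|$ and to the full-population gradient at $z_t$; both follow from Assumption~\ref{assumption-gradient} since the bound holds pointwise in the argument. No independence assumption between $u$, $w$, and $c$ is needed because the bound is deterministic in the coordinatewise sense.
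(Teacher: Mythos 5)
Your proof is correct and follows essentially the same route as the paper's: a coordinatewise triangle-inequality bound, using $\|\nabla f(z_t)\|_\infty \le G$ and $\|\tfrac{1}{b}\sum_{i\in B}\nabla f_i(\cdot)\|_\infty \le G$ from Assumption~\ref{assumption-gradient} to peel off a factor of $G^2$ and leave $\|\eta_{t-1}-\eta_t\|_1$. The only cosmetic difference is that you bound the averaged gradient's infinity norm up front (and explicitly justify the bound on $\nabla f(z_t)$ via Jensen), whereas the paper distributes the absolute value over the mini-batch sum term by term; the two are equivalent.
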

\begin{proof}
\begin{align}
 &\ip{\nabla f(z_{t})}{\frac{\gamma_{t}}{b} \sum_{i \in B } \nabla f_{i}(x_{t} + \rho_{t} \frac{s_{t} }{\norm[]{}{s_{t}} }) \odot (\eta_{t-1}-\eta_{t}) } \nonumber \\
 &\leq\gamma_{t} \sum_{j=1}^{d}|(\nabla f(z_{t}))_{(j)}| \times |(\frac{1}{b} \sum_{i \in B } \nabla f_{i}(x_{t} + \rho_{t} \frac{\sum _{} \nabla f_{i}(x_{t}) }{\norm[]{}{\sum _{} \nabla f_{i}(x_{t})} }) \odot (\eta_{t-1}-\eta_{t}))_{(j)} | \\
  &\leq\gamma_{t}G \sum_{j=1}^{d}  |((\frac{1}{b} \sum_{i \in B } \nabla f_{i}(x_{t} + \rho_{t} \frac{\sum _{} \nabla f_{i}(x_{t}) }{\norm[]{}{\sum _{} \nabla f_{i}(x_{t})} }) \odot (\eta_{t-1}-\eta_{t}))_{(j)} | \\
 & \leq \frac{\gamma_{t}G}{b} \sum_{j=1}^{d}  \sum_{i \in B } |( ( \nabla f_{i}(x_{t} + \rho_{t} \frac{\sum _{} \nabla f_{i}(x_{t}) }{\norm[]{}{\sum _{} \nabla f_{i}(x_{t})} }) \odot (\eta_{t-1}-\eta_{t}))_{(j)} | \\
 &=\frac{\gamma_{t}G}{b} \sum_{j=1}^{d}  \sum_{i \in B } | ( \nabla f_{i}(x_{t} + \rho_{t} \frac{\sum _{} \nabla f_{i}(x_{t}) }{\norm[]{}{\sum _{} \nabla f_{i}(x_{t})} })_{(j)} \times (\eta_{t-1}-\eta_{t})_{(j)} |\\
 &\leq \frac{\gamma_{t}G^2}{b} \sum_{j=1}^{d}  \sum_{i \in B } |  (\eta_{t-1}-\eta_{t})_{(j)} | \\
 &= \gamma_{t}G^{2} \norm[]{1}{\eta_{t-1} - \eta_{t}}
 \end{align}
\end{proof}

\begin{lemma}\label{lemma:term1-1}
 For the term defined in the algorithm, we have
 \begin{align}
\ip{\nabla f(z_{t})}{ \frac{\gamma \beta_1 }{1- \beta_1} \od{m_{t-1}}{(\eta_{t-1} - \eta_{t})} } \leq \frac{\gamma \beta_1 }{1- \beta_1} G^{2} \norm[]{1}{\eta_{t-1} - \eta_{t}}
 \end{align}
\end{lemma}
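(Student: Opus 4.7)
The plan is to mirror the argument of Lemma \ref{lemma:term1} essentially line-for-line, since the only structural change is that the stochastic SAM gradient $\frac{1}{b}\sum_{i\in B}\nabla f_i(x_t+\rho_t s_t/\|s_t\|)$ has been replaced by the momentum vector $m_{t-1}$. The whole argument reduces to a coordinate-wise Hölder-style bound, so I only need to verify that the $\ell_\infty$ norms of the two vectors appearing in the inner product are both controlled by $G$.

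First, I would expand the inner product as $\sum_{j=1}^d (\nabla f(z_t))_{(j)} \cdot (m_{t-1})_{(j)} \cdot (\eta_{t-1}-\eta_t)_{(j)}$ (absorbing the scalar $\gamma\beta_1/(1-\beta_1)$), take absolute values, and factor out $|(\eta_{t-1}-\eta_t)_{(j)}|$. Then I would bound $|(\nabla f(z_t))_{(j)}| \le G$ directly from Assumption \ref{assumption-gradient} (the population gradient inherits the $\ell_\infty$ bound from the per-sample gradients by Jensen's inequality / linearity of expectation).

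The one step that needs a short justification is the bound $\|m_{t-1}\|_\infty \le G$. Unrolling the recursion $m_t = \beta_1 m_{t-1} + (1-\beta_1) g_t$ with $m_{-1}=0$ gives $m_{t-1} = (1-\beta_1)\sum_{k=0}^{t-1}\beta_1^{t-1-k} g_k$, a convex combination (since $(1-\beta_1)\sum_{k=0}^{t-1}\beta_1^{t-1-k} = 1-\beta_1^{t} \le 1$). Each $g_k$ is itself a mini-batch average of stochastic SAM gradients, each coordinate of which is bounded in absolute value by $G$ via Assumption \ref{assumption-gradient}. Hence $\|m_{t-1}\|_\infty \le G$ coordinate-wise.

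Combining these two bounds gives $|(\nabla f(z_t))_{(j)} \cdot (m_{t-1})_{(j)}| \le G^2$, and summing the remaining $|(\eta_{t-1}-\eta_t)_{(j)}|$ over $j$ yields $\|\eta_{t-1}-\eta_t\|_1$, which produces the claimed inequality after restoring the $\gamma\beta_1/(1-\beta_1)$ factor. I do not anticipate a real obstacle: the proof is a near-verbatim repetition of Lemma \ref{lemma:term1}, with the only new ingredient being the one-line observation that an exponential moving average of $\ell_\infty$-bounded vectors is itself $\ell_\infty$-bounded by the same constant.
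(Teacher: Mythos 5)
Your proof is correct and follows essentially the same route as the paper's: a coordinate-wise expansion of the inner product, bounding $|(\nabla f(z_t))_{(j)}|$ and $|(m_{t-1})_{(j)}|$ each by $G$ and summing $|(\eta_{t-1}-\eta_t)_{(j)}|$ to get the $\ell_1$ norm. The only difference is that you explicitly justify $\norm[]{\infty}{m_{t-1}}\leq G$ by unrolling the momentum recursion into a sub-convex combination of the $g_k$'s, a step the paper's proof uses but leaves implicit.
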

\begin{proof}
\begin{align}
 &\ip{\nabla f(z_{t})}{ \frac{\gamma \beta_1 }{1- \beta_1} \od{m_{t-1}}{(\eta_{t-1} - \eta_{t})} } \nonumber \\
 &\leq\frac{\gamma \beta_1 }{1- \beta_1} \sum_{j=1}^{d}|(\nabla f(z_{t}))_{(j)}| \times |(\od{m_{t-1}}{(\eta_{t-1} - \eta_{t})})_{(j)} | \\
  &\leq\frac{\gamma \beta_1 }{1- \beta_1} G \sum_{j=1}^{d}  |(\od{m_{t-1}}{(\eta_{t-1} - \eta_{t})})_{(j)} | \\
 &=\frac{\gamma \beta_1 }{1- \beta_1}  \sum_{j=1}^{d}   | ( m_{t-1} )_{(j)} \times (\eta_{t-1}-\eta_{t})_{(j)} |\\
 &\leq \frac{\gamma \beta_1 }{1- \beta_1} G^2 \sum_{j=1}^{d}   |  (\eta_{t-1}-\eta_{t})_{(j)} | \\
 &= \frac{\gamma \beta_1 }{1- \beta_1} G^{2} \norm[]{1}{\eta_{t-1} - \eta_{t}}
 \end{align}
\end{proof}

\begin{lemma}\label{lemma:term2}
 For the term defined in the algorithm, we have
 \begin{align}
& \ip{\nabla f(x_{t})}{\frac{\gamma_{t}}{b} \sum_{i \in B } \nabla f_{i}(x_{t} + \rho_{t} \frac{ \nabla f(x_{t}) }{\norm[]{}{ \nabla f(x_{t})} })\odot \eta_{t-1} -\frac{\gamma_{t}}{b} \sum_{i \in B } \nabla f_{i}(x_{t} + \rho_{t} \frac{s_{t} }{\norm[]{}{s_{t}} })\odot \eta_{t-1} }  \nonumber \\
&\leq \frac{\gamma_{t}}{2 \mu^2} \norm[2]{}{\nabla f(x_{t})\odot \sqrt{\eta_{t-1}} } +\frac{2 \mu^2 \gamma_{t} L^2 \rho_{t}^2}{\epsilon} .
 \end{align}
\end{lemma}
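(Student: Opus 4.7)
The plan is to factor the adaptive weights $\eta_{t-1}$ symmetrically into the two sides of the inner product, then apply Young's inequality (Lemma \ref{lemma:innerineq}) to separate the gradient term (which we want to keep, so that it can be absorbed into the negative descent term in the main recursion) from the ``perturbation direction mismatch'' term (which we want to bound by $\rho_t^2$).

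\textbf{Step 1 (symmetric splitting).} Write the inner product as
\begin{align*}
& \ip{\nabla f(x_{t})}{\frac{\gamma_{t}}{b} \sum_{i \in B }\! \Bigl[\nabla f_{i}\bigl(x_{t}\! +\! \rho_{t} \tfrac{ \nabla f(x_{t}) }{\norm[]{}{ \nabla f(x_{t})} }\bigr)\!-\!\nabla f_{i}\bigl(x_{t}\! +\! \rho_{t} \tfrac{s_{t}}{\norm[]{}{s_{t}}}\bigr)\Bigr]\odot \eta_{t-1} } \\
&= \gamma_t\ip{\nabla f(x_t)\odot\sqrt{\eta_{t-1}}}{\Delta_t\odot\sqrt{\eta_{t-1}}},
\end{align*}
where $\Delta_t := \tfrac{1}{b}\sum_{i\in B}\bigl[\nabla f_i(x_t+\rho_t \tfrac{\nabla f(x_t)}{\|\nabla f(x_t)\|})-\nabla f_i(x_t+\rho_t \tfrac{s_t}{\|s_t\|})\bigr]$.

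\textbf{Step 2 (Young's inequality).} Apply Lemma \ref{lemma:innerineq} with parameter $\lambda = 1/\mu$ to get
\begin{equation*}
\gamma_t\ip{\nabla f(x_t)\odot\sqrt{\eta_{t-1}}}{\Delta_t\odot\sqrt{\eta_{t-1}}} \leq \frac{\gamma_t}{2\mu^2}\norm[2]{}{\nabla f(x_t)\odot\sqrt{\eta_{t-1}}} + \frac{\gamma_t \mu^2}{2}\norm[2]{}{\Delta_t\odot\sqrt{\eta_{t-1}}}.
\end{equation*}
The first term is exactly the first summand in the stated bound, so it remains to control $\norm[2]{}{\Delta_t\odot\sqrt{\eta_{t-1}}}$.

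\textbf{Step 3 (bounding the perturbation mismatch).} By Lemma \ref{lemma:baseineq2} and Lemma \ref{lemma:boundeta},
\begin{equation*}
\norm[2]{}{\Delta_t\odot\sqrt{\eta_{t-1}}} \leq \norm[2]{}{\Delta_t}\cdot \norm[2]{\infty}{\sqrt{\eta_{t-1}}} \leq \frac{1}{\epsilon}\norm[2]{}{\Delta_t}.
\end{equation*}
Next, convexity of $\|\cdot\|^2$ pushes the average inside the norm, after which $L$-smoothness (Assumption \ref{assumption-smooth}) gives
\begin{equation*}
\norm[2]{}{\Delta_t} \leq \frac{1}{b}\sum_{i\in B}\norm[2]{}{\nabla f_i\bigl(x_t+\rho_t\tfrac{\nabla f(x_t)}{\|\nabla f(x_t)\|}\bigr)-\nabla f_i\bigl(x_t+\rho_t\tfrac{s_t}{\|s_t\|}\bigr)} \leq L^2 \norm[2]{}{\rho_t\tfrac{\nabla f(x_t)}{\|\nabla f(x_t)\|}-\rho_t\tfrac{s_t}{\|s_t\|}}.
\end{equation*}
Since each of the two perturbation vectors has Euclidean norm exactly $\rho_t$, the triangle inequality yields $\|\cdot\|^2\leq 4\rho_t^2$, hence $\norm[2]{}{\Delta_t}\leq 4L^2\rho_t^2$. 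Combining with Step 2 gives
\begin{equation*}
\frac{\gamma_t\mu^2}{2}\cdot\frac{1}{\epsilon}\cdot 4L^2\rho_t^2 = \frac{2\mu^2\gamma_t L^2\rho_t^2}{\epsilon},
\end{equation*}
which matches the second summand in the claim.

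\textbf{Main obstacle.} The only subtle point is making sure the Hadamard-product weight $\sqrt{\eta_{t-1}}$ is distributed symmetrically in Step 1 so that Young's inequality produces the ``weighted gradient norm'' $\|\nabla f(x_t)\odot\sqrt{\eta_{t-1}}\|^2$ (which matches the sign-favorable term produced elsewhere in the analysis) rather than the unweighted $\|\nabla f(x_t)\|^2$; the rest is bookkeeping with constants (the factor $4$ from $\|\delta_1-\delta_2\|^2\le 4\rho_t^2$ cancels the $1/2$ from Young's, leaving the clean coefficient $2\mu^2$).
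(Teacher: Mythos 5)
Your proposal is correct and follows essentially the same route as the paper's proof: the symmetric splitting of $\eta_{t-1}$ into two $\sqrt{\eta_{t-1}}$ factors, Young's inequality with parameter $\mu$, Jensen to push the batch average inside the squared norm, $L$-smoothness of each $f_i$, and the bound $\norm[2]{}{\tfrac{\nabla f(x_t)}{\|\nabla f(x_t)\|}-\tfrac{s_t}{\|s_t\|}}\le 4$ for unit vectors, yielding the same constant $\tfrac{2\mu^2\gamma_t L^2\rho_t^2}{\epsilon}$. The only cosmetic difference is the order in which you apply the $\norm[2]{\infty}{\sqrt{\eta_{t-1}}}\le 1/\epsilon$ bound relative to pushing the average inside, which does not affect the argument.
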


 \begin{proof}
 \begin{align}
& \ip{\nabla f(x_{t})}{\frac{\gamma_{t}}{b} \sum_{i \in B } \nabla f_{i}(x_{t} + \rho_{t} \frac{ \nabla f(x_{t}) }{\norm[]{}{ \nabla f(x_{t})} })\odot \eta_{t-1} -\frac{\gamma_{t}}{b} \sum_{i \in B } \nabla f_{i}(x_{t} + \rho_{t} \frac{s_{t} }{\norm[]{}{s_{t}} })\odot \eta_{t-1} }  \nonumber \\
&=\ip{\nabla f(x_{t})\odot \sqrt{\eta_{t-1}} }{\frac{\gamma_{t}}{b} \sum_{i \in B } (\nabla f_{i}(x_{t} + \rho_{t} \frac{ \nabla f(x_{t}) }{\norm[]{}{ \nabla f(x_{t})} }) - \nabla f_{i}(x_{t} + \rho_{t} \frac{\sum _{i \in B} \nabla f_{i}(x_{t}) }{\norm[]{}{\sum _{i \in B} \nabla f_{i}(x_{t})} }) )\odot \sqrt{\eta_{t-1}} } \\
&\leq \frac{\mu^2 \gamma_{t}}{2 b^2} \norm[2]{}{\sum (\nabla f_{i}(x_{t} + \rho_{t} \frac{ \nabla f(x_{t}) }{\norm[]{}{ \nabla f(x_{t})} }) - \nabla f_{i}(x_{t} + \rho_{t} \frac{\sum _{i \in B} \nabla f_{i}(x_{t}) }{\norm[]{}{\sum _{i \in B} \nabla f_{i}(x_{t})} }) ) \odot \sqrt{\eta_{t-1}}} \nonumber \\
&+\frac{\gamma_{t}}{2 \mu^2} \norm[2]{}{\nabla f(x_{t}) \odot \sqrt{\eta_{t-1}} } \\
& \leq+\frac{\mu^2 \gamma_{t}}{2 b} \sum  \norm[2]{}{\nabla f_{i}(x_{t} + \rho_{t} \frac{ \nabla f(x_{t}) }{\norm[]{}{ \nabla f(x_{t})} }) - \nabla f_{i}(x_{t} + \rho_{t} \frac{\sum _{i \in B} \nabla f_{i}(x_{t}) }{\norm[]{}{\sum _{i \in B} \nabla f_{i}(x_{t})} }) \odot \sqrt{\eta_{t-1}}} \nonumber\\
& +  \frac{\gamma_{t}}{2 \mu^2} \norm[2]{}{\nabla f(x_{t})\odot \sqrt{\eta_{t-1}} } \\
& \leq+\frac{\mu^2 \gamma_{t}}{2 b} \sum  \norm[2]{}{\nabla f_{i}(x_{t} + \rho_{t} \frac{ \nabla f(x_{t}) }{\norm[]{}{ \nabla f(x_{t})} }) - \nabla f_{i}(x_{t} + \rho_{t} \frac{\sum _{i \in B} \nabla f_{i}(x_{t}) }{\norm[]{}{\sum _{i \in B} \nabla f_{i}(x_{t})} }) }\times \norm[2]{\infty}{\sqrt{\eta_{t-1}}} \nonumber\\
& +  \frac{\gamma_{t}}{2 \mu^2} \norm[2]{}{\nabla f(x_{t})\odot \sqrt{\eta_{t-1}} } \\
&\leq \frac{\gamma_{t}}{2 \mu^2} \norm[2]{}{\nabla f(x_{t})\odot \sqrt{\eta_{t-1}} } +\frac{\mu^2 \gamma_{t} L^2 \rho_{t}^2 }{2 b \epsilon} \sum  \norm[2]{}{  \frac{ \nabla f(x_{t}) }{\norm[]{}{ \nabla f(x_{t})} }-   \frac{\sum _{i \in B} \nabla f_{i}(x_{t}) }{\norm[]{}{\sum _{i \in B} \nabla f_{i}(x_{t})} } }\\
&\leq \frac{\gamma_{t}}{2 \mu^2} \norm[2]{}{\nabla f(x_{t})\odot \sqrt{\eta_{t-1}} } +\frac{2 \mu^2 \gamma_{t} L^2 \rho_{t}^2}{\epsilon} .
 \end{align}
 \end{proof}
 
\begin{lemma} \label{lemma:term3}
For the term defined in the algorithm, we have
\begin{align}
&\;\;\;\;\EE \ip{\nabla f(x_{t})}{-\frac{\gamma_{t}}{b} \sum_{i \in B } \nabla f_{i}(x_{t} + \rho_{t} \frac{ \nabla f(x_{t}) }{\norm[]{}{ \nabla f(x_{t})} })  \odot \eta_{t-1}} \nonumber \\
&\leq - \gamma_{t}\norm[2]{}{\nabla f(x_{t}) \odot \sqrt{\eta_{t-1}}} +\EE \frac{\gamma_{t}}{2\alpha^2} \norm[2]{}{\nabla f(x_{t}) \odot \sqrt{\eta_{t-1}}} +\frac{\gamma_{t} \alpha^2 L^2 \rho_{t}^2 }{2  \epsilon}
 \end{align}
\end{lemma}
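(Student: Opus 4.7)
The plan is to first exploit the fact that inside the inner product, the perturbation direction $\rho_t \nabla f(x_t)/\|\nabla f(x_t)\|$ uses the \emph{full} gradient (not the stochastic one), so conditional on $x_t$ it is deterministic, and $\eta_{t-1}$ is also deterministic under the filtration $\sigma(x_t)$. Hence the mini-batch expectation can be passed directly through the sum:
\begin{align*}
\EE\!\left[\tfrac{1}{b}\!\sum_{i\in B}\nabla f_i\!\left(x_t+\rho_t\tfrac{\nabla f(x_t)}{\|\nabla f(x_t)\|}\right)\right]
= \nabla f\!\left(x_t+\rho_t\tfrac{\nabla f(x_t)}{\|\nabla f(x_t)\|}\right).
\end{align*}
So the left-hand side reduces to $-\gamma_t \ip{\nabla f(x_t)}{\nabla f(x_t+\delta^\star(x_t))\odot\eta_{t-1}}$, where $\delta^\star(x_t):=\rho_t\nabla f(x_t)/\|\nabla f(x_t)\|$.

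Next I would add and subtract $\nabla f(x_t)$ inside the second slot and split into two pieces. Rewriting $\eta_{t-1} = \sqrt{\eta_{t-1}}\odot\sqrt{\eta_{t-1}}$, the first piece becomes exactly $-\gamma_t\|\nabla f(x_t)\odot\sqrt{\eta_{t-1}}\|^2$, matching the first target term. The remaining cross term is
\[
\gamma_t\,\ip{\nabla f(x_t)\odot\sqrt{\eta_{t-1}}}{\bigl(\nabla f(x_t)-\nabla f(x_t+\delta^\star(x_t))\bigr)\odot\sqrt{\eta_{t-1}}}.
\]
To control it, I would apply Lemma~\ref{lemma:innerineq} with parameter $\alpha$, producing the $\frac{\gamma_t}{2\alpha^2}\|\nabla f(x_t)\odot\sqrt{\eta_{t-1}}\|^2$ term and a residual $\frac{\gamma_t\alpha^2}{2}\|(\nabla f(x_t)-\nabla f(x_t+\delta^\star(x_t)))\odot\sqrt{\eta_{t-1}}\|^2$.

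For the residual, I would use Lemma~\ref{lemma:baseineq2} to pull out $\|\sqrt{\eta_{t-1}}\|_\infty^2$, which by Lemma~\ref{lemma:boundeta} is at most $1/\epsilon$. The remaining norm $\|\nabla f(x_t)-\nabla f(x_t+\delta^\star(x_t))\|^2$ is bounded using $L$-smoothness (Assumption~\ref{assumption-smooth}) by $L^2\|\delta^\star(x_t)\|^2 = L^2\rho_t^2$, since $\|\delta^\star(x_t)\|=\rho_t$ by construction. Combining yields the claimed $\frac{\gamma_t\alpha^2 L^2\rho_t^2}{2\epsilon}$ term.

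The only subtle step is the first one: one must recognize that the perturbation in this lemma is defined with the \emph{population} gradient $\nabla f(x_t)$ rather than the batch gradient $s_t$, which is exactly why the batch averaging collapses cleanly under expectation. This is the whole point of Lemma~\ref{lemma:term2} earlier in the paper (it quantifies the discrepancy between perturbing with $s_t$ versus $\nabla f(x_t)$), and it is precisely what isolates Lemma~\ref{lemma:term3} as the ``clean'' descent term. After that, everything is a standard Young-inequality plus $L$-smoothness argument, with the boundedness of $\eta_{t-1}$ supplying the $1/\epsilon$ factor.
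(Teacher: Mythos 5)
Your proof is correct and follows essentially the same route as the paper's: split off the clean $-\gamma_t\Vert\nabla f(x_t)\odot\sqrt{\eta_{t-1}}\Vert^2$ term, apply Young's inequality with parameter $\alpha$, bound $\Vert\sqrt{\eta_{t-1}}\Vert_\infty^2$ by $1/\epsilon$, and use $L$-smoothness together with $\Vert\delta^\star(x_t)\Vert=\rho_t$. The only cosmetic difference is that you collapse the batch average to $\nabla f(x_t+\delta^\star(x_t))$ via unbiasedness before invoking smoothness of $f$, whereas the paper keeps the batch average, swaps $\nabla f(x_t)$ for $\frac{1}{b}\sum_{i\in B}\nabla f_i(x_t)$ under the expectation, and applies smoothness of each $f_i$ together with Jensen's inequality.
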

\begin{proof}
 \begin{align}
&\;\;\;\;\EE \ip{\nabla f(x_{t})}{-\frac{\gamma_{t}}{b} \sum_{i \in B } \nabla f_{i}(x_{t} + \rho_{t} \frac{ \nabla f(x_{t}) }{\norm[]{}{ \nabla f(x_{t})} })  \odot \eta_{t-1}} \nonumber \\
& = - \gamma_{t}\norm[2]{}{\nabla f(x_{t}) \odot \sqrt{\eta_{t-1}}} +\EE \ip{\nabla f(x_{t})}{\frac{\gamma_{t}}{b} \sum_{i \in B } (\nabla f(x_t) - \nabla f_{i}(x_{t} + \rho_{t} \frac{ \nabla f(x_{t}) }{\norm[]{}{ \nabla f(x_{t})} }))  \odot \eta_{t-1}} \\
& = - \gamma_{t}\norm[2]{}{\nabla f(x_{t}) \odot \sqrt{\eta_{t-1}}} +\EE \ip{\nabla f(x_{t})}{\frac{\gamma_{t}}{b} \sum_{i \in B } (\nabla f_{i}(x_t) - \nabla f_{i}(x_{t} + \rho_{t} \frac{ \nabla f(x_{t}) }{\norm[]{}{ \nabla f(x_{t})} }))  \odot \eta_{t-1}}\\
&\leq - \gamma_{t}\norm[2]{}{\nabla f(x_{t}) \odot \sqrt{\eta_{t-1}}} +\EE \frac{\gamma_{t}}{2\alpha^2} \norm[2]{}{\nabla f(x_{t}) \odot \sqrt{\eta_{t-1}}} \nonumber \\
&+\frac{\gamma_{t} \alpha^2}{2} \EE \norm[2]{}{\frac{1}{b} \sum_{i \in B } (\nabla f_{i}(x_t) - \nabla f_{i}(x_{t} + \rho_{t} \frac{ \nabla f(x_{t}) }{\norm[]{}{ \nabla f(x_{t})} }))  \odot \sqrt{\eta_{t-1}}}\\
&\leq - \gamma_{t}\norm[2]{}{\nabla f(x_{t}) \odot \sqrt{\eta_{t-1}}} +\EE \frac{\gamma_{t}}{2\alpha^2} \norm[2]{}{\nabla f(x_{t}) \odot \sqrt{\eta_{t-1}}} \nonumber \\
&+\frac{\gamma_{t} \alpha^2}{2 \epsilon} \EE \norm[2]{}{\frac{1}{b} \sum_{i \in B } (\nabla f_{i}(x_t) - \nabla f_{i}(x_{t} + \rho_{t} \frac{ \nabla f(x_{t}) }{\norm[]{}{ \nabla f(x_{t})} }))}\\
&\leq - \gamma_{t}\norm[2]{}{\nabla f(x_{t}) \odot \sqrt{\eta_{t-1}}} +\EE \frac{\gamma_{t}}{2\alpha^2} \norm[2]{}{\nabla f(x_{t}) \odot \sqrt{\eta_{t-1}}} \nonumber \\
&+\frac{\gamma_{t} \alpha^2}{2 b \epsilon} \EE \sum_{i \in B }\norm[2]{}{ (\nabla f_{i}(x_t) - \nabla f_{i}(x_{t} + \rho_{t} \frac{ \nabla f(x_{t}) }{\norm[]{}{ \nabla f(x_{t})} }))}\\
&\leq - \gamma_{t}\norm[2]{}{\nabla f(x_{t}) \odot \sqrt{\eta_{t-1}}} +\EE \frac{\gamma_{t}}{2\alpha^2} \norm[2]{}{\nabla f(x_{t}) \odot \sqrt{\eta_{t-1}}} +\frac{\gamma_{t} \alpha^2 L^2 \rho_{t}^2 }{2 b \epsilon} \EE \sum_{i \in B }\norm[2]{}{  \frac{ \nabla f(x_{t}) }{\norm[]{}{ \nabla f(x_{t})} }}\\
&=- \gamma_{t}\norm[2]{}{\nabla f(x_{t}) \odot \sqrt{\eta_{t-1}}} +\EE \frac{\gamma_{t}}{2\alpha^2} \norm[2]{}{\nabla f(x_{t}) \odot \sqrt{\eta_{t-1}}} +\frac{\gamma_{t} \alpha^2 L^2 \rho_{t}^2 }{2  \epsilon}
 \end{align}
\end{proof}

\begin{lemma} \label{lemma:term-m}
For the term defined in the algorithm, we have
\begin{align}
&\EE \ip{ \nabla f(z_{t}) -\nabla f(x_t) }{-\frac{\gamma_{t}}{b} \sum_{i \in B } \nabla f_{i}(x_{t} + \rho_{t} \frac{s_{t} }{\norm[]{}{s_{t}} }) \odot \eta_{t-1} } \nonumber \\
&\leq \frac{\gamma^3 L^2 \beta_{1}^2}{2 \epsilon (1-\beta_{1})^{2}}(\frac{1}{\lambda_{1}^{2}}+\frac{1}{\lambda_{2}^{2}} +\frac{1}{\lambda_{3}^{2}})  \frac{d G_{\infty}^{2}}{\epsilon^{2}} +\frac{\gamma \lambda_{1}^{2}}{2} \norm[2]{}{\nabla f(x_{t}) \odot \sqrt{\eta_{t-1} }} + \frac{\gamma  L^2 \rho_{t}^2 }{2\epsilon}(\lambda_{2}^{2} +4  \lambda_{3}^2).
\end{align}
\end{lemma}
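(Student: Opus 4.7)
\medskip
\noindent\textbf{Proof proposal.} The target quantity is a single inner product where $A := \nabla f(z_t)-\nabla f(x_t)$ is deterministic given the history (because $z_t$ depends only on $x_t$ and $x_{t-1}$), $\eta_{t-1}$ is also deterministic given the history (it is computed from past batches), and the only stochasticity comes from the mini-batch $B$ used to form $s_t$ and the outer SAM gradient. My plan is to three-way decompose the SAM gradient along a telescope that separates (i)~the clean stochastic gradient at $x_t$, (ii)~the true perturbation direction, and (iii)~the stochastic perturbation direction, then apply Young's inequality (Lemma \ref{lemma:innerineq}) three times, once against each piece, with three independent parameters $\lambda_1,\lambda_2,\lambda_3$.

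Concretely, let $P(x_t) := \rho_t \nabla f(x_t)/\|\nabla f(x_t)\|$ (deterministic) and write
\begin{align*}
\tfrac{1}{b}\!\!\sum_{i \in B}\!\!\nabla f_i\!\Bigl(x_t\!+\!\rho_t\tfrac{s_t}{\|s_t\|}\Bigr)
= \underbrace{\tfrac{1}{b}\!\!\sum_{i \in B}\!\!\nabla f_i(x_t)}_{B_1}
+ \underbrace{\tfrac{1}{b}\!\!\sum_{i \in B}\!\!\bigl[\nabla f_i(x_t\!+\!P(x_t))\!-\!\nabla f_i(x_t)\bigr]}_{B_2}
+ \underbrace{\tfrac{1}{b}\!\!\sum_{i \in B}\!\!\Bigl[\nabla f_i\!\Bigl(x_t\!+\!\rho_t\tfrac{s_t}{\|s_t\|}\Bigr)\!-\!\nabla f_i(x_t\!+\!P(x_t))\Bigr]}_{B_3}.
\end{align*}
Distributing the inner product across $B_1+B_2+B_3$ and rewriting each resulting term as $-\gamma\langle A\odot\sqrt{\eta_{t-1}},\,B_k\odot\sqrt{\eta_{t-1}}\rangle$, the $B_1$ piece becomes, after moving the deterministic factor $A\odot\sqrt{\eta_{t-1}}$ outside and using $\EE B_1=\nabla f(x_t)$, exactly $-\gamma\langle A\odot\sqrt{\eta_{t-1}},\,\nabla f(x_t)\odot\sqrt{\eta_{t-1}}\rangle$. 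Applying Lemma \ref{lemma:innerineq} with parameter $\lambda_1$ produces the term $\tfrac{\gamma\lambda_1^2}{2}\|\nabla f(x_t)\odot\sqrt{\eta_{t-1}}\|^2$ plus a residual $\tfrac{\gamma}{2\lambda_1^2}\|A\odot\sqrt{\eta_{t-1}}\|^2$. For the $B_2$ and $B_3$ pieces I apply Young's inequality directly with parameters $\lambda_2,\lambda_3$; by Jensen and $L$-smoothness (Assumption \ref{assumption-smooth}) one has $\|B_2\|^2\le L^2\rho_t^2$ (since each summand's perturbation has norm $\rho_t$) and $\|B_3\|^2\le 4L^2\rho_t^2$ (since $\|s_t/\|s_t\|-\nabla f(x_t)/\|\nabla f(x_t)\|\|\le 2$), which after the $1/\epsilon$ slack from $\eta_{t-1}\le 1/\epsilon$ (Lemma \ref{lemma:boundeta}) yields the $\tfrac{\gamma L^2\rho_t^2}{2\epsilon}(\lambda_2^2+4\lambda_3^2)$ contribution.

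The remaining job is to control the three copies of $\tfrac{\gamma}{2\lambda_i^2}\|A\odot\sqrt{\eta_{t-1}}\|^2$. Here I chain three facts: (a)~$L$-smoothness gives $\|A\|\le L\|z_t-x_t\|$; (b)~the definition of $z_t$ gives $\|z_t-x_t\|=\tfrac{\beta_1}{1-\beta_1}\|x_t-x_{t-1}\|=\tfrac{\beta_1\gamma}{1-\beta_1}\|m_{t-1}\odot\eta_{t-1}\|$; (c)~Lemma \ref{lemma:baseineq2} combined with the coordinate-wise bounds $\|m_{t-1}\|_\infty\le G_\infty$ and $\eta_{t-1}\le 1/\epsilon$ from Lemma \ref{lemma:boundeta} give $\|m_{t-1}\odot\eta_{t-1}\|^2\le dG_\infty^2/\epsilon^2$, and the outer $\sqrt{\eta_{t-1}}$ contributes another factor $1/\epsilon$. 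Multiplying, each residual is at most $\tfrac{\gamma^3 L^2\beta_1^2}{2\lambda_i^2(1-\beta_1)^2\epsilon}\cdot\tfrac{dG_\infty^2}{\epsilon^2}$, and summing $i=1,2,3$ produces exactly the first term of the claimed bound.

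The main obstacle is bookkeeping rather than technical: I must ensure the decomposition $B_1+B_2+B_3$ is an exact identity (it is, by telescoping) and that the switch $\EE B_1=\nabla f(x_t)$ is legitimately used \emph{only} on the first piece, where $A\odot\sqrt{\eta_{t-1}}$ is measurable with respect to the conditioning $\sigma$-algebra; for $B_2$ and $B_3$ I keep the expectation around the full Cauchy--Schwarz/Young bound and then use the deterministic norm estimates on $\|B_2\|^2$ and $\|B_3\|^2$, which hold sample-wise so no cross-term between $s_t$ and the outer batch is ever evaluated. With this care the three matched pairs assemble cleanly into the stated right-hand side.
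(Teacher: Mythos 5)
Your proposal is correct and follows essentially the same route as the paper's proof: the identical three-way telescoping decomposition of the SAM gradient (clean batch gradient at $x_t$, true-direction perturbation increment, stochastic-vs-true perturbation difference), Young's inequality with three separate parameters $\lambda_1,\lambda_2,\lambda_3$, the samplewise bounds $\|B_2\|^2\le L^2\rho_t^2$ and $\|B_3\|^2\le 4L^2\rho_t^2$, and the chain $\|\nabla f(z_t)-\nabla f(x_t)\|\le L\|z_t-x_t\|=\tfrac{L\beta_1\gamma}{1-\beta_1}\|m_{t-1}\odot\eta_{t-1}\|\le \tfrac{L\beta_1\gamma}{1-\beta_1}\sqrt{d}G_\infty/\epsilon$ for the residuals. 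Your care in applying unbiasedness of $B_1$ only against the history-measurable factor $(\nabla f(z_t)-\nabla f(x_t))\odot\eta_{t-1}$ matches exactly how the paper handles that step.
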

 \begin{proof}
 \begin{align}
&\EE \ip{ \nabla f(z_{t}) -\nabla f(x_t) }{-\frac{\gamma_{t}}{b} \sum_{i \in B } \nabla f_{i}(x_{t} + \rho_{t} \frac{s_{t} }{\norm[]{}{s_{t}} }) \odot \eta_{t-1} } \\
&=\gamma \EE  \ip{( \od{\nabla f(x_t)-\nabla f(z_{t})) }{\sqrt{\eta_{t-1}}} }{\frac{1}{b} \sum_{i \in B } \nabla f_{i}(x_{t} + \rho_{t} \frac{\sum _{i \in B} \nabla f_{i}(x_{t}) }{\norm[]{}{\sum _{i \in B} \nabla f_{i}(x_{t})} }) \odot \sqrt{\eta_{t-1} }}\\
&=\gamma \EE \ip{( \od{\nabla f(x_t)-\nabla f(z_{t})) }{\sqrt{\eta_{t-1}}} }{ \nabla f(x_{t}) \odot \sqrt{\eta_{t-1} }}\nonumber \\
&+\gamma \EE \ip{( \od{\nabla f(x_t)-\nabla f(z_{t})) }{\sqrt{\eta_{t-1}}} }{\frac{1}{b} \sum_{i \in B } (\nabla f_{i}(x_{t} + \rho_{t} \frac{ \nabla f(x_{t}) }{\norm[]{}{ \nabla f(x_{t})} }) - \nabla f_{i}(x_{t}) ) \odot \sqrt{\eta_{t-1} }} \nonumber \\
&+\gamma \EE \ip{( \od{\nabla f(x_t)-\nabla f(z_{t})) }{\sqrt{\eta_{t-1}}} }{\frac{1}{b} \sum_{i \in B } (\nabla f_{i}(x_{t} + \rho_{t} \frac{\sum _{i \in B} \nabla f_{i}(x_{t}) }{\norm[]{}{\sum _{i \in B} \nabla f_{i}(x_{t})} }) -\nabla f_{i}(x_{t} + \rho_{t} \frac{ \nabla f(x_{t}) }{\norm[]{}{ \nabla f(x_{t})} })  \odot \sqrt{\eta_{t-1} }}\\
&\leq \frac{\gamma}{2}(\frac{1}{\lambda_{1}^{2}}+\frac{1}{\lambda_{2}^{2}} +\frac{1}{\lambda_{3}^{2}}) \EE \norm[2]{}{\od{(\nabla f(x_t)-\nabla f(z_{t})) }{\sqrt{\eta_{t-1}}}} +\frac{\gamma \lambda_{1}^{2}}{2} \norm[2]{}{\nabla f(x_{t}) \odot \sqrt{\eta_{t-1} }} \nonumber \\
&+\frac{\gamma \lambda_{2}^{2}}{2} \EE \norm[2]{}{\frac{1}{b} \sum_{i \in B } (\nabla f_{i}(x_{t} + \rho_{t} \frac{ \nabla f(x_{t}) }{\norm[]{}{ \nabla f(x_{t})} }) - \nabla f_{i}(x_{t}) ) \odot \sqrt{\eta_{t-1} }} \nonumber \\
&+\frac{\gamma \lambda_{3}^{2}}{2} \EE \norm[2]{}{\frac{1}{b} \sum_{i \in B } (\nabla f_{i}(x_{t} + \rho_{t} \frac{\sum _{i \in B} \nabla f_{i}(x_{t}) }{\norm[]{}{\sum _{i \in B} \nabla f_{i}(x_{t})} }) -\nabla f_{i}(x_{t} + \rho_{t} \frac{ \nabla f(x_{t}) }{\norm[]{}{ \nabla f(x_{t})} })  \odot \sqrt{\eta_{t-1} }}\\
&\leq  \frac{\gamma}{2}(\frac{1}{\lambda_{1}^{2}}+\frac{1}{\lambda_{2}^{2}} +\frac{1}{\lambda_{3}^{2}}) \EE \norm[2]{}{\od{(\nabla f(x_t)-\nabla f(z_{t})) }{\sqrt{\eta_{t-1}}}} +\frac{\gamma \lambda_{1}^{2}}{2} \norm[2]{}{\nabla f(x_{t}) \odot \sqrt{\eta_{t-1} }} \nonumber \\
&+ \frac{\gamma \lambda_{2}^{2} L^2 \rho_{t}^2 }{2\epsilon} +\frac{2 \lambda_{3}^2 \gamma L^2 \rho_{t}^{2}}{\epsilon}\\
&\leq  \frac{\gamma L^2}{2 \epsilon}(\frac{1}{\lambda_{1}^{2}}+\frac{1}{\lambda_{2}^{2}} +\frac{1}{\lambda_{3}^{2}}) \EE \norm[2]{}{z_{t}-x_{t} } +\frac{\gamma \lambda_{1}^{2}}{2} \norm[2]{}{\nabla f(x_{t}) \odot \sqrt{\eta_{t-1} }} \nonumber \\
&+ \frac{\gamma \lambda_{2}^{2} L^2 \rho_{t}^2 }{2\epsilon} +\frac{2 \lambda_{3}^2 \gamma L^2 \rho_{t}^{2}}{\epsilon}\\
&=  \frac{\gamma^3 L^2 \beta_{1}^2}{2 \epsilon (1-\beta_{1})^{2}}(\frac{1}{\lambda_{1}^{2}}+\frac{1}{\lambda_{2}^{2}} +\frac{1}{\lambda_{3}^{2}})  \norm[2]{}{m_{t-1} \odot \eta{t-1} } +\frac{\gamma \lambda_{1}^{2}}{2} \norm[2]{}{\nabla f(x_{t}) \odot \sqrt{\eta_{t-1} }} \nonumber \\
&+ \frac{\gamma \lambda_{2}^{2} L^2 \rho_{t}^2 }{2\epsilon} +\frac{2 \lambda_{3}^2 \gamma L^2 \rho_{t}^{2}}{\epsilon}\\
&\leq \frac{\gamma^3 L^2 \beta_{1}^2}{2 \epsilon (1-\beta_{1})^{2}}(\frac{1}{\lambda_{1}^{2}}+\frac{1}{\lambda_{2}^{2}} +\frac{1}{\lambda_{3}^{2}})  \frac{d G_{\infty}^{2}}{\epsilon^{2}} +\frac{\gamma \lambda_{1}^{2}}{2} \norm[2]{}{\nabla f(x_{t}) \odot \sqrt{\eta_{t-1} }} + \frac{\gamma  L^2 \rho_{t}^2 }{2\epsilon}(\lambda_{2}^{2} +4  \lambda_{3}^2) .
\end{align}
 \end{proof}

\begin{lemma} \label{lemma:term4-0}
 For the term defined in the algorithm, we have
 \begin{align}
 &\frac{L}{2} \EE \norm[2]{}{z_{t+1}- z_{t}} \leq \frac{L G^2 \gamma^{2} \beta_{1}^{2}}{(1-\beta_1)^{2}}\EE \norm[2]{}{\eta_{t} - \eta_{t-1}} \nonumber \\
 &+\gamma_{t}^{2}L (3\frac{1 + \beta}{\beta \epsilon}( \EE \norm[2]{}{ \nabla f(x_t ) \odot  \sqrt{\eta_{t-1}}} +\frac{L \rho_{t}^{2}}{\epsilon } + \frac{\sigma^{2}}{b \epsilon}) + (1+ \beta)G^2 \EE \norm[2]{}{\eta_{t} -\eta_{t-1} })
  \end{align}
\end{lemma}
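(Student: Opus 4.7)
The starting point is the identity
\[
z_{t+1}-z_t \;=\; \tfrac{\gamma\beta_1}{1-\beta_1}\,m_{t-1}\odot(\eta_{t-1}-\eta_t)\;-\;\gamma\,g_t\odot\eta_t,
\]
derived earlier. I will first apply $\|a+b\|^2\le 2\|a\|^2+2\|b\|^2$ to separate the momentum drift from the stochastic gradient drift. For the momentum piece, Lemma~\ref{lemma:baseineq2} combined with $\|m_{t-1}\|_\infty\le G$ (a consequence of Assumption~\ref{assumption-gradient} and the convex-combination update for $m_t$) yields $\|m_{t-1}\odot(\eta_{t-1}-\eta_t)\|^2\le G^2\|\eta_t-\eta_{t-1}\|^2$; multiplying by the prefactor $\tfrac{L}{2}\cdot 2\gamma^2\beta_1^2/(1-\beta_1)^2$ produces exactly the first summand on the right-hand side.

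For $\|g_t\odot\eta_t\|^2$ I substitute $\eta_t=\eta_{t-1}+(\eta_t-\eta_{t-1})$ and invoke Young's inequality with free parameter $\beta$,
\[
\|g_t\odot\eta_t\|^2 \;\le\; \tfrac{1+\beta}{\beta}\|g_t\odot\eta_{t-1}\|^2 + (1+\beta)\|g_t\odot(\eta_t-\eta_{t-1})\|^2.
\]
The second summand is bounded by $G^2\|\eta_t-\eta_{t-1}\|^2$ via Lemma~\ref{lemma:baseineq2} and $\|g_t\|_\infty\le G$, which after the $\tfrac{L}{2}\cdot 2\gamma^2$ prefactor contributes the $(1+\beta)G^2$ term. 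For the first summand, I factor $\eta_{t-1}=\sqrt{\eta_{t-1}}\odot\sqrt{\eta_{t-1}}$ and use Lemma~\ref{lemma:boundeta} to pull out one factor of $1/\epsilon$, so that it remains to control $\|g_t\odot\sqrt{\eta_{t-1}}\|^2$.

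To relate $\|g_t\odot\sqrt{\eta_{t-1}}\|^2$ to $\nabla f(x_t)$, I use the three-way decomposition
\begin{align*}
g_t &= \nabla f(x_t) + \bigl[\nabla_x f_B(x_t) - \nabla f(x_t)\bigr]\\
&\quad+ \bigl[g_t - \nabla_x f_B(x_t)\bigr],
\end{align*}
and apply $\|a+b+c\|^2\le 3(\|a\|^2+\|b\|^2+\|c\|^2)$ --- this is precisely the source of the factor $3$ in the target inequality. The first summand is already $\|\nabla f(x_t)\odot\sqrt{\eta_{t-1}}\|^2$. The third, the SAM perturbation error, is bounded deterministically by $L^2\rho_t^2$ using $L$-smoothness (Assumption~\ref{assumption-smooth}) together with $\|\delta(x_t)\|\le\rho_t$. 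The second, the mini-batch noise, is bounded in conditional expectation by $\sigma^2/b$ (Assumption~\ref{assumption-variance}). Since $\eta_{t-1}$ is measurable with respect to the filtration at time $t-1$, it is deterministic after conditioning, so $\|\sqrt{\eta_{t-1}}\|_\infty^2\le 1/\epsilon$ pulls cleanly out of each squared norm and yields the extra $1/\epsilon$ factor attached to $\rho_t^2$ and $\sigma^2/b$.

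Putting it all together, the prefactor $\tfrac{L}{2}\cdot 2\gamma^2\cdot\tfrac{1+\beta}{\beta}\cdot\tfrac{1}{\epsilon}$ combined with the factor-$3$ split delivers the coefficient $3\gamma^2 L(1+\beta)/(\beta\epsilon)$ in front of the three pieces inside the parenthesis, exactly matching the statement. The main bookkeeping obstacle is keeping the three Young's-type parameters consistent --- the coarse factor $2$ from the initial split, the parameter $\beta$ governing the $\eta_t$ versus $\eta_{t-1}$ swap, and the factor $3$ from the signal/bias/noise decomposition --- while tracking the $\sqrt{\eta_{t-1}}$ versus $\eta_{t-1}$ conversion; no new analytical ingredient beyond Assumptions~\ref{assumption-smooth}--\ref{assumption-gradient} and the technical lemmas in the appendix is needed.
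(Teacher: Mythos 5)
Your proposal is correct and follows essentially the same route as the paper's own proof: the same $2\|a\|^2+2\|b\|^2$ split of the momentum and gradient pieces, the same Young's inequality with parameter $\beta$ on $\eta_t=\eta_{t-1}+(\eta_t-\eta_{t-1})$, and the same three-way signal/noise/perturbation decomposition of $g_t$ producing the factor $3$. All coefficients check out, so no further changes are needed.
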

 \begin{proof}
 \begin{align}
 &\frac{L}{2} \EE \norm[2]{}{z_{t+1}- z_{t}} \nonumber \\
 &=\frac{L}{2} \EE \norm[2]{}{\frac{\gamma \beta_1 }{1-\beta_1} \od{m_{t-1}}{(\eta_{t} - \eta_{t-1})} - \gamma g_{t} \odot \eta_{t} }\\
 &\leq \frac{L \gamma^{2} \beta_{1}^{2}}{(1-\beta_1)^{2}}\EE \norm[2]{}{\od{m_{t-1}}{(\eta_{t} - \eta_{t-1})}} +L \EE \norm[2]{}{\frac{\gamma_{t}}{b} \sum(\nabla f_{i}(x_t + \rho_{t} \frac{s_{t}}{\norm[]{}{s_{t}}})) \odot  \eta_{t} }\\
 &\leq \frac{L G^2 \gamma^{2} \beta_{1}^{2}}{(1-\beta_1)^{2}}\EE \norm[2]{}{\eta_{t} - \eta_{t-1}} +L \EE \norm[2]{}{\frac{\gamma_{t}}{b} \sum(\nabla f_{i}(x_t + \rho_{t} \frac{s_{t}}{\norm[]{}{s_{t}}})) \odot  \eta_{t} }\\
 &=\gamma_{t}^{2}L \EE \norm[2]{}{\frac{1}{b} \sum(\nabla f_{i}(x_t + \rho_{t} \frac{s_{t}}{\norm[]{}{s_{t}}})) \odot  \eta_{t-1} + \frac{1}{b} \sum(\nabla f_{i}(x_t + \rho_{t} \frac{s_{t}}{\norm[]{}{s_{t}}})) \odot  (\eta_{t} -\eta_{t-1}) }\nonumber \\
 &+\frac{L G^2 \gamma^{2} \beta_{1}^{2}}{(1-\beta_1)^{2}}\EE \norm[2]{}{\eta_{t} - \eta_{t-1}} \\
 &\leq \frac{L G^2 \gamma^{2} \beta_{1}^{2}}{(1-\beta_1)^{2}}\EE \norm[2]{}{\eta_{t} - \eta_{t-1}} + \gamma_{t}^{2}L ((1 + \frac{1}{\beta})\EE \norm[2]{}{\frac{1}{b} \sum(\nabla f_{i}(x_t + \rho_{t} \frac{s_{t}}{\norm[]{}{s_{t}}})) \odot  \eta_{t-1}} \nonumber \\
 &\;\;\;\;\;\;\;+ (1+\beta)\EE \norm[2]{}{\frac{1}{b} \sum(\nabla f_{i}(x_t + \rho_{t} \frac{s_{t}}{\norm[]{}{s_{t}}})) \odot  (\eta_{t} -\eta_{t-1}) })\\
 &\leq \gamma_{t}^{2}L ((1 + \frac{1}{\beta})\EE \norm[2]{}{\frac{1}{b} \sum(\nabla f_{i}(x_t + \rho_{t} \frac{s_{t}}{\norm[]{}{s_{t}}})) \odot  \eta_{t-1}} + (1+ \beta)G^2 \EE \norm[2]{}{\eta_{t} -\eta_{t-1} })\nonumber \\
 &+\frac{L G^2 \gamma^{2} \beta_{1}^{2}}{(1-\beta_1)^{2}}\EE \norm[2]{}{\eta_{t} - \eta_{t-1}} \\
 & \leq \gamma_{t}^{2}L ((1 + \frac{1}{\beta})\EE \norm[2]{}{\frac{1}{b} \sum(\nabla f_{i}(x_t + \rho_{t} \frac{s_{t}}{\norm[]{}{s_{t}}})) \odot  \sqrt{\eta_{t-1}}} \times \norm[2]{\infty}{\sqrt{\eta_{t-1}}} \nonumber \\
 &+ (1+ \beta)G^2 \EE \norm[2]{}{\eta_{t} -\eta_{t-1} }) +\frac{L G^2 \gamma^{2} \beta_{1}^{2}}{(1-\beta_1)^{2}}\EE \norm[2]{}{\eta_{t} - \eta_{t-1}}\\
 &\leq \gamma_{t}^{2}L (\frac{1 + \beta}{\beta \epsilon}\EE \norm[2]{}{\frac{1}{b} \sum(\nabla f_{i}(x_t + \rho_{t} \frac{s_{t}}{\norm[]{}{s_{t}}})) \odot  \sqrt{\eta_{t-1}}}  + (1+ \beta)G^2 \EE \norm[2]{}{\eta_{t} -\eta_{t-1} }) \nonumber \\
 &+\frac{L G^2 \gamma^{2} \beta_{1}^{2}}{(1-\beta_1)^{2}}\EE \norm[2]{}{\eta_{t} - \eta_{t-1}}\\
 &\leq \gamma_{t}^{2}L (3\frac{1 + \beta}{\beta \epsilon}\EE (\norm[2]{}{ \nabla f(x_t ) \odot  \sqrt{\eta_{t-1}}} + \norm[2]{}{(\frac{1}{b} \sum \nabla f_{i}(x_t )- \nabla f(x_t)) \odot  \sqrt{\eta_{t-1}}} \nonumber \\
 &+\norm[2]{}{\frac{1}{b} \sum(\nabla f_{i}(x_t + \rho_{t} \frac{\sum_{i \in B} \nabla f_{i}(x_t)}{\norm[]{}{\sum_{i \in B} \nabla f_{i}(x_t)}}) -\nabla f_{i}(x_t)) \odot  \sqrt{\eta_{t-1}}})+ (1+ \beta)G^2 \EE \norm[2]{}{\eta_{t} -\eta_{t-1} })\nonumber \\
 &+\frac{L G^2 \gamma^{2} \beta_{1}^{2}}{(1-\beta_1)^{2}}\EE \norm[2]{}{\eta_{t} - \eta_{t-1}} \\
 &\leq \gamma_{t}^{2}L (3\frac{1 + \beta}{\beta \epsilon}( \EE \norm[2]{}{ \nabla f(x_t ) \odot  \sqrt{\eta_{t-1}}} +\EE \norm[2]{}{\frac{1}{b} \sum(\nabla f_{i}(x_t + \rho_{t} \frac{\sum_{i \in B} \nabla f_{i}(x_t)}{\norm[]{}{\sum_{i \in B} \nabla f_{i}(x_t)}}) -\nabla f_{i}(x_t)) \odot  \sqrt{\eta_{t-1}}}  \nonumber \\
 &+ \frac{\sigma^{2}}{b \epsilon})+ (1+ \beta)G^2 \EE \norm[2]{}{\eta_{t} -\eta_{t-1} }) +\frac{L G^2 \gamma^{2} \beta_{1}^{2}}{(1-\beta_1)^{2}}\EE \norm[2]{}{\eta_{t} - \eta_{t-1}}\\
 &\leq \gamma_{t}^{2}L (3\frac{1 + \beta}{\beta \epsilon}( \EE \norm[2]{}{ \nabla f(x_t ) \odot  \sqrt{\eta_{t-1}}} +\frac{1}{\epsilon}\EE \norm[2]{}{\frac{1}{b} \sum(\nabla f_{i}(x_t + \rho_{t} \frac{\sum_{i \in B} \nabla f_{i}(x_t)}{\norm[]{}{\sum_{i \in B} \nabla f_{i}(x_t)}}) -\nabla f_{i}(x_t))}  \nonumber \\
 &+ \frac{\sigma^{2}}{b \epsilon})+ (1+ \beta)G^2 \EE \norm[2]{}{\eta_{t} -\eta_{t-1} }) + \frac{L G^2 \gamma^{2} \beta_{1}^{2}}{(1-\beta_1)^{2}}\EE \norm[2]{}{\eta_{t} - \eta_{t-1}}\\
 &\leq \gamma_{t}^{2}L (3\frac{1 + \beta}{\beta \epsilon}( \EE \norm[2]{}{ \nabla f(x_t ) \odot  \sqrt{\eta_{t-1}}} +\frac{1}{\epsilon b}\EE \sum\norm[2]{}{ \nabla f_{i}(x_t + \rho_{t} \frac{\sum_{i \in B} \nabla f_{i}(x_t)}{\norm[]{}{\sum_{i \in B} \nabla f_{i}(x_t)}}) -\nabla f_{i}(x_t)}  \nonumber \\
 &+ \frac{\sigma^{2}}{b \epsilon})+ (1+ \beta)G^2 \EE \norm[2]{}{\eta_{t} -\eta_{t-1} }) + \frac{L G^2 \gamma^{2} \beta_{1}^{2}}{(1-\beta_1)^{2}}\EE \norm[2]{}{\eta_{t} - \eta_{t-1}}\\
 &\leq \gamma_{t}^{2}L (3\frac{1 + \beta}{\beta \epsilon}( \EE \norm[2]{}{ \nabla f(x_t ) \odot  \sqrt{\eta_{t-1}}} +\frac{L \rho_{t}^{2}}{\epsilon } + \frac{\sigma^{2}}{b \epsilon}) + (1+ \beta)G^2 \EE \norm[2]{}{\eta_{t} -\eta_{t-1} }) \nonumber \\
 &+\frac{L G^2 \gamma^{2} \beta_{1}^{2}}{(1-\beta_1)^{2}}\EE \norm[2]{}{\eta_{t} - \eta_{t-1}}.
  \end{align}
 \end{proof}

\section{Additional Experiment Illustrations}\label{additional-exp}
\subsection{Experiment Illustrations}

\begin{figure*}[h]
   \centering
    \begin{subfigure}{0.23\linewidth}
    \includegraphics[width=\linewidth]{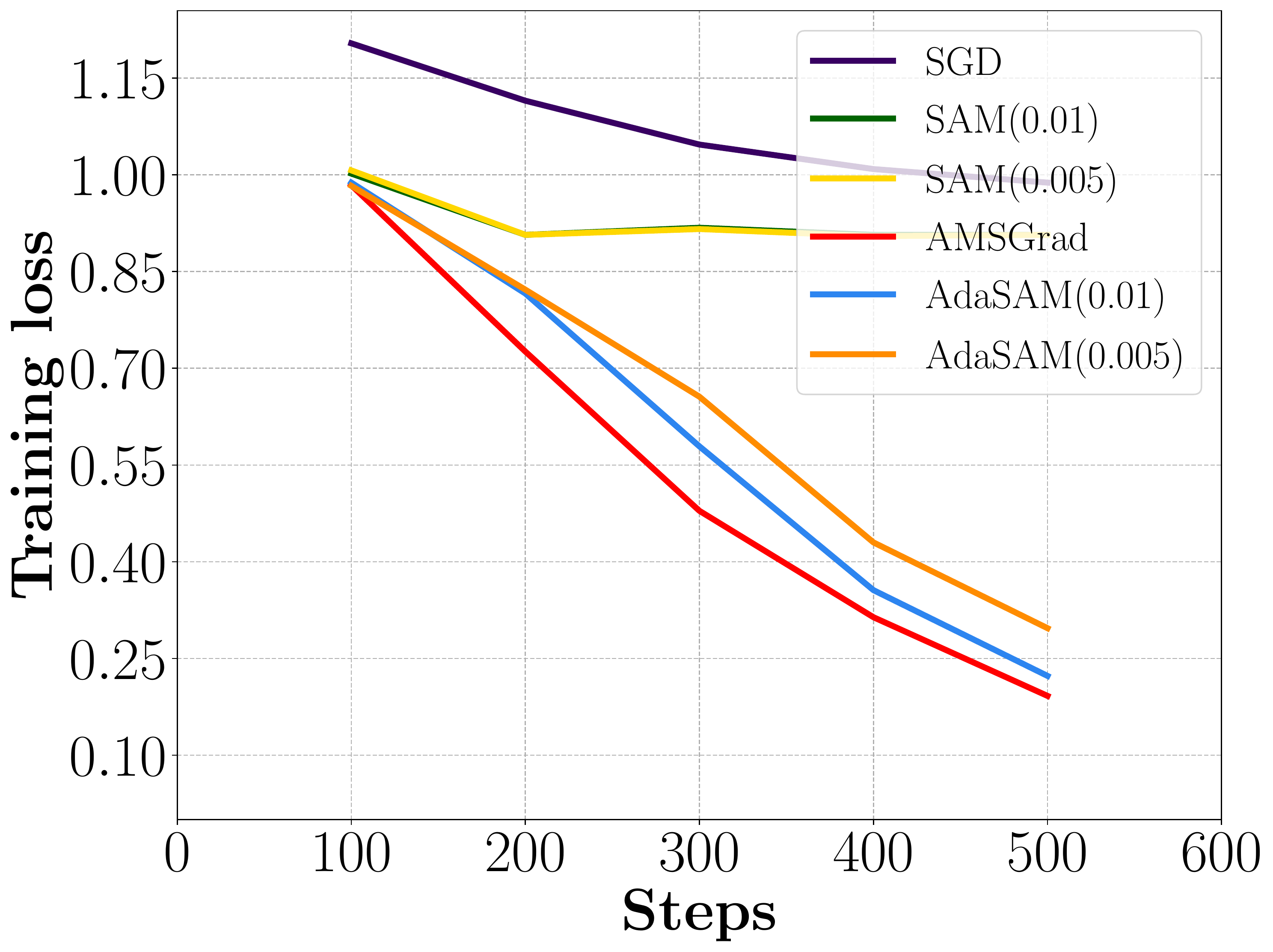}
    \caption{MRPC}
   \end{subfigure}\!\!
    \begin{subfigure}{0.23\linewidth}
    \includegraphics[width=\linewidth]{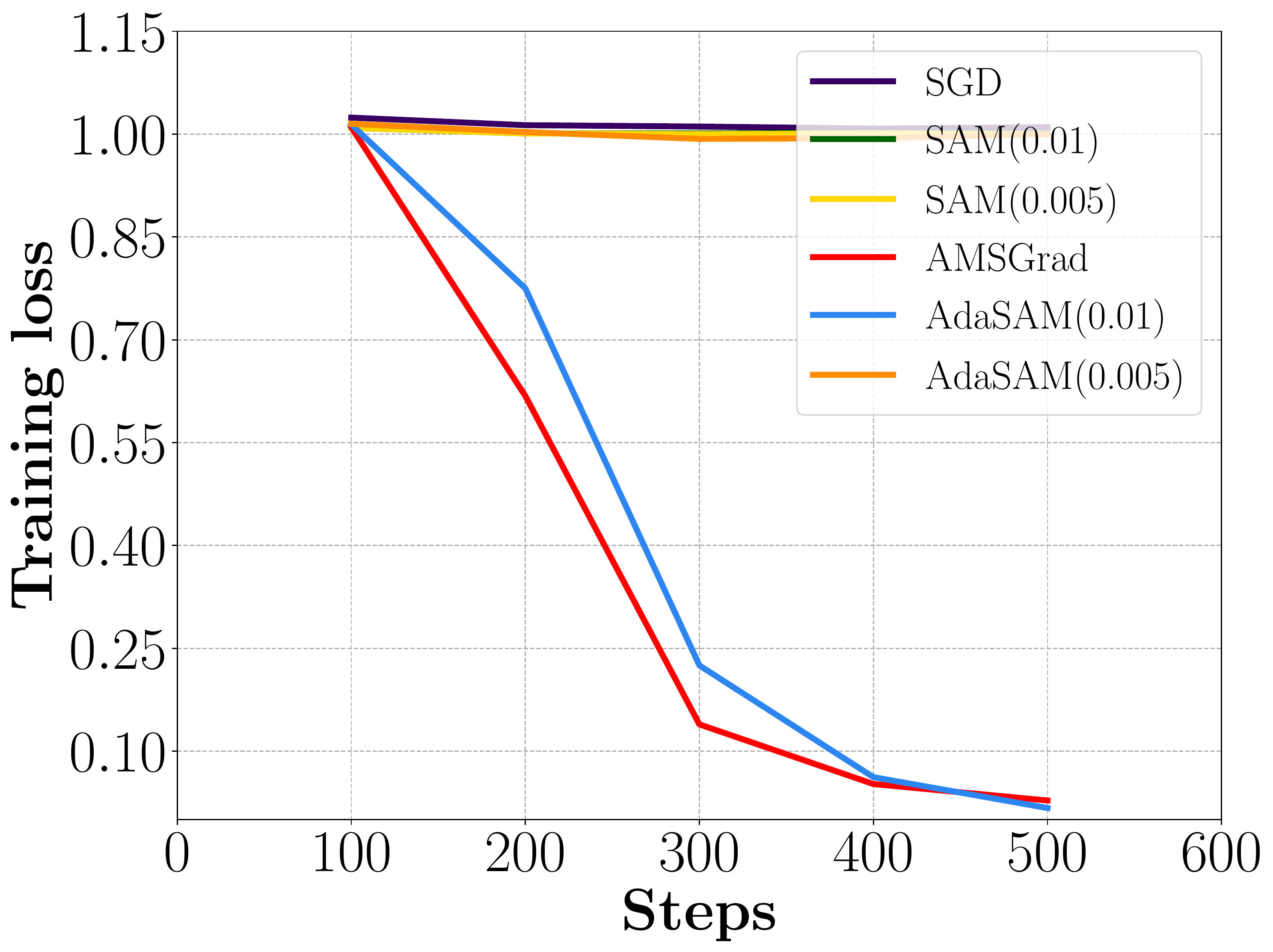}
    \caption{RTE}
   \end{subfigure}\!\!
    \begin{subfigure}{0.23\linewidth}
    \includegraphics[width=\linewidth]{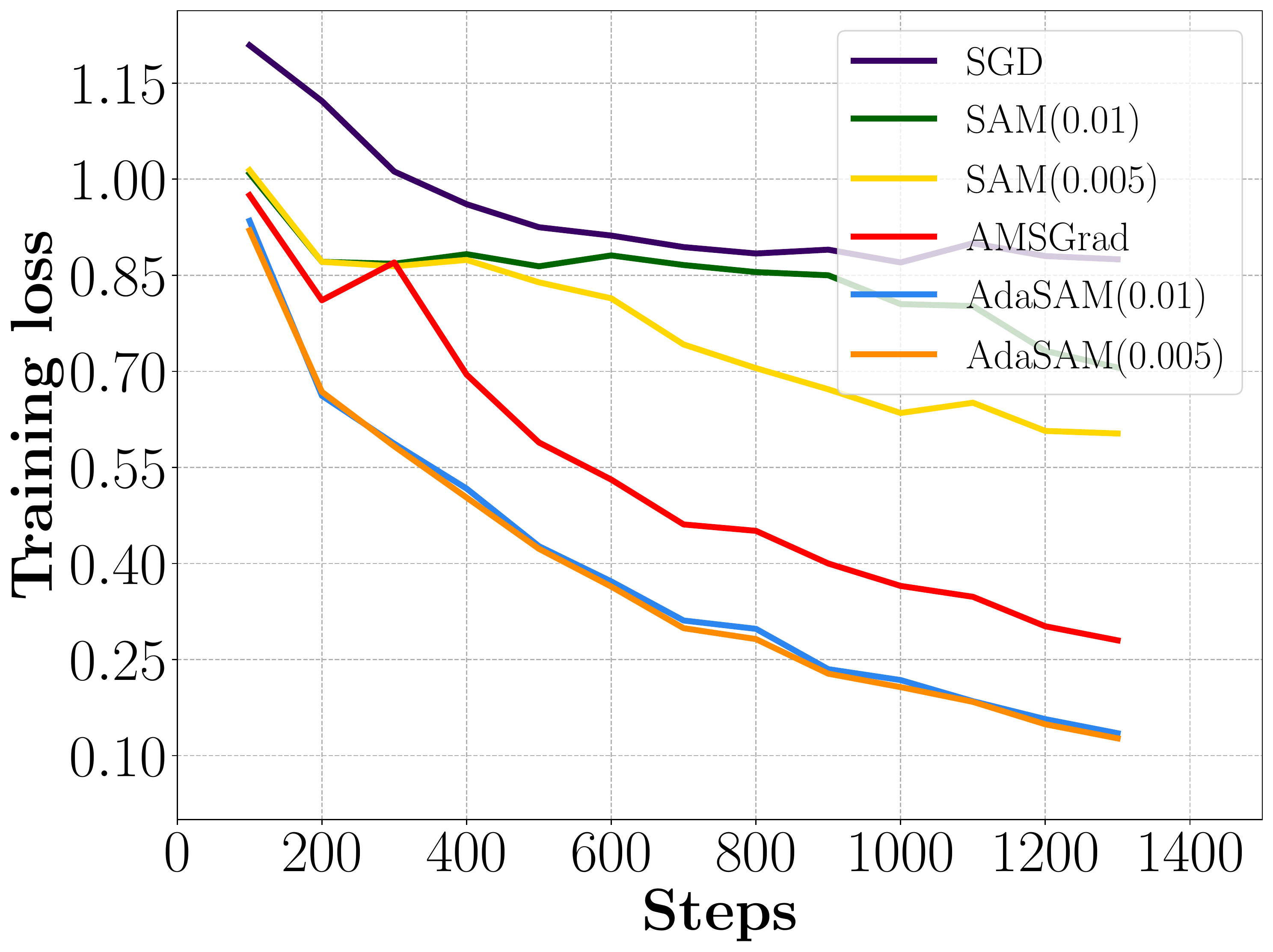}
    \caption{CoLA}
   \end{subfigure}
   \begin{subfigure}{0.23\linewidth}
    \includegraphics[width=\linewidth]{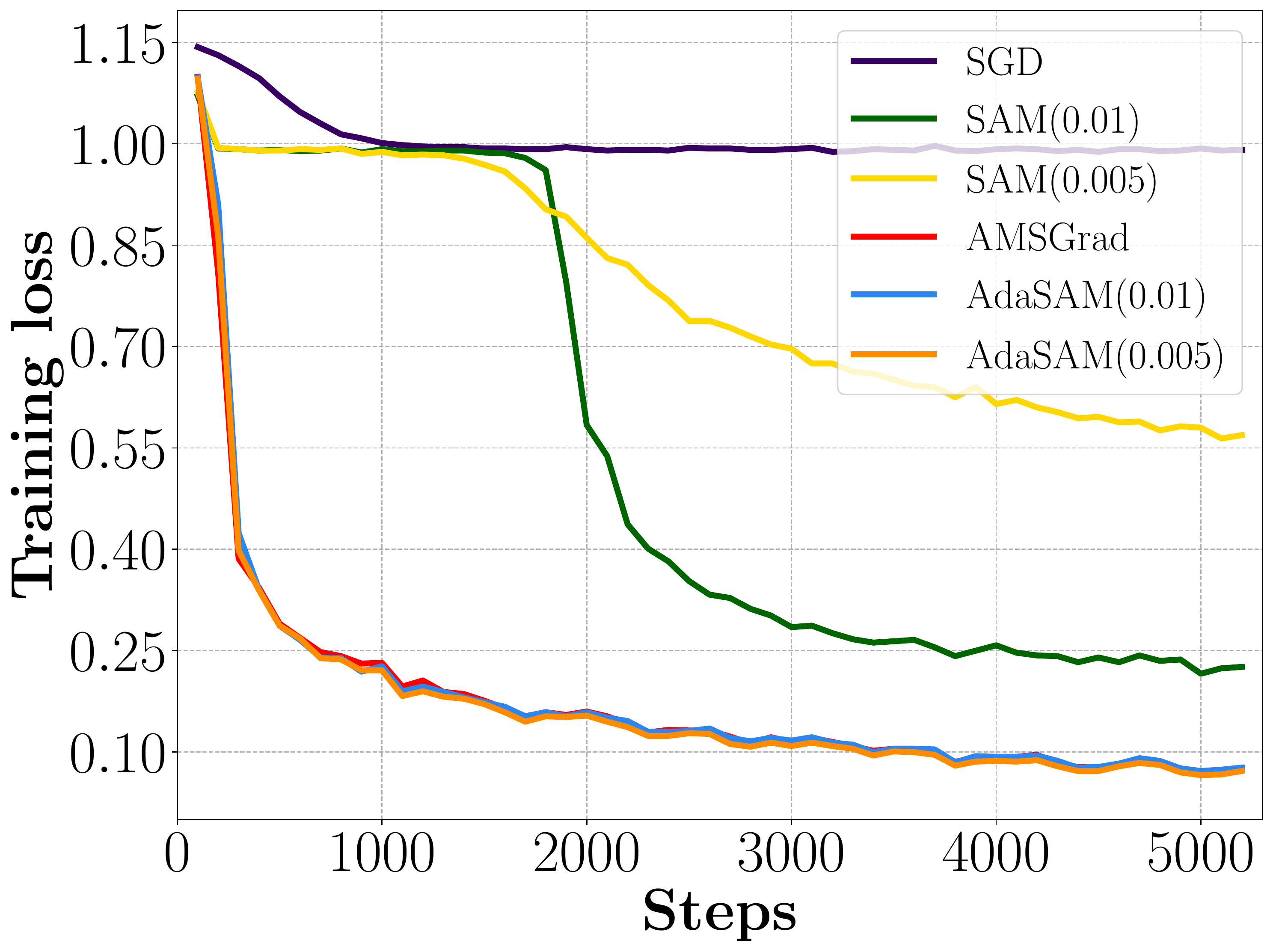}
    \caption{SST-2}
   \end{subfigure}\!\!
   
   \begin{subfigure}{0.23\linewidth}
    \includegraphics[width=\linewidth]{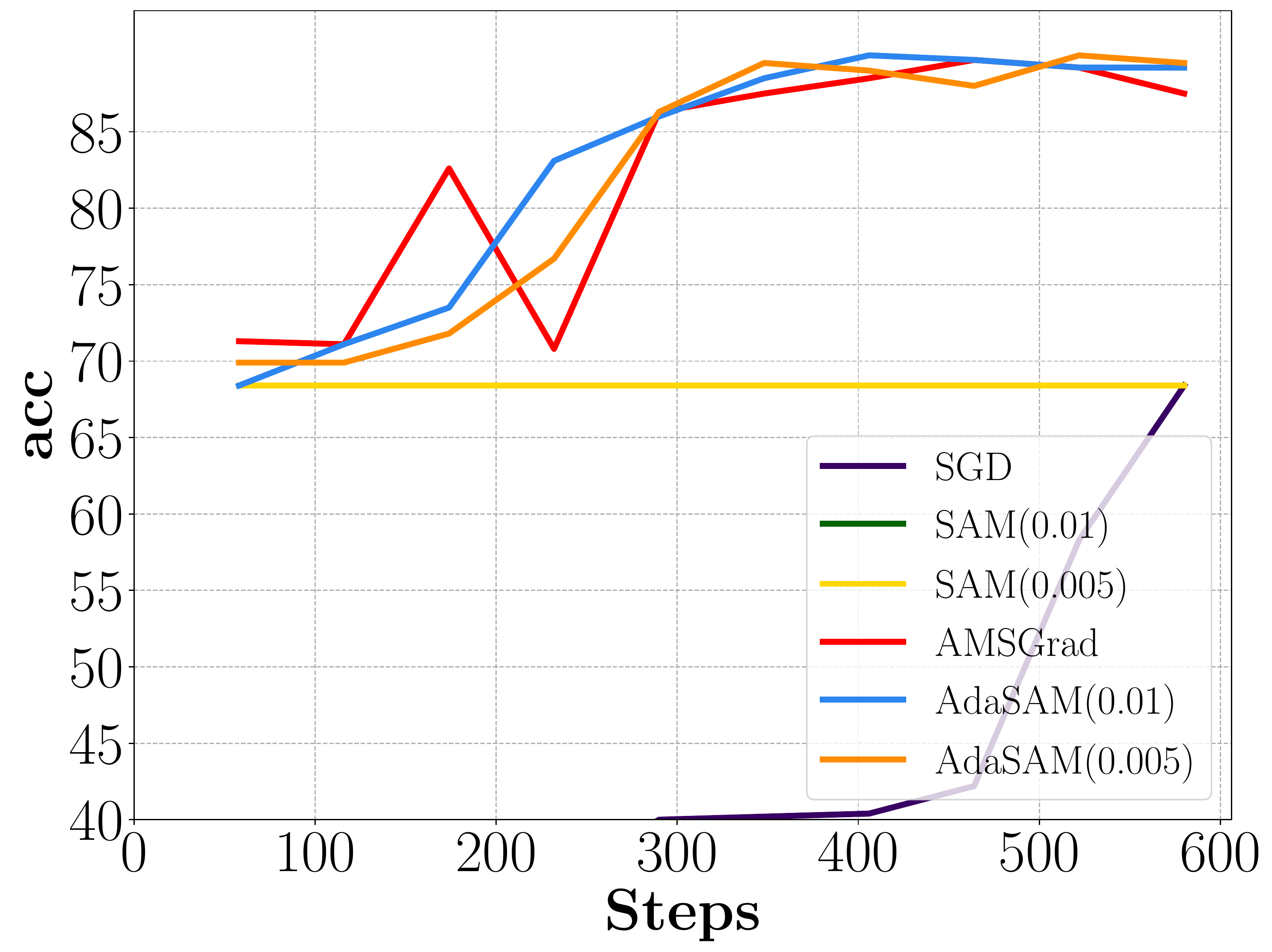}
    \caption{MRPC}
   \end{subfigure}\!\!
    \begin{subfigure}{0.23\linewidth}
    \includegraphics[width=\linewidth]{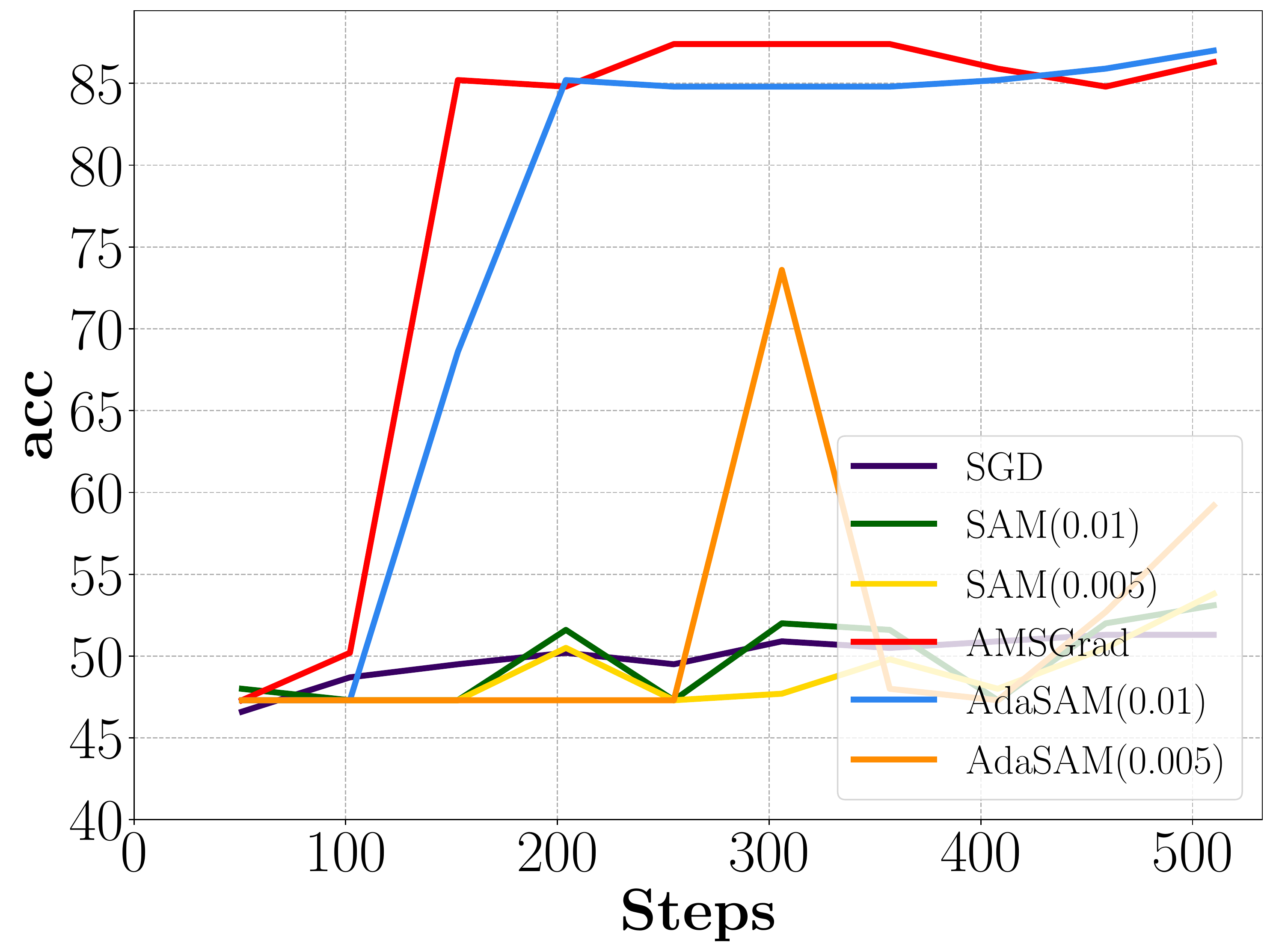}
    \caption{RTE}
   \end{subfigure}\!\!
    \begin{subfigure}{0.23\linewidth}
    \includegraphics[width=\linewidth]{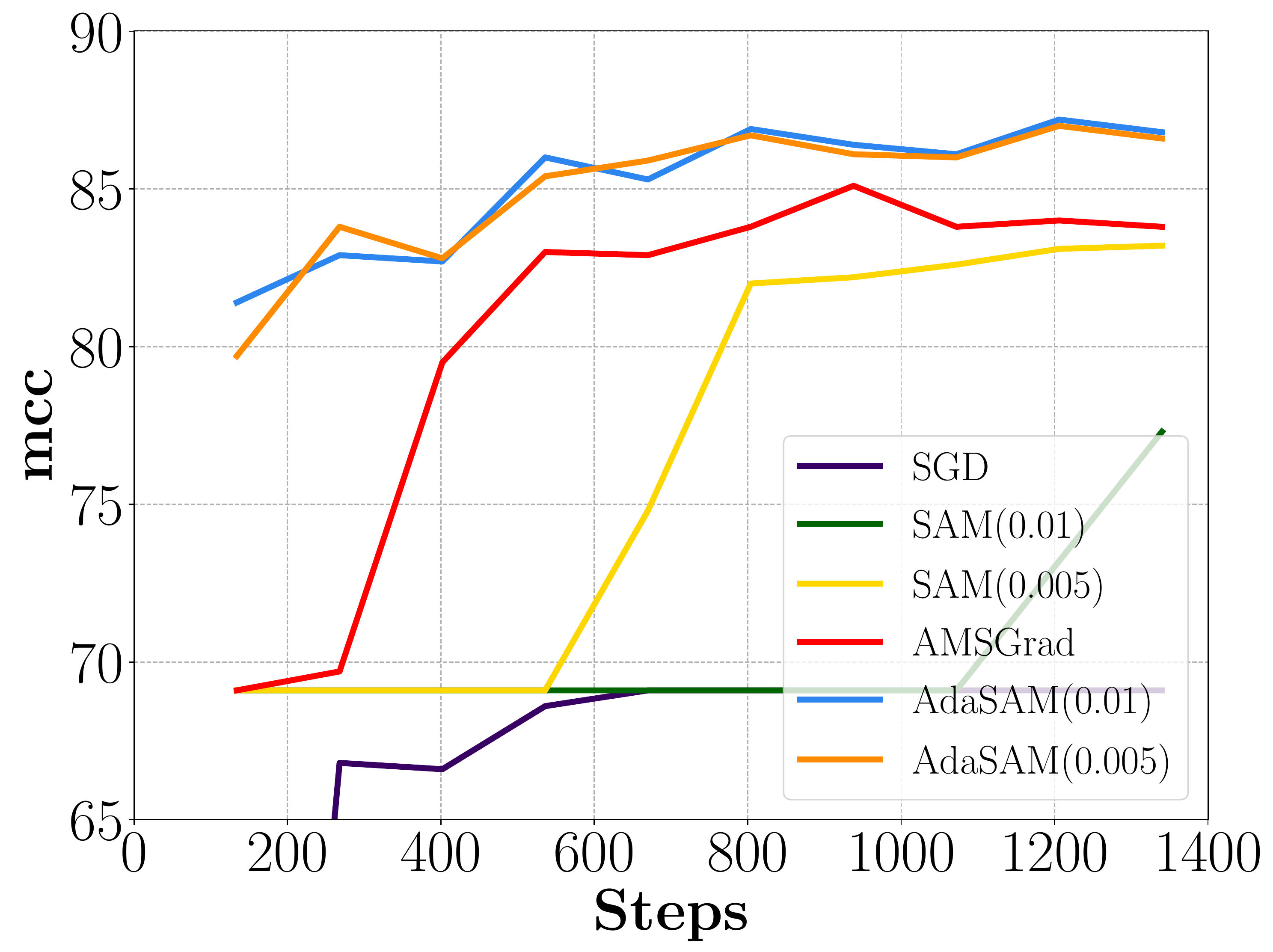}
    \caption{CoLA}
   \end{subfigure}
   \begin{subfigure}{0.23\linewidth}
    \includegraphics[width=\linewidth]{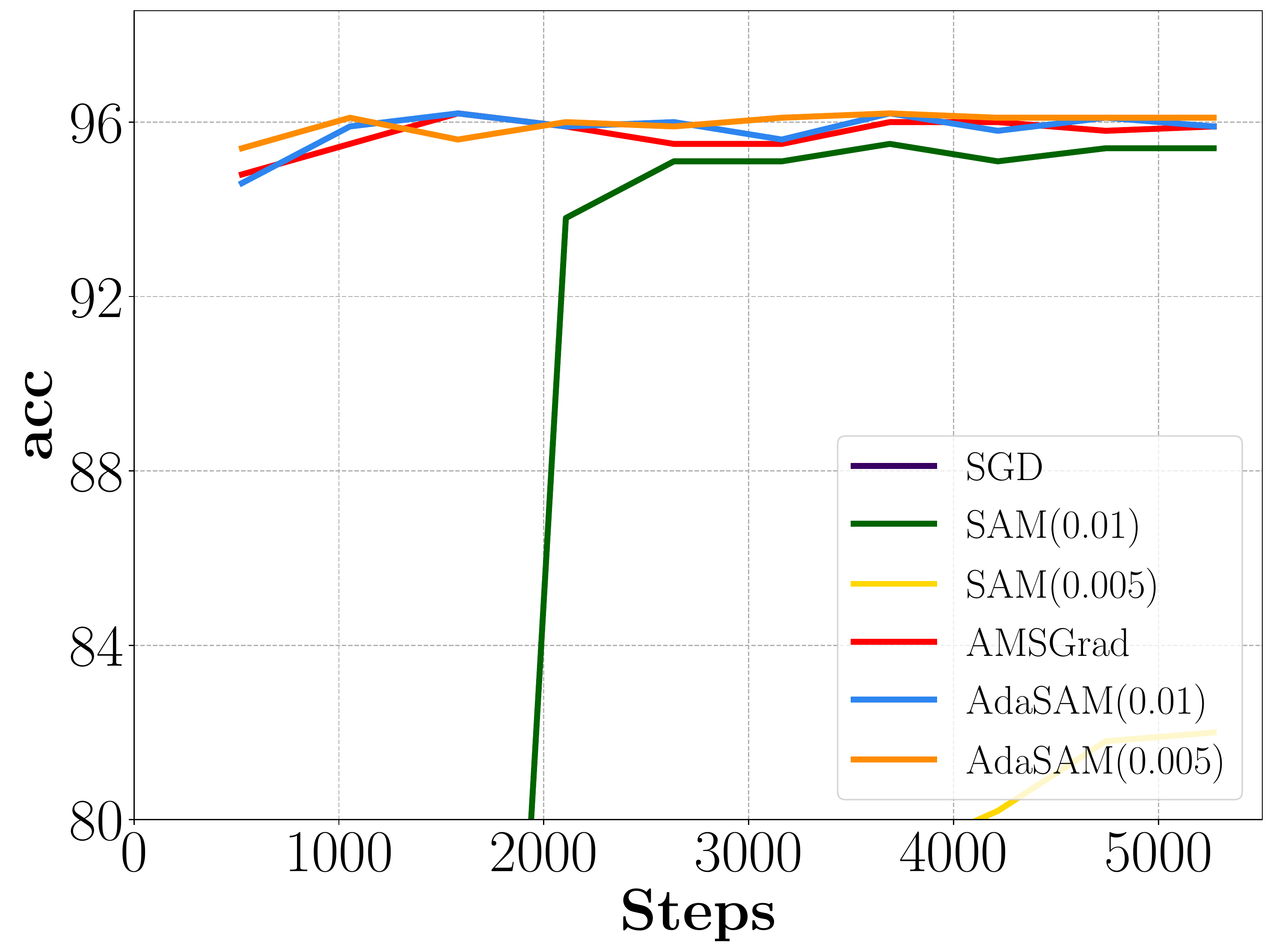}
    \caption{SST-2}
   \end{subfigure}\!\!

    \begin{subfigure}{0.23\linewidth}
    \includegraphics[width=\linewidth]{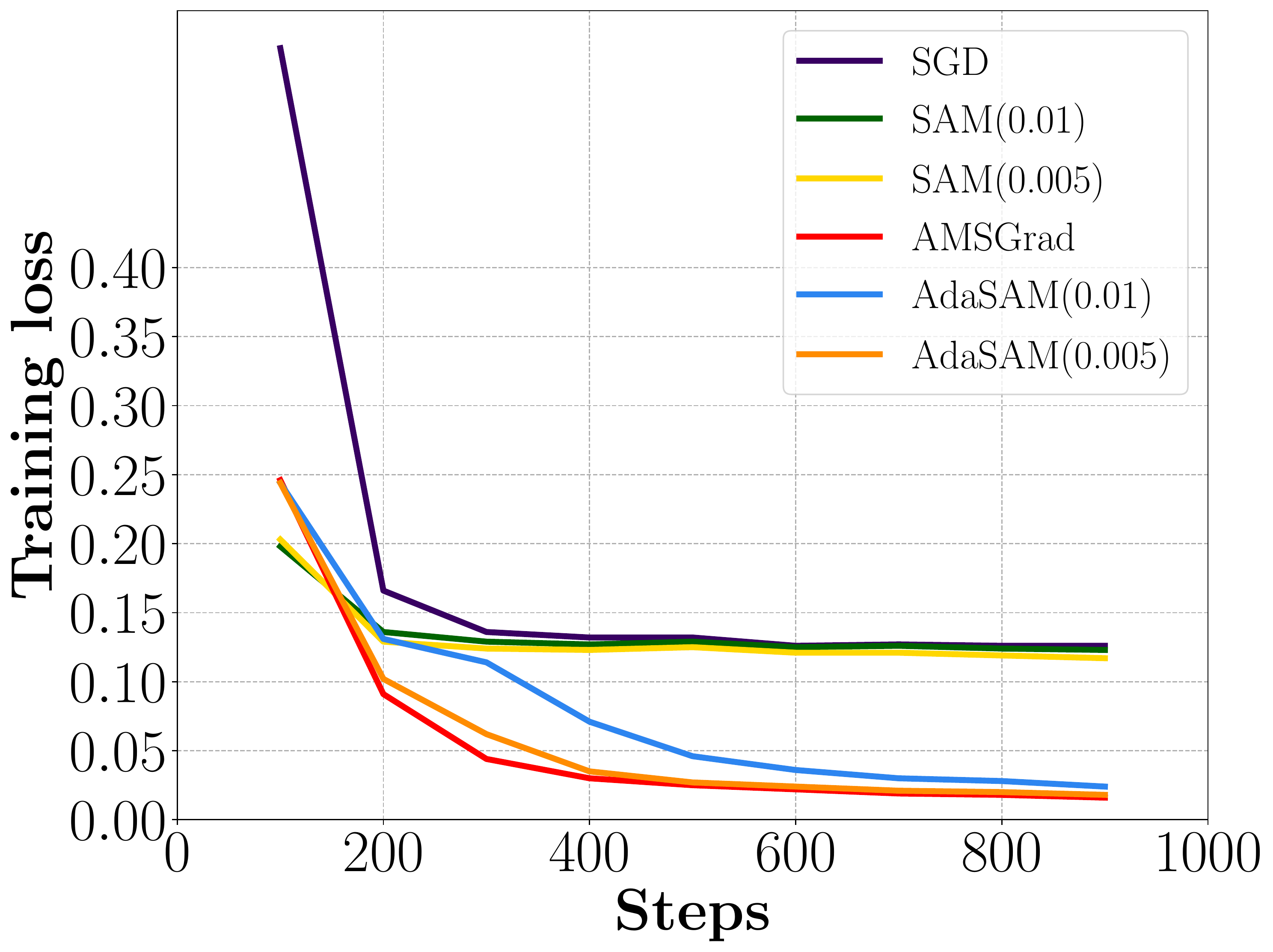}
    \caption{STS-B}
   \end{subfigure}\!\!
    \begin{subfigure}{0.23\linewidth}
    \includegraphics[width=\linewidth]{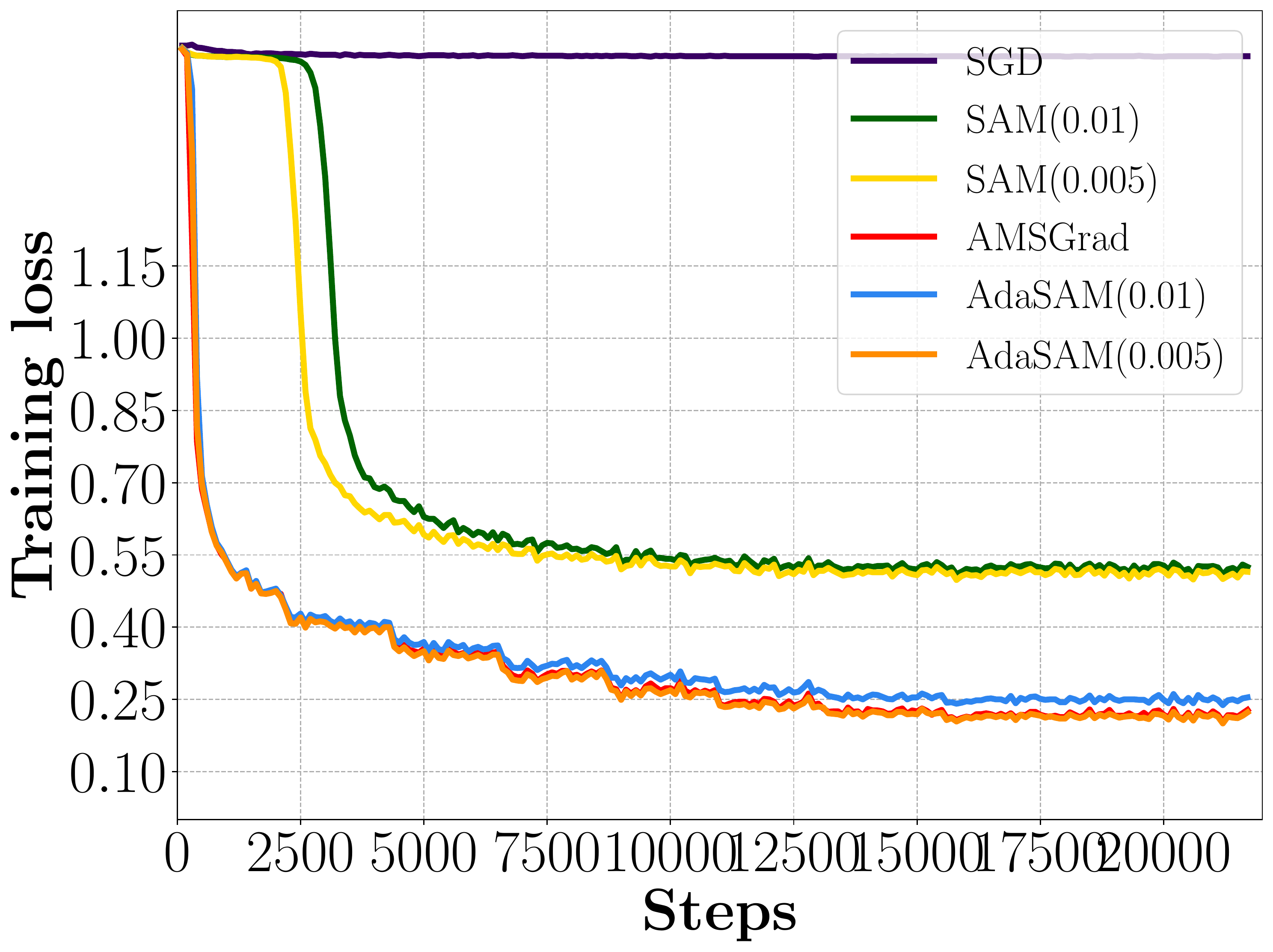}
    \caption{MNLI}
   \end{subfigure}\!\!
    \begin{subfigure}{0.23\linewidth}
    \includegraphics[width=\linewidth]{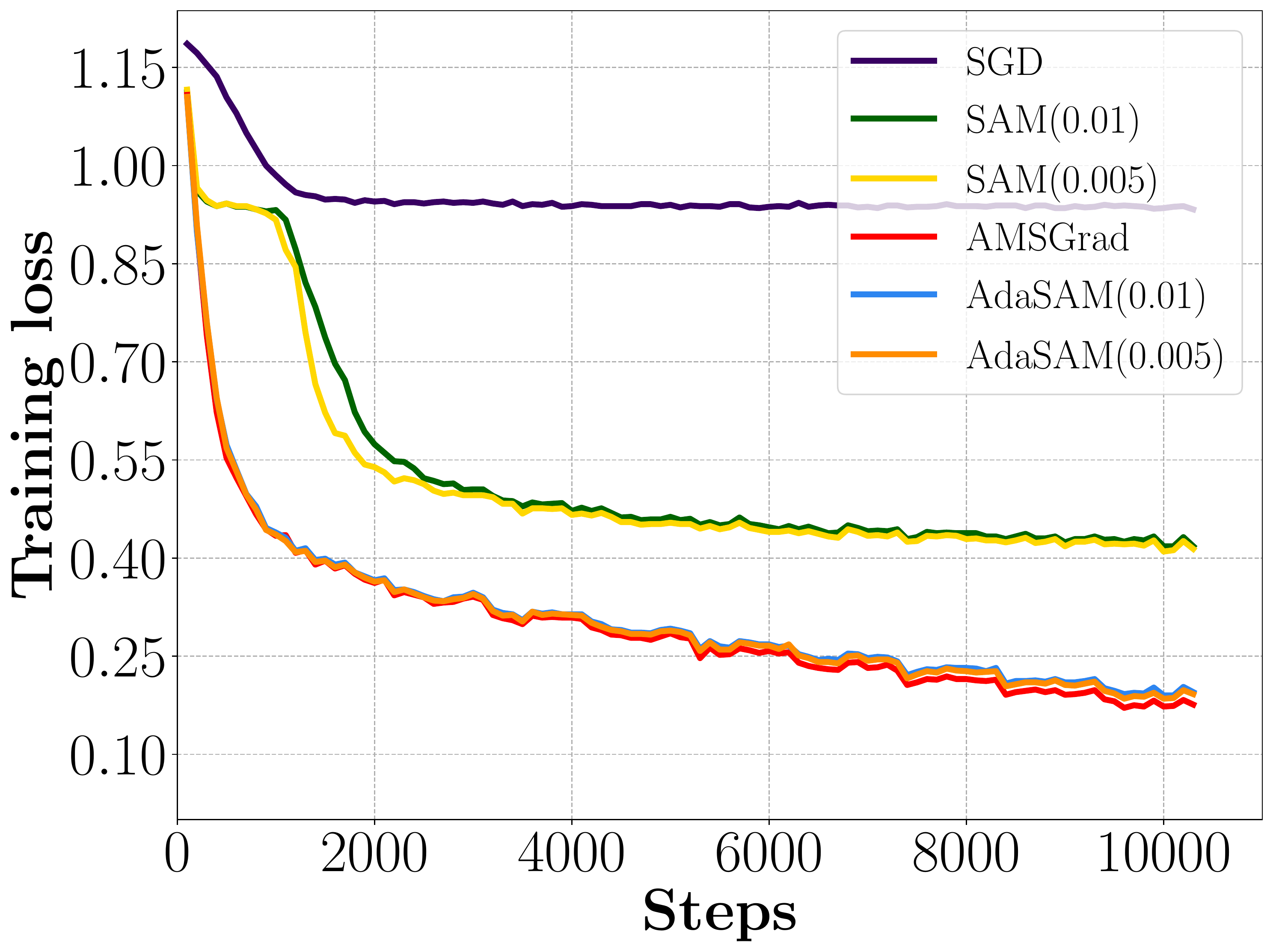}
    \caption{QQP}
  \end{subfigure}\!\!
    \begin{subfigure}{0.23\linewidth}
    \includegraphics[width=\linewidth]{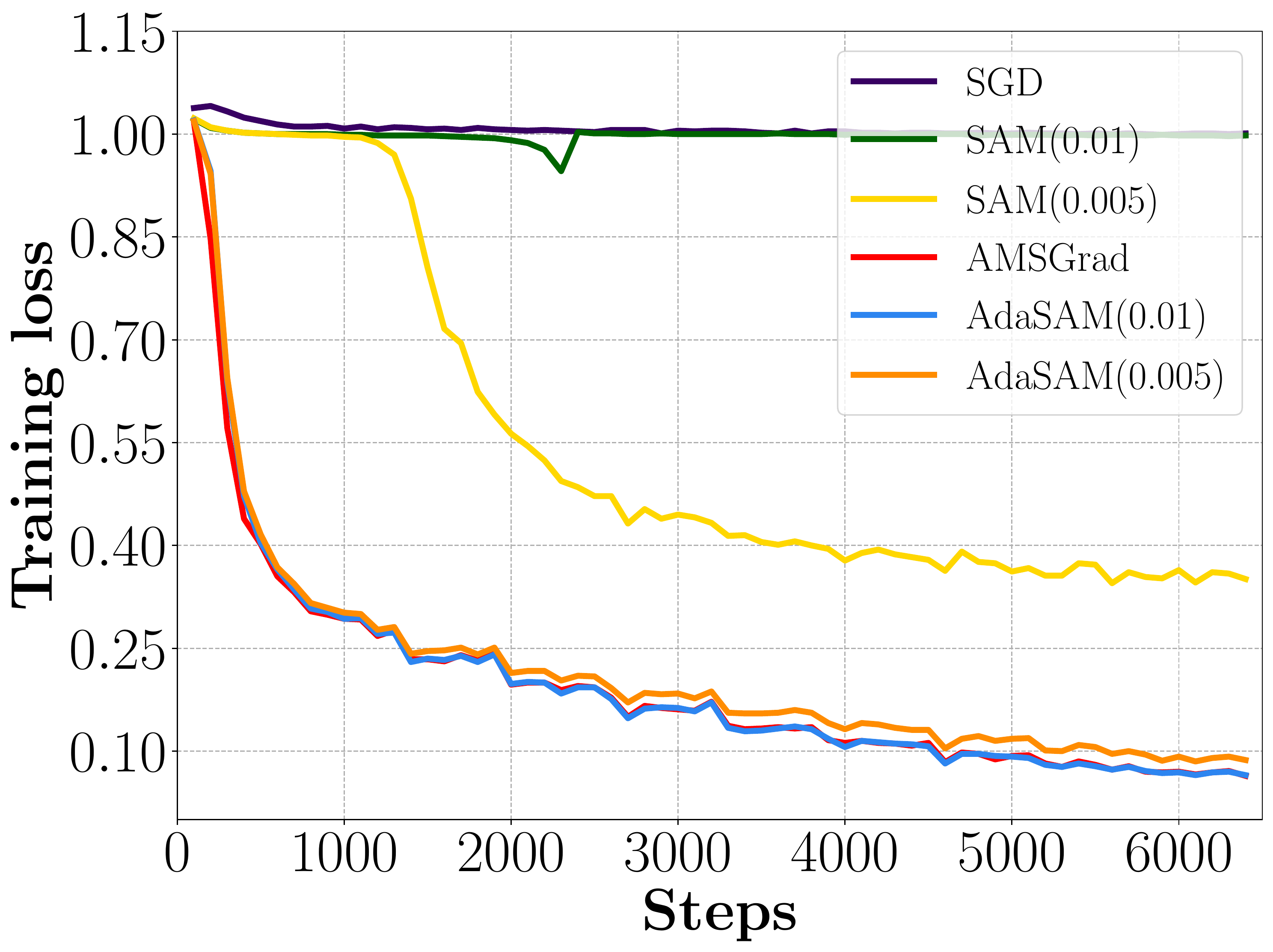}
    \caption{QNLI}
  \end{subfigure}\!\!
   
    \begin{subfigure}{0.23\linewidth}
    \includegraphics[width=\linewidth]{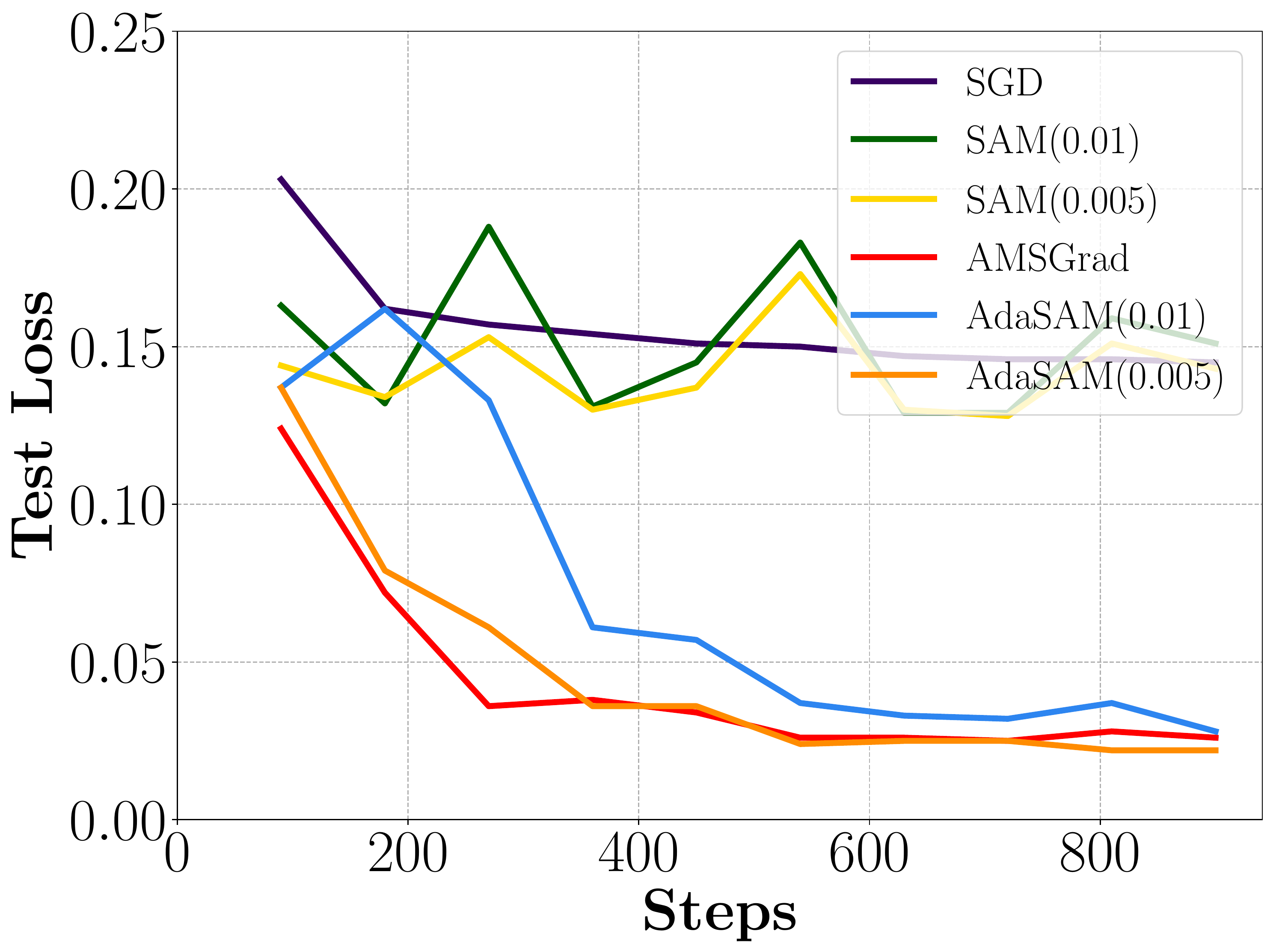}
    \caption{STS-B}
   \end{subfigure}\!\!
    \begin{subfigure}{0.23\linewidth}
    \includegraphics[width=\linewidth]{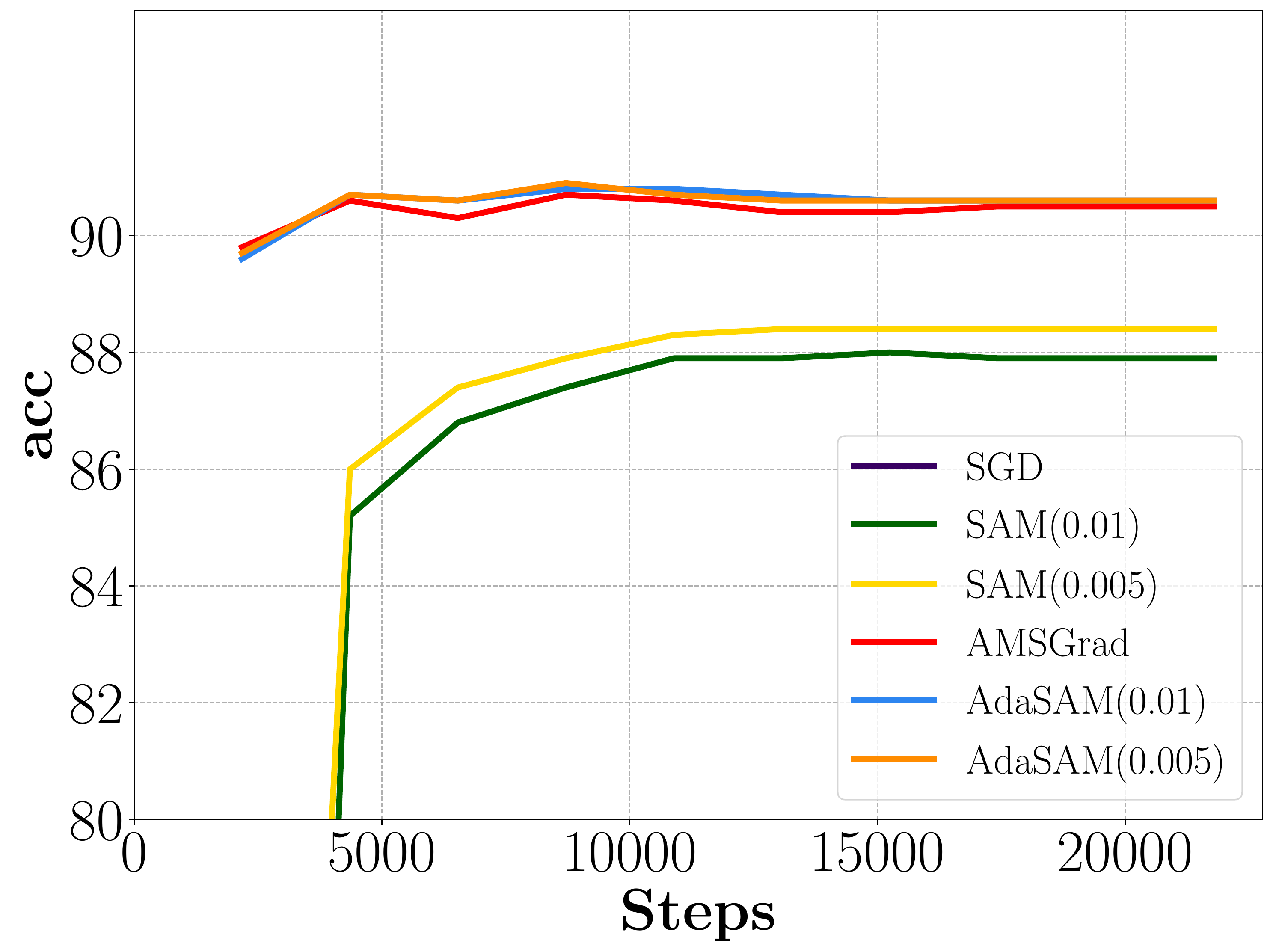}
    \caption{MNLI}
   \end{subfigure}\!\!
  \begin{subfigure}{0.23\linewidth}
    \includegraphics[width=\linewidth]{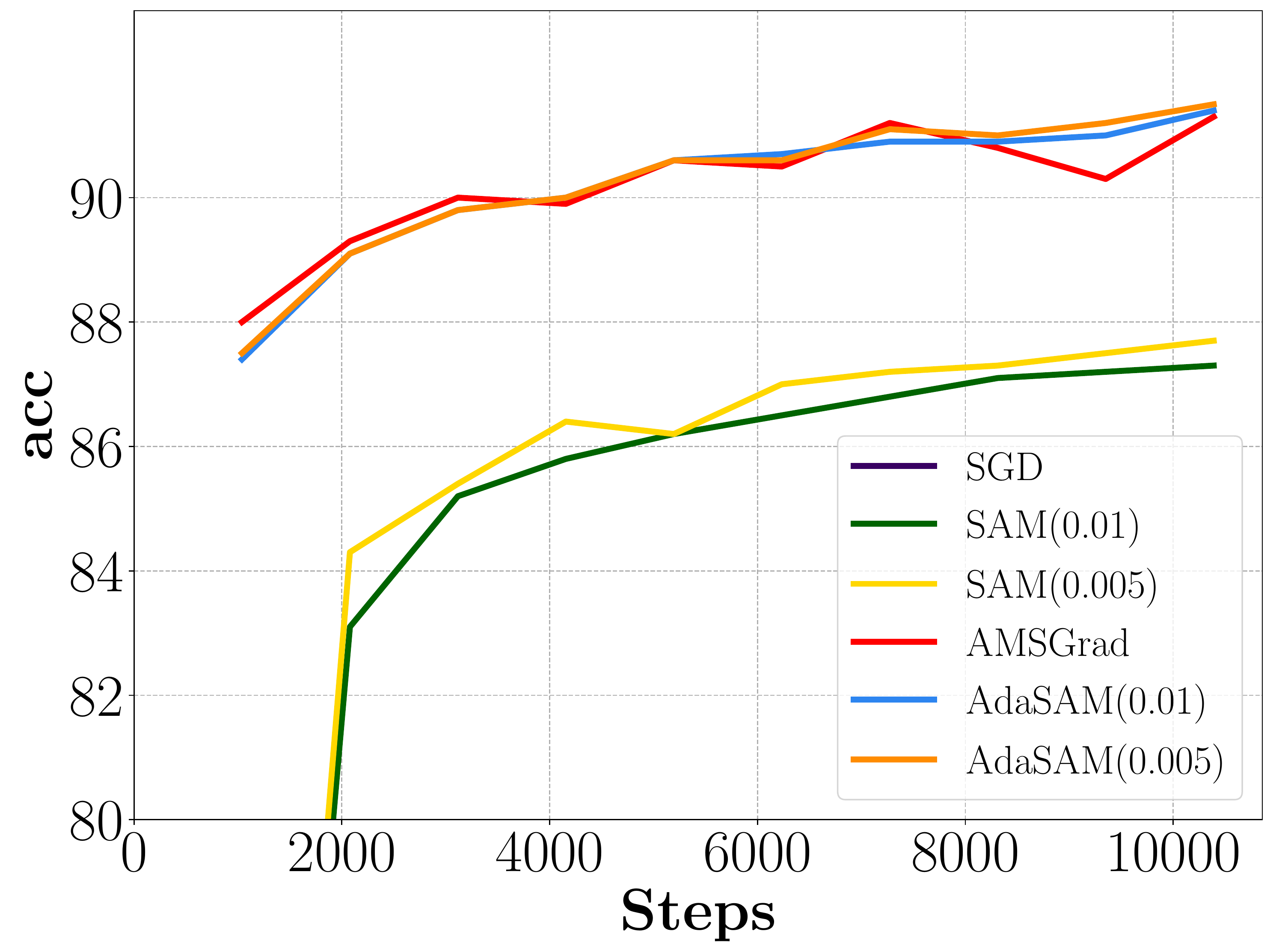}
    \caption{QQP}
  \end{subfigure}\!\!
    \begin{subfigure}{0.23\linewidth}
    \includegraphics[width=\linewidth]{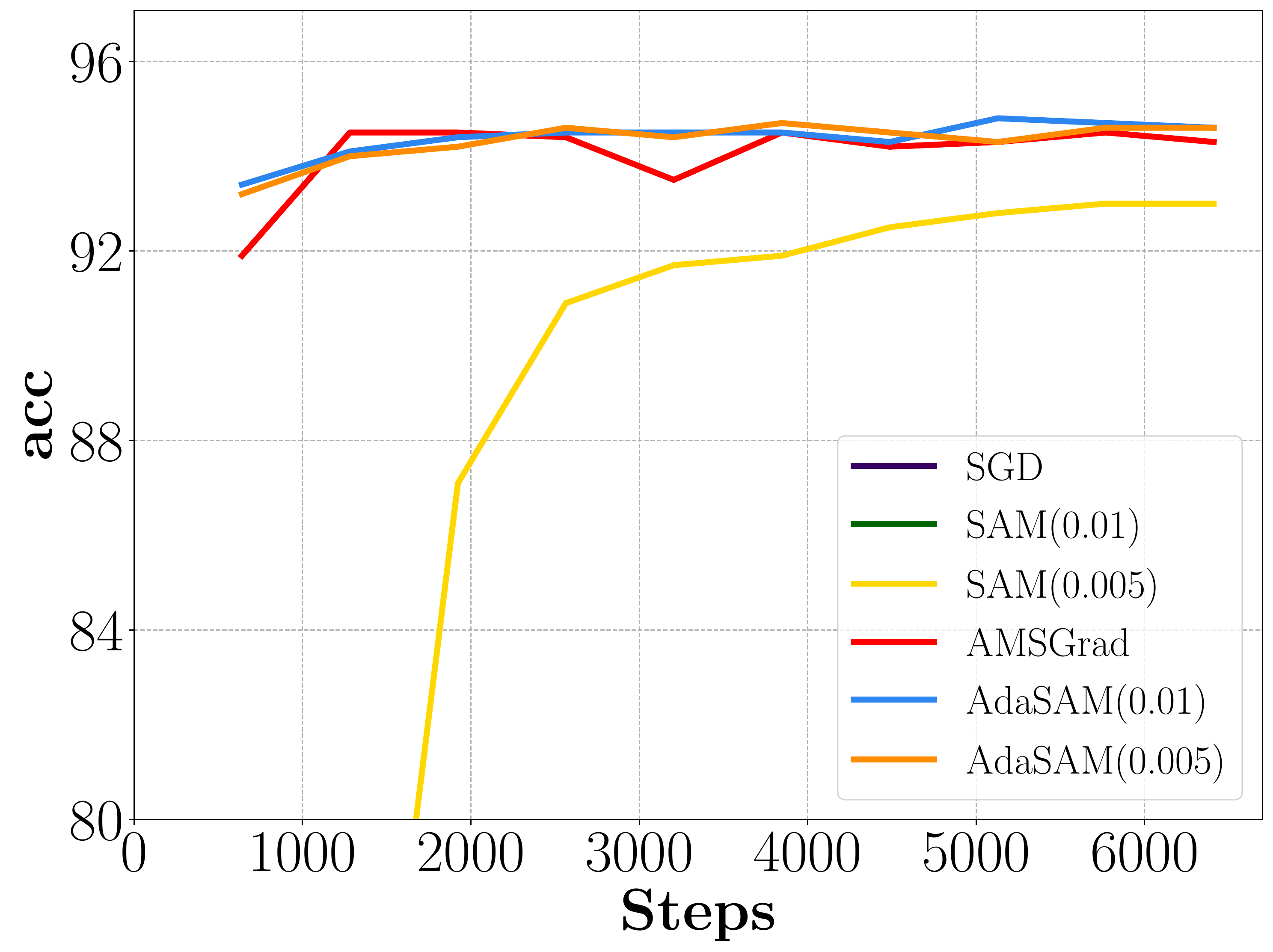}
    \caption{QNLI}
  \end{subfigure}\!\!
\caption{The loss and evaluation metric v.s. steps on MRPC, RTE, CoLA, SST-2, STS-B, MNLI, QQP and QNLI.($\beta_1 = 0$)}
\label{fig:s_exp_1}
 \vspace{-0.0cm}
\end{figure*}

In the ablation study, we conduct the experiments on the GLUE benchmark with AdaSAM, AMSGrad, SAM and SGD, respectively.
The optimizers do not have the momentum part ($\beta_1 = 0$).
As a supplement to Table \ref{tab:addlabel-3}, Figure \ref{fig:s_exp_1} show the detailed loss and evaluation metrics versus number of steps curves during training.
The loss curve of AdaSAM decreases faster than SAM and SGD in all tasks, and it has a similar decreasing speed as the AMSGrad.  The metric curve of AdaSAM and AMSGrad show that the adaptive learning rate method is better than SGD and SAM.
And AdaSAM decrease as faster as the AMSGrad in all tasks.

\end{document}